\renewcommand{\tilde}{\widetilde}
\renewcommand{\hat}{\widehat}
\def \A {\mathcal{A}}
\def \C {\mathcal{C}}
\def \D {\mathcal{D}}
\def \E {\mathbb{E}}
\def \H {\mathcal{H}}
\def \N {\mathcal{N}}
\def \O {\mathcal{O}}
\def \P {\mathbb{P}}
\def \R {\mathbb{R}}
\def \X {\mathcal{X}}
\def \Ht {\tilde{\H}}
\def \rt {\tilde{r}}
\def \Ot {\tilde{\O}}
\def \Jt{\tilde{J}}
\def \Hb {\bar{\H}}
\def \Ht {\tilde{\H}}
\def \thetah {\hat{\theta}}
\def \thetab {\bar{\theta}}
\def \thetat{\tilde{\theta}}
\def \betat{\tilde{\beta}}
\def \sigmad {\dot{\sigma}}
\def \ellt {\tilde{\ell}}
\let\norm\undefined 
\DeclarePairedDelimiter\norm{\lVert}{\rVert}
\DeclarePairedDelimiter\bignorm{\big\lVert}{\big\rVert}
\newcommand\inner[2]{\langle #1, #2 \rangle}
\newcommand*\ind[1]{\mathds{1}_{\{#1\}}}
\newcommand*\term[1]{\mathtt{term~(#1)}}
\DeclareMathOperator*{\Reg}{Reg}
\DeclareMathOperator*{\argmax}{arg\,max}
\DeclareMathOperator*{\argmin}{arg\,min}
\DeclareMathOperator*{\subopt}{\mathsf{SubOpt}}
\newtheorem{myThm}{Theorem}
\newtheorem{myLemma}{Lemma}
\theoremstyle{definition}
\newtheorem{myAssumption}{Assumption}
\newtheorem{myDef}{Definition}
\newtheorem{myRemark}{Remark}
\newcounter{romancounter}
\newcommand{\rom}[1]{\setcounter{romancounter}{#1}\textit{(\roman{romancounter})}}
\title{Provably Efficient Online RLHF with \\ One-Pass Reward Modeling}
\author{
  Long-Fei Li$^*$\thanks{Equal contribution.}, Yu-Yang Qian$^*$, Peng Zhao\thanks{Correspondence: Peng Zhao <zhaop@lamda.nju.edu.cn>}, Zhi-Hua Zhou\\
  National Key Laboratory for Novel Software Technology, Nanjing University, China\\
  School of Artificial Intelligence, Nanjing University, China\\
  \texttt{\{lilf, qianyy, zhaop, zhouzh\}@lamda.nju.edu.cn}
}
\begin{document}

\maketitle

\begin{abstract}
  Reinforcement Learning from Human Feedback (RLHF) has shown remarkable success in aligning Large Language Models (LLMs) with human preferences. Traditional RLHF methods rely on a fixed dataset, which often suffers from limited coverage. To this end, online RLHF has emerged as a promising direction, enabling iterative data collection and refinement. Despite its potential, this paradigm faces a key bottleneck: the requirement to continuously integrate new data into the dataset and re-optimize the model from scratch at each iteration, resulting in computational and storage costs that grow linearly with the number of iterations. In this work, we address this challenge by proposing a \emph{one-pass} reward modeling method that eliminates the need to store historical data and achieves constant-time updates per iteration. Specifically, we first formalize RLHF as a contextual preference bandit and develop a new algorithm based on online mirror descent with a tailored local norm, replacing the standard maximum likelihood estimation for reward modeling. We then apply it to various online RLHF settings, including passive data collection, active data collection, and deployment-time adaptation. We provide theoretical guarantees showing that our method enhances both statistical and computational efficiency. Finally, we design practical algorithms for LLMs and conduct experiments with the \texttt{Llama-3-8B-Instruct} and \texttt{Qwen2.5-7B-Instruct} models on Ultrafeedback and Mixture2 datasets, validating the effectiveness of our approach.
\end{abstract}

\section{Introduction}
\label{sec:introduction}

Reinforcement Learning from Human Feedback is a critical technique for training large language models using human preference feedback~\citep{NeurIPS'22:Ouyang-InstructGPT,arXiv'22:Bai-RLHF}. Typical RLHF methods involve collecting extensive data, each consisting of a prompt, a pair of responses, and a preference label indicating which response is preferred. Then, a reward model is trained to predict the human preference, and the LLM is fine-tuned based on the reward model by the RL algorithms.

Traditional RLHF methods primarily rely on fixed preference datasets, which typically suffer from limited coverage. As a result, the learned reward models struggle to generalize to out-of-distribution samples, constraining the effectiveness of the aligned models. To address this, online RLHF has emerged as a promising paradigm, enabling iterative data collection and model improvement. The general process can be described as \rom{1} collect the preference data; \rom{2} update the model using the collected data. The above two steps are repeated for several iterations to boost model performance. In practice, Claude~\citep{arXiv'22:Bai-RLHF} and LLaMA-2~\citep{arXiv'23:llama-2} have demonstrated that online RLHF can significantly enhance model performance~\citep{TMLR'24:Dong-RLHF}. Theoretically, recent works~\citep{ICLR'25:XPO,ICLR'25:VPO} indicate that online exploration can improve the statistical efficiency of RLHF. Beyond performance gains, online RLHF serves as a crucial step toward agentic applications, where models can continuously integrate environmental feedback to enable real-time interaction, adaptive reasoning, and autonomous decision-making~\citep{GoogleAI'25:Silver-experience}.

Despite its empirical success, online RLHF may introduces significant computational challenges. Specifically, the typical process of online RLHF involves continuously integrating newly collected data into the historical dataset and re-optimizing the model from scratch over the expanded dataset. While this strategy is statistically efficient, its computational and storage costs scale linearly with the number of iterations, which becomes prohibitive in long-term iterations, especially on edge devices where computation and memory resources are inherently limited. This raises a pressing question:
\begin{center}
    \emph{Can we design online RLHF algorithms that are both statistically and computationally efficient?}
\end{center}

\begin{table*}[!t]
    \centering
    \caption{\small Comparison between previous works and our work in terms of the statistical and computational efficiency across different online RLHF settings. The column ``Context'' and ``Action'' represent the context and action are determined by the environment (\faGlobe) or the algorithm (\faSearch). For the computational efficiency (time and storage), we highlight the dependence on the $t$ at iteration $t$. Here, $d$ is the feature dimension, $T$ is the total number of iterations, $\kappa$ is the non-linearity coefficient, $\Phi=\E_{x \sim \rho} \left[\phi(x,\pi^*(x))\right]$ is the concentrability vector, $V_{T}$ and $\H_{T}$ are two local norms satisfying $\norm{\Phi}_{\H_T^{-1}} \leq  \sqrt{\kappa} \norm{\Phi}_{V_T^{-1}}$ (*: amortized complexity over $T$).}
    \label{tab:result}
    \setlength{\tabcolsep}{4pt}
    \resizebox{0.95\textwidth}{!}{
    \begin{tabular}{ccccccc}
    \toprule
    \textbf{Setting} & \textbf{Context} &\textbf{Action}  & \textbf{Gap/Regret} & \textbf{Time} & \textbf{Storage} &\textbf{Reference}\\ \midrule
    \multirow{2}{*}{Passive} & \multirow{2}{*}{{\Large \faGlobe}} & \multirow{2}{*}{{\Large \faGlobe}}     & $\Ot \big(\sqrt{d} \cdot \kappa \norm{\Phi}_{V_{T}^{-1}}\big)$          & $\O(\log T)^*$    & $\O(t)$ &\citet{ICML'23:Zhu-Principled}  \\
                       &  &        & $\Ot \big(\sqrt{d} \cdot \norm{\Phi}_{\H_{T}^{-1}}\big)$          & $\O(1)$    & $\O(1)$ & Ours (Theorem~\ref{thm:passive}) \\ \midrule 
    \multirow{2}{*}{Active} & \multirow{2}{*}{{\Large \faSearch}} & \multirow{2}{*}{{\Large \faSearch}}      & $\Ot \big(d \sqrt{\kappa / T}\big)$          & $\O(t \log t)$    & $\O(t)$ & \citet{ECMLPKDD'25:Das-RLHF-active} \\
                       &  &       & $\Ot \big(d \sqrt{\kappa / T}\big)$         & $\O(1)$   & $\O(1)$  & Ours (Theorem~\ref{thm:active}) \\ \midrule
    \multirow{2}{*}{Deploy} & \multirow{2}{*}{{\Large \faGlobe}} & \multirow{2}{*}{{\Large \faSearch}}      & $\Ot \big(d \kappa\sqrt{T}\big)$         & $\O(t \log t)$   & $\O(t)$  & \citet{AISTATS'23:Saha-Dueling-RL}\\
                       &  &       & $\Ot \big(d \sqrt{\kappa T}\big)$         & $\O(1)$   & $\O(1)$ & Ours (Theorem~\ref{thm:deploy})  \\ \bottomrule
    \end{tabular}
    }
\end{table*}

In this work, we provide an affirmative answer to this question in the setting of contextual preference bandits with linearly parameterized reward functions. Specifically, building on recent theoretical advancements in RLHF~\citep{ICML'23:Zhu-Principled,ECMLPKDD'25:Das-RLHF-active,TMLR'25:Ji-RLHF-active}, we formulate the RLHF problem as a contextual dueling bandit problem~\citep{JCSS'12:K-armed-dueling-bandits,NeurIPS'21:Saha-Preference-bandits}. While prior work has explored this formulation, most existing methods focus on statistical efficiency and overlook the growing computational burden. To bridge this gap, inspired by recent progress in bandit learning~\citep{NeurIPS'23:MLogB,NeurIPS'24:MNLmdp}, we introduce a novel \emph{one-pass} reward modeling method based on the online mirror descent framework with a tailored local norm that captures second-order information. Unlike traditional approaches, our method eliminates the need to store historical data and achieves constant-time updates per iteration, i.e., the computational cost remains invariant with respect to the cumulative number of iterations. We then apply our method to several online RLHF settings, including passive data collection, active data collection, and deployment-time adaptation. We establish theoretical guarantees showing that our method improves both statistical and computational efficiency. Table~\ref{tab:result} summarizes the comparison of our method with the existing works.

To enable usage in LLMs, we develop practical variants of our method. Direct computation and storage of the Hessian matrix is prohibitively expensive; thus, we propose an efficient approximation using Hessian-Vector Products (HVP) combined with conjugate gradient descent, avoiding explicit second-order information and relying only on first-order computation. Additionally, we employ rejection sampling to approximate model uncertainty in a computationally efficient manner. With the above techniques, we conduct experiments using the \texttt{LLaMA-3-8B-Instruct}~\citep{arXiv'24:llama-3} and \texttt{Qwen2.5-7B-Instruct}~\citep{arXiv'24:Qwen-2.5} models on the Ultrafeedback~\citep{ICML'24:Cui-UltraFeedback} and Mixture2~\citep{TMLR'24:Dong-RLHF} datasets. Experimental results validate the effectiveness of our method.

To summarize, our contributions are as follows: 
\begin{itemize}[leftmargin=*,itemsep=0.em]
    \item By formulating the RLHF problem as a contextual dueling bandit, we propose a novel one-pass reward modeling algorithm and establish the corresponding estimation error bound. Our method is built upon the online mirror descent framework and incorporates a carefully designed local norm that captures second-order information for improved learning efficiency.
    \item We apply our method to a broad range of online RLHF settings, including passive data collection, active data collection, and deployment-time adaptation. For each setting, we design tailored algorithms and establish corresponding theoretical guarantees, demonstrating that our approach achieves improved statistical and computational efficiency compared to existing methods.
    \item We develop practical algorithms by approximating the update using Hessian-Vector Products combined with conjugate gradient descent, and estimating uncertainty via rejection sampling. Based on the above techniques, we conduct empirical evaluations using the \texttt{LLaMA-3-8B-Instruct} and \texttt{Qwen2.5-7B-Instruct} models on the Ultrafeedback and Mixture2 datasets, showing that our method improves both statistical and computational efficiency compared to existing methods.
\end{itemize}

\textbf{Organization.} Section~\ref{sec:related_work} reviews the related work. Section~\ref{sec:problem_setup} introduces the problem setup. Section~\ref{sec:framework} presents our proposed one-pass reward modeling method and section~\ref{sec:applications} applies it to various online RLHF settings. Section~\ref{sec:practical} provides practical versions of our method. Section~\ref{sec:experiments} presents experimental results. Section~\ref{sec:conclusion} concludes the paper. The proofs and experiment details are deferred to the appendix.

\section{Related Work}
\label{sec:related_work}

In this section, we review the works most closely related to ours, including online RLHF, contextual dueling bandits, and active learning.

\paragraph{Online RLHF.} Traditional RLHF methods predominantly rely on fixed datasets, which often suffer from limited data coverage. Consequently, the resulting reward models struggle to generalize to out-of-distribution samples, thereby limiting the effectiveness of the aligned models. To overcome this limitation, online RLHF has emerged as a promising alternative, enabling iterative data collection and continuous model refinement. The works~\citep{TMLR'23:RAFT,arXiv'24:Guo-Online-AI-feedback,ICML'24:Yuan-Self-rewarding,ICLR'25:Wu-Self-play} have demonstrated that online iterative variants of direct preference learning algorithms significantly outperform their offline counterparts. \citet{ICML'24:Xiong-Iterative} identified key challenges in offline RLHF and theoretically demonstrated the potential benefits of online exploration. Recent work has incorporated optimism-driven bonus terms into the objective to encourage exploration in online RLHF~\citep{ICLR'25:XPO,ICLR'25:VPO,TMLR'25:Zhang-Self-exploring,ICML'25:Zhao-Logarithmic-regret}. These approaches primarily focus on the sample efficiency, but do not consider the accompanying increase in computational complexity. To improve computational efficiency, \citet{COLT'25:Foster-base-model} tackled the challenge of enumerating an exponentially large response space. Differently, our work focuses on alleviating the computational burden that scales linearly with the number of iterations in online RLHF.

\paragraph{Contextual Dueling Bandits and RL.} Dueling bandits are a variant of the multi-armed bandit problem in which the learner sequentially selects a pair of arms and receives binary feedback~\citep{JCSS'12:K-armed-dueling-bandits}. The contextual dueling bandit framework extends this setting by incorporating contextual information~\citep{COLT'15:Dudik-Contextual-dueling, NeurIPS'21:Saha-Preference-bandits, ICML'22:Bengs-Stochastic-Contextual-dueling}. Within this framework, \citet{NeurIPS'21:Saha-Preference-bandits} studied the $K$-armed contextual dueling bandit problem, and \citet{AISTATS'23:Saha-Dueling-RL} further extended it to the reinforcement learning setting. Additionally, \citet{NeurIPS'23:Sekhari-Contextual-bandits} investigated the contextual dueling bandit problem under an active learning paradigm, where the learner adaptively queries to minimize both regret and the number of queries. To move beyond linear reward functions, \citet{ICLR'25:Verma-Neural-Dueling-Bandits} introduced the neural dueling bandit problem, modeling the reward function using neural networks. These prior works commonly rely on maximum likelihood estimation to learn the reward function, leading to computational complexity that grows linearly with the number of iterations. In contrast, we propose algorithms that maintain constant per-iteration computational complexity.

\paragraph{Active Learning.} Active learning is a paradigm  that aims to reduce the labeling cost by selecting the most informative samples for annotation~\citep{09:Settles-Active-learning}. In general, existing work can be categorized into two settings: pool-based and stream-based. The pool-based setting~\citep{92:Query-by-committee, MLJ'97:active, NISP'10:Huang-active} involves the learner iteratively selecting a batch of informative samples from a large unlabeled pool, querying their labels, updating the model, and repeating this process. In contrast, the stream-based setting~\citep{COLT'04:Bianchi-active, JMLR'06:Bianchi-selective, MLJ'24:Davide-active} requires the learner to sequentially observe data points and decide in real time whether to query their labels. Within the context of RLHF, \citet{ECMLPKDD'25:Das-RLHF-active} and~\citet{arXiv'25:Verma-active} studied pool-based active learning, while \citet{TMLR'25:Ji-RLHF-active} focused on the stream-based setting. In this work, we focus on the pool-based strategy, which can be naturally extended to the stream-based scenario.

\section{Problem Setup}
\label{sec:problem_setup}

Following recent advancements in RLHF~\citep{ICML'23:Zhu-Principled,ECMLPKDD'25:Das-RLHF-active,ICML'24:Xiong-Iterative}, we formulate RLHF as a contextual bandit problem. Specifically, we have a set of contexts $\X$ and a set of possible actions $\A$ per context. To learn with human preference feedback, the learner selects a tuple $(x, a, a')$ to present to the human, where $x \in \X$ is the context, $a, a' \in \A$ are the actions. The human then provides a binary preference feedback $y \in \{0, 1\}$, where $y = 1$ indicates that the human prefers action $a$ over action $a'$, and $y = 0$ otherwise. We study the commonly used Bradley-Terry (BT) model in preference learning~\citep{BT-model}, which assumes that the human's preference is generated by a logistic function of the difference in the rewards of the two actions.

\begin{myDef}[Bradley-Terry Model]
    \label{def:BT}
    Given a context $x \in \X$ and two actions $a, a' \in \A$, the probability of the human preferring action $a$ over action $a'$ is given by $\P\left[y=1 \mid x, a, a^{\prime}\right]=\frac{\exp \left(r(x, a)\right)}{\exp \left(r(x, a)\right)+\exp \left(r\left(x, a^{\prime}\right)\right)}$, where $r: \X \times \A \to \R$ is a latent reward function.
\end{myDef}

To facilitate theoretical analysis, following prior works~\citep{ICML'23:Zhu-Principled,ICLR'25:VPO}, we consider the linear realizable setting, where the reward function is parameterized by a linear model.

\begin{myAssumption}
    \label{asm:linear-reward}
    It holds that $r(x, a) = \phi(x, a)^\top \theta^*$ where $\phi(x, a): \X \times \A \to \R^d$ is the known and fixed feature map, and $\theta^* \in \R^d$ is the unknown parameter vector. Furthermore, we assume $\norm{\phi(x, a)}_2 \leq L$ for all $x \in \X$ and $a \in \A$ and $\theta^* \in \Theta$ where $\Theta = \{\theta \in \R^d \mid \|\theta\|_2 \leq B\}$.
\end{myAssumption}

\begin{myRemark}
    While this setting is a simplification of the real-world problem, it serves as a useful and analytically tractable starting point. Specifically, the feature mapping $\phi$ can be obtained by removing the final layer of a pre-trained large language model, with $\theta^*$ corresponding to the weights of that layer. Moreover, this assumption can be further relaxed by allowing model misspecification~\citep{COLT'20:Jin-linear-mdp} and neural function approximation~\citep{ICML'24:Du-RLHF,arXiv'25:Verma-active}.
\end{myRemark}

Then, we can rewrite the probability as $\P\left[y=1 \mid x, a, a^{\prime}\right] = \sigma(\phi(x, a)^\top \theta^* - \phi(x, a')^\top \theta^* )$, where $\sigma(w) = \frac{1}{1 + \exp(-w)}$. Next, we introduce a key quantity that captures learning complexity.
\begin{myDef}
    \label{def:kappa}
    Let $\dot{\sigma}(w) = \sigma(w)(1 - \sigma(w))$ be the derivative function of $\sigma$, the \emph{non-linearity coefficient} $\kappa$ is defined as $\kappa = \max_{x \in \X, a, a' \in \A, \theta \in \Theta} \frac{1}{\dot{\sigma}(\phi(x, a)^\top \theta - \phi(x, a')^\top \theta)}$.
\end{myDef}
Intuitively, the quantity $\kappa$, defined as the inverse of the derivative, characterizes the learning difficulty of the reward function. In particular, a smaller derivative leads to a larger $\kappa$, implying that the model output changes less for the same input variation and thus the function is harder to learn. By direct calculation, we have $\kappa \leq 3 + \exp(2BL)$. Therefore, $\kappa$ can be exceedingly large, exhibiting an exponential dependence on the magnitude of the features and the model parameters.

\section{Our Framework}
\label{sec:framework}

In this section, we first introduce the general framework for online RLHF. We then present our one-pass reward modeling method. Finally, we show the theoretical guarantee of our method.

\subsection{General framework for online RLHF}

The general process of online RLHF involves iteratively collecting data and updating the model based on the collected data. At iteration $t$, the process can be formulated as:
\begin{enumerate}[label=(\textit{\roman*}), leftmargin=*]
    \item \textbf{New data collection}: Sample a prompt $x_t$ and two responses $a_t$ and $a_t'$, query the oracle to obtain the preference label $y_t \in \{0,1\}$, expand the dataset $\mathcal{D}_{t+1} = \mathcal{D}_t \cup \{(x_t, a_t, a_t', y_t)\}$.
    \item \textbf{Reward modeling}: Train a reward model $r_{t+1}$ using the historical dataset $\mathcal{D}_{t+1}$.
    \item \textbf{Policy optimization (Optional)}: Update the policy $\pi_{t+1}$ using the reward model $r_{t+1}$.
\end{enumerate}
A key challenge in online RLHF is that the reward model needs to be trained on the entire historical dataset at each iteration, which is computationally expensive. Specifically, let $z_t = \phi(x_t, a_t) - \phi(x_t, a_t')$ be the feature difference, given the historical dataset $\mathcal{D}_{t+1}=\left\{(x_i, a_i, a_i', y_i)\right\}_{i=1}^{t}$, the reward model is estimated via maximum likelihood estimation as 
\begin{align}
    \label{eq:mle}
      \thetah_{t+1}=\argmin_{\theta \in \R^d} \sum_{i=1}^t \ell_i(\theta), \text{where } \ell_t(\theta) =  - y_t \log (\sigma(z_t^\top \theta)) - \left(1-y_t\right) \log(1-\sigma(z_t^\top \theta)).
\end{align}
However, Eq.~\eqref{eq:mle} does not admit a closed-form solution, requiring iterative optimization techniques, such as gradient descent, to achieve an $\varepsilon$-accurate estimate. As discussed by~\citet{AISTATS'22:Faury-Jointly}, obtaining such accuracy with MLE typically requires $\mathcal{O}(\log(1/\varepsilon))$ optimization steps. Since the loss function is defined over the entire historical dataset, each iteration incurs a computational cost of $\mathcal{O}(t)$ gradient evaluations. In practice, $\varepsilon$ is often set to $1/t$ to ensure that the optimization error does not dominate the overall estimation error. As a result, the total computational complexity at iteration $t$ becomes $\mathcal{O}(t \log t)$, a cost that is prohibitive for long-term online RLHF applications.

\subsection{One-pass reward modeling} 

Drawing inspiration from recent advancements in logistic bandits~\citep{AISTATS'22:Faury-Jointly, NeurIPS'23:MLogB} and multinomial logit MDPs~\citep{NeurIPS'24:MNLmdp}, we propose a novel one-pass reward modeling method that reduces the complexity to constant time per iteration. First, define the gradient $g_t(\theta)$ and Hessian $H_t(\theta)$ of loss $\ell_t(\theta)$ as $g_t(\theta) = (\sigma(z_t^{\top} \theta)-y_t) z_t$ and $H_t(\theta) = \sigmad(z_t^{\top} \theta) z_t z_t^{\top}$.

\textbf{Implicit OMD.}~~To improve the computational efficiency, \citet{AISTATS'22:Faury-Jointly} observed that the cumulative past log-loss is strongly convex and can therefore be well approximated by a quadratic function. Building on this observation, they proposed the following update rule:
\begin{align}
  \label{eq:implicit-omd}
  \thetab_{t+1}=\argmin_{\theta \in \Theta} \Big\{\ell_t(\theta)+\frac{1}{2 \eta}\left\|\theta-\thetab_t\right\|_{\Hb_t}^2\Big\},
\end{align}
where $\Hb_t=\sum_{i=1}^{t-1} H_i(\thetab_{i+1})+\lambda I$ is the local norm, and $\eta$ is the step size. The optimization problem can be decomposed into two terms. The first term is the instantaneous log-loss $\ell_{t}(\theta)$, which accounts for the information of the current sample. The second consists of a quadratic proxy for the past losses constructed through the sequence $\{\thetab_{i}\}_{i\leq t}$. A key component is the design of the local norm $\Hb_t$, which approximates the Hessian matrix by $H_i(\thetab_{i+1})$ at a \emph{lookahead} point $\thetab_{i+1}$. Such a Hessian matrix effectively captures local information and is crucial for ensuring statistical efficiency.

The update rule in Eq.~\eqref{eq:implicit-omd} benefits from a one-pass data processing property, which eliminates the need to store the entire historical dataset. However, the optimization problem in Eq.~\eqref{eq:implicit-omd} still does not have a closed-form solution. But since the loss is defined only on the current sample, it requires only $\O(1)$ gradient computations per step, leading to a total computational complexity of $\O(\log t)$ at iteration $t$. This represents a significant improvement over the $\O(t \log t)$ complexity of the MLE estimator in Eq.~\eqref{eq:mle}. Nevertheless, the computational complexity of the implicit OMD is still increasing with the number of iterations, which motivates us to design a constant-time method.

\textbf{Standard OMD.}~~To enhance computational efficiency, a natural alternative is to replace this formulation with the standard OMD framework, which permits a closed-form solution and thus eliminates the need for iterative optimization. However, the standard OMD minimizes a first-order approximation of the loss function, which sacrifices several key properties compared to its implicit counterpart, as demonstrated by \citet{NeurIPS'20:Campolongo-IOMD}. Specifically, the standard OMD formulation updates using $g_t(\theta_t)$, whereas the implicit OMD updates the algorithm approximately with the subsequent sub-gradient, $g_t(\theta_{t+1})$. This distinction results in a notable gap in the convergence rates of the two methods. To this end, we propose to approximate the current loss $\ell_t(\theta)$ using a second-order Taylor expansion, drawing inspiration from~\citet{NeurIPS'23:MLogB}. Define the second-order approximation of $\ell_t(\theta)$ as $\ellt_t(\theta) = \ell_t(\thetat_t) + g_t(\thetat_t)^\top (\theta - \thetat_t) + \frac{1}{2} \norm{\theta - \thetat_t}_{H_t(\thetat_t)}^2$. Then, we replace the loss $\ell_t(\theta)$ in Eq.~\eqref{eq:implicit-omd} with the approximation $\ellt_t(\theta)$, leading to the update rule:
\begin{align}
  \label{eq:omd}
  \thetat_{t+1}=\argmin_{\theta \in \Theta} \Big\{\big\langle g_t(\thetat_t), \theta \big\rangle +\frac{1}{2 \eta}\big\|\theta-\thetat_t\big\|_{\Ht_t}^2\Big\},
\end{align}
where $\eta$ is the step size and $\Ht_t = \H_t + \eta H_t(\thetat_t)$ is the local norm with $\H_t \triangleq \sum_{i=1}^{t-1} H_i(\thetat_{i+1}) + \lambda I$. Eq.~\eqref{eq:omd} can be solved with a projected gradient step with the following equivalent form:
\begin{align*}
  \thetat_{t+1}^{\prime} = \thetat_t - \eta \Ht_t^{-1} g_t(\thetat_t), ~~~ \thetat_{t+1} = \argmin_{\theta \in \Theta} \norm{\theta - \thetat_{t+1}^{\prime}}_{\Ht_t}^2.
\end{align*}
Thus, the estimator $\thetat_{t+1}$ provides a closed-form solution, leading to a $\O(1)$ computational complexity per iteration. Furthermore, since the estimator processes the samples in a one-pass manner, it mitigates the memory burden associated with computing the gradient of the full dataset. These properties make the method particularly suitable for edge devices, where both memory and computational resources are severely constrained. The detailed process of our proposed method is presented in Algorithm~\ref{alg:omd}. 

\begin{figure*}[t]
  \centering
  \begin{minipage}[t]{0.48\textwidth}
    \begin{algorithm}[H]
      \caption{One-Pass Reward Modeling}
      \label{alg:omd}
    \setstretch{1.22}
    \begin{algorithmic}[1]
      \REQUIRE Preference data $(x_t, a_t, a_t', y_t)$
      \STATE Define the loss function $\ell_t(\theta)$ as Eq.~\eqref{eq:mle}
      \STATE Update $\Ht_t = \H_t + \eta H_t(\thetat_t)$
      \STATE Compute $\thetat_{t+1}^{\prime} = \thetat_t - \eta \Ht_t^{-1} g_t(\thetat_t)$ 
      \STATE Compute $\thetat_{t+1} = \argmin_{\theta \in \Theta} \norm{\theta - \thetat_{t+1}^{\prime}}_{\Ht_t}^2$ \\
      \STATE Update $\H_{t+1} = \H_{t} + H_t(\thetat_{t+1})$
      \ENSURE $\thetat_{t+1}$
    \end{algorithmic}
  \end{algorithm}
  \end{minipage}
  \hfill
  \begin{minipage}[t]{0.48\textwidth}
  \begin{algorithm}[H]
    \caption{Passive Data Collection}
    \label{alg:passive}
    \begin{algorithmic}[1]
      \REQUIRE Regularization parameter $\lambda$, step size $\eta$
      \STATE Initialize $\thetat_1 = \bm{0}$ and $\Ht_1 = \lambda I$
      \FOR{$t = 1, 2, \ldots, T$}
        \STATE Observe preference data $(x_t, a_t, a_t', y_t)$
        \STATE $\thetat_{t+1} = \text{Algorithm~\ref{alg:omd} } (x_t, a_t, a_t', y_t)$
      \ENDFOR
      \STATE Construct $\Jt_{T+1}(\pi)$ as in Eq.~\eqref{eq:optimistic_value}
      \ENSURE $\pi_{T+1} = \argmax_{\pi \in \Pi} \Jt_{T+1}(\pi)$
    \end{algorithmic}
  \end{algorithm}
\end{minipage}
\vspace{-3mm}
\end{figure*}

\subsection{Theoretical guarantee}
Note that the update rule in Eq.~\eqref{eq:omd} is a special case of online mirror descent, specifically:
\begin{align*}
  \thetat_{t+1}=\argmin_{\theta \in \Theta} \Big\{\eta \big\langle g_t(\thetat_t), \theta \big\rangle + \D_{\psi_t}(\theta, \thetat_t)\Big\},
\end{align*}
where $\psi_t(\theta) = \frac{1}{2}\norm{\theta}_{\Ht_t}^2$ is the regularizer and $\D_{\psi_t}(\theta, \thetat_t) = \psi_t(\theta) - \psi_t(\thetat_t) - \langle \nabla \psi_t(\thetat_t), \theta - \thetat_t \rangle$ is Bregman divergence. Leveraging the modern analysis of online mirror descent~\citep{JMLR'24:Sword++,NeurIPS'23:MLogB}, we derive the following estimation error bound.
\begin{myLemma}
    \label{lem:confidence_set}
    Let $\delta \in(0,1]$, set $\eta=(1 / 2) \log 2+\left(BL+1\right)$ and $\lambda=84 \sqrt{2} \eta (dL^2 + BL^3)$, define $\mathcal{C}_t =\{\theta \in \Theta \mid \|\theta - \thetat_t \|_{\H_t} \leq \betat_t\triangleq \O \big(\sqrt{d} \log (t / \delta)\big) \}$.
    Then, we have $\operatorname{Pr}\left[\forall t \geqslant 1, \theta^* \in \mathcal{C}_t\right] \geqslant 1-\delta$.
\end{myLemma}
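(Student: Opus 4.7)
The plan is to derive the confidence radius via an online mirror descent analysis with a carefully chosen local norm, combining three ingredients: \textit{(i)} a one-step inequality from the optimality of $\thetat_{t+1}$ in Eq.~\eqref{eq:omd}; \textit{(ii)} a generalized self-concordance property of the logistic loss that converts drift into a quadratic contraction in local Hessian norm; and \textit{(iii)} a self-normalized concentration bound for the martingale noise induced by the preference labels. Throughout I would use the decomposition $g_t(\thetat_t) = \bar g_t - \epsilon_t z_t$, where $\bar g_t = (\sigma(z_t^\top\thetat_t) - \sigma(z_t^\top\theta^*)) z_t$ is the conditional-mean drift and $\epsilon_t = y_t - \sigma(z_t^\top\theta^*)$ is a bounded, zero-mean martingale difference.

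For the OMD step, I would exploit that $\thetat_{t+1}$ minimizes the quadratic-plus-linear surrogate in Eq.~\eqref{eq:omd} with regularizer $\psi_t(\theta) = \tfrac12 \|\theta\|_{\Ht_t}^2$. The three-point Bregman identity yields
\[
\eta \big\langle g_t(\thetat_t), \thetat_{t+1} - \theta^*\big\rangle \le \D_{\psi_t}(\theta^*, \thetat_t) - \D_{\psi_t}(\theta^*, \thetat_{t+1}) - \D_{\psi_t}(\thetat_{t+1}, \thetat_t),
\]
and adding $\eta \langle g_t(\thetat_t), \thetat_t - \thetat_{t+1}\rangle$ and applying Fenchel--Young bounds the new cross term by $\tfrac{\eta^2}{2}\|g_t(\thetat_t)\|_{\Ht_t^{-1}}^2 + \D_{\psi_t}(\thetat_{t+1},\thetat_t)$. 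Because $\Ht_t = \H_t + \eta H_t(\thetat_t)$ while $\H_{t+1} = \H_t + H_t(\thetat_{t+1})$, expanding the quadratic divergences produces a telescoping recursion in $\|\theta^* - \thetat_t\|_{\H_t}^2$ together with a lookahead correction term.

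Next, to handle the drift $\langle \bar g_t, \thetat_t - \theta^*\rangle = (\sigma(z_t^\top\thetat_t) - \sigma(z_t^\top\theta^*)) z_t^\top (\thetat_t - \theta^*)$, I would invoke the self-concordance inequality for the sigmoid, $(\sigma(u) - \sigma(v))(u-v) \ge \sigmad(v)(u-v)^2/(1 + |u-v|/2)$, with $u = z_t^\top \thetat_t$ and $v = z_t^\top \theta^*$. Since both arguments lie in $[-2BL,2BL]$ by Assumption~\ref{asm:linear-reward}, the prefactor is a constant depending only on $BL$, and I obtain $\langle \bar g_t, \thetat_t - \theta^*\rangle \gtrsim \|\thetat_t - \theta^*\|_{H_t(\thetat_t)}^2$. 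A second application of self-concordance lets me swap the anchor $\thetat_t$ for the lookahead $\thetat_{t+1}$ up to a constant, matching the Hessian used in the definition of $\H_{t+1}$; this is precisely where the choice $\eta = (1/2)\log 2 + (BL+1)$ enters, as it balances the $e^{|u-v|}$ inflation factor. Telescoping over $t$ and absorbing the drift into the left-hand side yields a recursion of the form
\[
\|\theta^* - \thetat_{t+1}\|_{\H_{t+1}}^2 \lesssim \lambda B^2 + \eta \sum_{i=1}^{t} \|g_i(\thetat_i)\|_{\Ht_i^{-1}}^2 + \eta \sum_{i=1}^{t} \epsilon_i z_i^\top (\theta^* - \thetat_i).
\]
The deterministic potential sum is $\O(d \log t)$ by the elliptical potential lemma applied to $\H_t$ (using $\Ht_t \succeq \H_t$), and the martingale term is controlled by a self-normalized Freedman-type inequality as in~\citet{NeurIPS'23:MLogB}, producing an $\O(\sqrt{d}\log(t/\delta))$ factor that multiplies the local norm on the left. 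Solving the resulting quadratic inequality in $\|\theta^* - \thetat_{t+1}\|_{\H_{t+1}}$ with the prescribed $\lambda = 84\sqrt{2}\eta(dL^2 + BL^3)$ absorbs the residual constants and yields the stated radius $\betat_t = \O(\sqrt{d}\log(t/\delta))$; a union bound over $t$ then delivers the uniform-in-$t$ guarantee.

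The main obstacle I anticipate is reconciling the three distinct Hessian anchor points that appear in the argument: the update-time point $\thetat_t$ inside $\Ht_t$, the lookahead $\thetat_{t+1}$ that defines $\H_{t+1}$, and the mean-value intermediate point produced by the self-concordance step. Controlling the cross-terms these mismatches generate, without blowing up the leading constants, requires iterating the self-concordance bound twice and relies crucially on the uniform reward-gap bound $|z_t^\top \theta| \le 2BL$ from Assumption~\ref{asm:linear-reward}, which is why both $\eta$ and $\lambda$ are chosen to scale polynomially in $BL$. A secondary technical point is that the regularizer $\psi_t$ itself varies with $t$, so the usual OMD telescoping must be replaced by a time-varying version in which the shift $\|\cdot\|_{\H_{t+1}}^2 - \|\cdot\|_{\H_t}^2 = \|\cdot\|_{H_t(\thetat_{t+1})}^2$ is exactly matched to the contraction provided by the drift; getting this matching to be tight rather than up to a problematic constant is the delicate quantitative step that fixes the precise form of $\eta$ and $\lambda$ stated in the lemma.
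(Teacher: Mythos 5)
Your route is genuinely different from the paper's, and as sketched it contains a gap that would defeat the stated $\kappa$-free radius. The critical step is your claim that the potential sum $\eta\sum_{i\le t}\|g_i(\thetat_i)\|_{\Ht_i^{-1}}^2$ is $\O(d\log t)$ ``by the elliptical potential lemma applied to $\H_t$.'' The elliptical potential lemma for $\H_t$ controls $\sum_i \dot{\sigma}(z_i^\top\thetat_{i+1})\|z_i\|_{\H_i^{-1}}^2$, because that is the increment actually added to $\H_i$; but $\|g_i(\thetat_i)\|_{\Ht_i^{-1}}^2=(\sigma(z_i^\top\thetat_i)-y_i)^2\|z_i\|_{\Ht_i^{-1}}^2$ carries the prefactor $(\sigma(z_i^\top\thetat_i)-y_i)^2$, which is $\Theta(1)$ rather than $\Theta(\dot{\sigma})$ whenever $y_i$ lands on the unlikely side. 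In the worst case this sum is $\Theta(\kappa\, d\log t)$, which after taking square roots reinstates exactly the $\sqrt{\kappa}$ factor the lemma is designed to remove. Salvaging it requires splitting $g_i=\bar g_i-\epsilon_i z_i$ as you propose and then \emph{(i)} a Bernstein-type argument exploiting $\E[\epsilon_i^2\mid\F_{i-1}]=\dot{\sigma}(z_i^\top\theta^*)$, \emph{(ii)} a self-concordance transfer from the anchor $\theta^*$ to the lookahead $\thetat_{i+1}$ used in $\H_{i+1}$, and \emph{(iii)} an induction to break the circularity you yourself flag: both the drift bound $(\sigma(z_i^\top\thetat_i)-\sigma(z_i^\top\theta^*))^2\lesssim\dot{\sigma}(\xi_i)\,(z_i^\top(\thetat_i-\theta^*))^2$ and the Freedman variance $\sum_i\dot{\sigma}(z_i^\top\theta^*)(z_i^\top(\theta^*-\thetat_i))^2$ are of the same order as the quantity being bounded. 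None of these steps is supplied, and each is where the real work lies in the direct approach of \citet{AISTATS'22:Faury-Jointly}.

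The paper avoids all of this by never touching gradient norms. It first runs an implicit-OMD plus strong-convexity argument (Lemma~\ref{lem:estimation-error}, via Lemmas~\ref{lem:implicit-omd} and~\ref{lem:strongly-convex}) to reduce everything to the cumulative log-loss gap $\sum_i\ell_i(\theta^*)-\sum_i\ell_i(\thetat_{i+1})$, and then --- because $\thetat_{i+1}$ depends on $y_i$, so no concentration applies directly --- inserts a predictable aggregating forecaster $q_i=\sigma^{-1}(\E_{\theta\sim P_i}[\sigma(\theta^\top z_i)])$ with $P_i=\N(\thetat_i,(1+c\H_i^{-1}))$, following \citet{NeurIPS'23:MLogB}. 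The regret of $\theta^*$ against this Bayesian mixture (term (a)) is $\O(\log t\cdot\log(t/\delta))$ with no $d$ or $\kappa$ dependence by mixability of the logistic loss; term (b) contributes the $d\log t$ deterministically together with a $\frac{1}{2c}\sum_i\|\thetat_{i+1}-\thetat_i\|^2_{\H_i}$ that is cancelled by the negative drift terms from Lemma~\ref{lem:estimation-error} once $c$, $\eta$, $\lambda$ are tuned. Your reading of $\eta$ is also slightly off: it is the strong-convexity modulus $\frac{1}{2}(\log 2+2(BL+1))$ of the logistic loss from Lemma~\ref{lem:strongly-convex}, not a balance against an $e^{|u-v|}$ inflation factor. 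The key idea missing from your proposal, if you want to reach the stated radius, is this intermediate forecaster decomposition.
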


\textbf{Comparison with MLE.}~~For the MLE estimator in Eq.~\eqref{eq:mle}, prior works~\citep{ICML'23:Zhu-Principled,ECMLPKDD'25:Das-RLHF-active,TMLR'25:Ji-RLHF-active} have shown $\|\theta - \thetat_t\|_{V_t} \leq \Ot(\kappa \sqrt{d})$, where $V_t = \sum_{i=1}^{t-1} z_i z_i^\top + \lambda I$. By the definition of $\H_t$, it holds that $\H_t \succeq \kappa^{-1} V_t$, Lemma~\ref{lem:confidence_set} implies that $\|\theta - \thetat_t\|_{V_t} \leq \sqrt{\kappa} \|\theta - \thetat_t\|_{\H_t} \leq \Ot(\sqrt{\kappa d})$. This result shows that Lemma~\ref{lem:confidence_set} improves upon previous bounds by at least a factor of $\sqrt{\kappa}$.

\section{Applications in Three Online RLHF Scenarios}
\label{sec:applications}

In this section, we apply our framework to three distinct RLHF scenarios, including online RLHF with passive data collection, active data collection, and deployment-time adaptation.

\subsection{Online RLHF with passive data collection}
We first consider the passive data collection setting, where the algorithm cannot control the data collection process. At each iteration, the learner obtains $(x_t, a_t, a_t', y_t)$ and updates by Eq.~\eqref{eq:omd}. We adopt the ``pessimism in the face of uncertainty'' principle and define the value function $\Jt_{t+1}(\pi)$ as
\begin{align}
  \label{eq:optimistic_value}
  \Jt_{T+1}(\pi) = (\E_{x \sim \rho} \left[\phi(x, \pi(x))\right])^\top \thetat_{T+1} - \betat_{T+1} \norm{\E_{x \sim \rho} \left[\phi(x, \pi(x))\right]}_{\H_{T+1}^{-1}}.
\end{align} 
where $\rho$ is the context distribution. The policy $\pi_{T+1}$ is selected as $\pi_{T+1} = \argmax_{\pi \in \Pi} \Jt_{T+1}(\pi)$. The detailed procedure is present in Algorithm~\ref{alg:passive}, and we show it enjoys the following guarantee.

\begin{myThm}
  \label{thm:passive}
  Set parameters as in Lemma~\ref{lem:confidence_set}, with probability at least $1-\delta$, Algorithm~\ref{alg:passive} ensures
  \begin{align*}
    {\subopt}(\pi_{T+1}) = \E_{x \sim \rho} \left[r(x, \pi^*(x)) - r(x, \pi_{T+1}(x))\right] \leq \Ot \left(\sqrt{d} \cdot \bignorm{\mathbb{E}_{x \sim \rho} \left[\phi(x,\pi^*(x))\right]}_{\H_{T+1}^{-1}} \right),
  \end{align*}
  where $\rho$ is the context distribution and $\pi^*$ is the optimal policy.
\end{myThm}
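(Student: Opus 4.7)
The plan is to follow the standard pessimism-based analysis for offline/passive linear bandits, using Lemma~\ref{lem:confidence_set} as the core concentration device. Let $\Phi_\pi \triangleq \E_{x\sim\rho}[\phi(x,\pi(x))]$ and $J(\pi) \triangleq \E_{x\sim\rho}[r(x,\pi(x))] = \Phi_\pi^\top \theta^*$. Condition on the high-probability event from Lemma~\ref{lem:confidence_set}, so that $\|\thetat_{T+1}-\theta^*\|_{\H_{T+1}} \leq \betat_{T+1} = \tilde{\O}(\sqrt{d}\log(T/\delta))$ for the remainder of the argument.

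First, I would establish a \emph{pessimism} property of the constructed value function. By the generalized Cauchy--Schwarz inequality in the local norm $\H_{T+1}$, for any policy $\pi \in \Pi$,
\begin{align*}
  \big|\Phi_\pi^\top(\thetat_{T+1} - \theta^*)\big| \leq \norm{\Phi_\pi}_{\H_{T+1}^{-1}} \cdot \norm{\thetat_{T+1}-\theta^*}_{\H_{T+1}} \leq \betat_{T+1} \norm{\Phi_\pi}_{\H_{T+1}^{-1}}.
\end{align*}
Rearranging the left-hand inequality yields $\Phi_\pi^\top \thetat_{T+1} - \betat_{T+1}\norm{\Phi_\pi}_{\H_{T+1}^{-1}} \leq \Phi_\pi^\top \theta^*$, i.e., $\Jt_{T+1}(\pi) \leq J(\pi)$ for every $\pi$.

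Next, I would chain this pessimism with the greedy choice $\pi_{T+1} = \argmax_{\pi \in \Pi}\Jt_{T+1}(\pi)$. Specifically, since $\Jt_{T+1}(\pi_{T+1}) \leq J(\pi_{T+1})$ and $\Jt_{T+1}(\pi_{T+1}) \geq \Jt_{T+1}(\pi^*)$,
\begin{align*}
  {\subopt}(\pi_{T+1}) = J(\pi^*) - J(\pi_{T+1}) \leq J(\pi^*) - \Jt_{T+1}(\pi^*) = \Phi_{\pi^*}^\top(\theta^* - \thetat_{T+1}) + \betat_{T+1}\norm{\Phi_{\pi^*}}_{\H_{T+1}^{-1}}.
\end{align*}
Applying the Cauchy--Schwarz bound once more to the first term gives ${\subopt}(\pi_{T+1}) \leq 2\betat_{T+1}\norm{\Phi_{\pi^*}}_{\H_{T+1}^{-1}}$, which after substituting $\betat_{T+1}=\tilde{\O}(\sqrt{d})$ is exactly the claimed rate.

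The proof is essentially a clean plug-in of Lemma~\ref{lem:confidence_set} into the pessimism template, so I do not anticipate any serious obstacle here; all the heavy lifting is already packaged in the confidence radius $\betat_{T+1}$ established by the one-pass OMD analysis. The only conceptual subtlety is ensuring that the weighted norm used to measure concentrability of $\pi^*$ is precisely $\H_{T+1}^{-1}$ (and not $V_{T+1}^{-1}$), which is already the case since our confidence ellipsoid is expressed in the $\H_{T+1}$-norm; this is exactly where the $\sqrt{\kappa}$ improvement highlighted after Lemma~\ref{lem:confidence_set} enters the final bound.
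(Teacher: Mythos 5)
Your proposal is correct and follows essentially the same route as the paper's proof: the same three-term pessimism decomposition (with the explicit penalized form of $\Jt_{T+1}$ in place of the paper's equivalent $\min_{\theta\in\C_{T+1}}$ formulation), the same use of the optimality of $\pi_{T+1}$ under $\Jt_{T+1}$, and the same Cauchy--Schwarz step yielding the $2\betat_{T+1}\norm{\Phi_{\pi^*}}_{\H_{T+1}^{-1}}$ bound. No gaps.
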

\begin{myRemark}
  The term $\norm{\mathbb{E}_{x \sim \rho} \left[\phi(x,\pi^*(x))\right]}_{\H_{T+1}^{-1}}$ is usually referred to ``concentrability coefficient'' in the literature. It measures the distribution shift between the optimal policy and the collected data.
\end{myRemark}

\begin{myRemark}
For statistical efficiency, since $\H_t \succeq \kappa^{-1} V_t$, Theorem~\ref{thm:passive} improves the $\Ot (\sqrt{d} \kappa \cdot \norm{\mathbb{E}_{x \sim \rho} \left[\phi(x,\pi^*(x))\right]}_{V_{T+1}^{-1}} )$ result of ~\citet{ICML'23:Zhu-Principled} at least by a factor of $\sqrt{\kappa}$. Regarding computational efficiency, their algorithm has a total storage complexity of $\mathcal{O}(T)$ and a time complexity of $\mathcal{O}(T \log T)$, leading to an amortized per-iteration cost of $\mathcal{O}(\log T)$. In contrast, our algorithm maintains a strict $\mathcal{O}(1)$ complexity per iteration, offering a substantial computational advantage.
\end{myRemark}

\begin{figure*}
  \begin{minipage}[t]{0.99\textwidth}
      \centering
      \subfigure[Passive Data Collection]{\includegraphics[width=0.32\columnwidth, trim=10mm 85mm 238mm 30mm, clip]{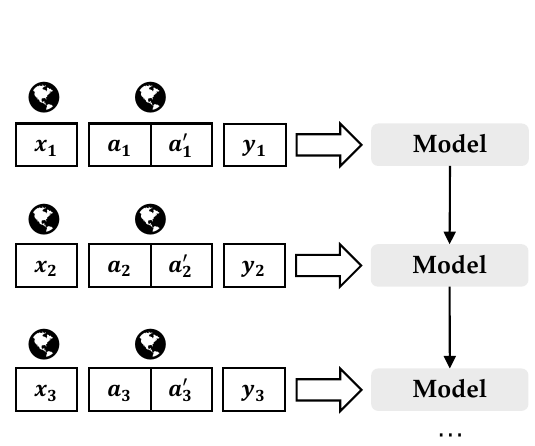}
          \label{fig:passive}}
      \hfill
      \subfigure[Active Data Collection]{\includegraphics[width=0.32\columnwidth, trim=120mm 85mm 128mm 30mm, clip]{figs/settings/settings.pdf}
          \label{fig:active}}
      \hfill
      \subfigure[Deployment-Time Adaptation]{\includegraphics[width=0.32\columnwidth, trim=235mm 85mm 13mm 30mm, clip]{figs/settings/settings.pdf}
          \label{fig:deploy}}
  \end{minipage}
  \caption{Different settings of online RLHF. Contexts and actions selected by the environment (\faGlobe) are shown in grey, while those selected by the algorithm (\faSearch) are highlighted in color.}
  \label{fig:settings}
\end{figure*}

\subsection{Online RLHF with active data collection}
As established in Theorem~\ref{thm:passive}, the sub-optimality gap depends on the concentrability coefficient, which quantifies the distributional mismatch between the optimal policy and the collected data. In this subsection, we propose an active data collection method that removes this dependency.

\textbf{Active Data Collection.}~~At each iteration, we select a triplet $(x_t, a_t, a'_t)$ to query for human feedback $y_t$, and then update the reward model using our one-pass method as defined in Eq.~\eqref{eq:omd}. To guide data acquisition, we adopt an active selection strategy that queries the sample with the highest predictive uncertainty under the current reward model. Specifically, the next query is chosen by solving:
\begin{align}
  \label{eq:active-query}
  (x_{t+1}, a_{t+1}, a_{t+1}') = \argmax_{x, a, a^{\prime} \in \X \times \A \times \A} \left\{\bignorm{\phi(x, a) - \phi(x, a')}_{\H_{t+1}^{-1}}\right\}.
\end{align}

\textbf{Policy Optimization.}~~After $T$ rounds, we define the reward as the average of all the past estimations $\rt_{T+1}(x, a) = \frac{1}{T+1} \sum_{t=1}^{T+1} \phi(x, a)^\top \thetat_t$. The policy is given by $\pi_{T+1}(x) = \argmax_{a \in \A} \rt_{T+1}(x, a)$. 

The detailed procedure is present in Algorithm~\ref{alg:active}. We show it enjoys the following guarantee.

\begin{myThm}
  \label{thm:active}
  Set parameters as in Lemma~\ref{lem:confidence_set}, with probability at least $1-\delta$, Algorithm~\ref{alg:active} ensures
  \begin{align*}
    \subopt(\pi_{T+1}) = \E_{x \sim \rho} \left[r(x, \pi^*(x)) - r(x, \pi_{T+1}(x))\right] \leq \Ot \big(d \sqrt{{\kappa}/{T}}\big),
  \end{align*}
  where $\rho$ is the context distribution and $\pi^*$ is the optimal policy.
\end{myThm}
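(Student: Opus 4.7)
The plan is to decompose the suboptimality into a sum of per-round uncertainty terms, bound each term via the confidence set from Lemma~\ref{lem:confidence_set}, use the active selection rule to replace the policy-induced uncertainty by the uncertainty at the queried points, and finally apply an elliptical potential argument to bound the cumulative uncertainty.

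I would first observe that $\rt_{T+1}(x,a) = \phi(x,a)^\top \bar{\theta}_{T+1}$ with $\bar{\theta}_{T+1} = \frac{1}{T+1}\sum_{t=1}^{T+1}\thetat_t$, so $\pi_{T+1}(x)$ maximizes a linear functional of $\bar{\theta}_{T+1}$. Writing $z^*(x) = \phi(x,\pi^*(x)) - \phi(x,\pi_{T+1}(x))$ and exploiting the optimality of $\pi_{T+1}$ under $\rt_{T+1}$, the pointwise gap admits the decomposition
\begin{align*}
r(x,\pi^*(x)) - r(x,\pi_{T+1}(x)) \;\leq\; \frac{1}{T+1}\sum_{t=1}^{T+1} z^*(x)^\top(\theta^* - \thetat_t).
\end{align*}
Cauchy--Schwarz in the local norm together with Lemma~\ref{lem:confidence_set} then gives, on a high-probability event, $z^*(x)^\top(\theta^*-\thetat_t) \leq \betat_t \cdot \|z^*(x)\|_{\H_t^{-1}}$ uniformly in $t \leq T+1$, with $\betat_t = \Ot(\sqrt{d})$.

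Next, the active query rule in Eq.~\eqref{eq:active-query} takes $z_t$ to be the maximizer of $\|\phi(x,a)-\phi(x,a')\|_{\H_t^{-1}}$ over all contexts and action pairs, so the pointwise inequality $\|z^*(x)\|_{\H_t^{-1}} \leq \|z_t\|_{\H_t^{-1}}$ holds for every $x$; taking expectation over $x\sim\rho$ preserves it and yields
\begin{align*}
\subopt(\pi_{T+1}) \;\leq\; \frac{\betat_{T+1}}{T+1}\sum_{t=1}^{T+1} \|z_t\|_{\H_t^{-1}}.
\end{align*}
To turn this local-norm uncertainty into a quantity controllable by the standard elliptical potential lemma, I would apply the spectral relation $\H_t \succeq \kappa^{-1} V_t$ noted after Lemma~\ref{lem:confidence_set}, obtaining $\|z_t\|_{\H_t^{-1}} \leq \sqrt{\kappa}\|z_t\|_{V_t^{-1}}$; Cauchy--Schwarz combined with the elliptical potential bound $\sum_t \|z_t\|_{V_t^{-1}}^2 = \Ot(d)$ then gives $\sum_t \|z_t\|_{V_t^{-1}} = \Ot(\sqrt{dT})$. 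Multiplying the pieces produces the advertised rate $\Ot(d\sqrt{\kappa/T})$.

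The main obstacle lies in reconciling the three distinct notions of uncertainty used in the argument: the confidence set is stated in the $\H_t$ norm, the acquisition rule maximizes in the dual $\H_t^{-1}$ norm, and the elliptical potential lemma operates naturally in $V_t^{-1}$. Transferring between these norms via $\H_t \succeq \kappa^{-1} V_t$ is precisely what allows a single $\sqrt{\kappa}$ factor to appear (rather than the $\kappa$ factor that surfaces in an MLE-based analysis), so ensuring this lower bound on $\H_t$ together with the simultaneous validity of Lemma~\ref{lem:confidence_set} across all $t \leq T+1$ are the two places where the analysis needs to be handled carefully.
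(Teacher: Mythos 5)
Your proposal is correct and follows essentially the same route as the paper's proof: the same decomposition via the optimality of $\pi_{T+1}$ under the averaged reward estimate, Cauchy--Schwarz against the confidence set of Lemma~\ref{lem:confidence_set}, domination of $\|z^*(x)\|_{\H_t^{-1}}$ by $\|z_t\|_{\H_t^{-1}}$ through the acquisition rule, and the transfer $\H_t \succeq \kappa^{-1}V_t$ followed by the elliptic potential lemma. No gaps to report.
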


\begin{myRemark}
  We attain the same sub-optimality gap as ~\citet{ECMLPKDD'25:Das-RLHF-active}, but improve the computational efficiency significantly. Our algorithm has an $\O(1)$ time and space complexity per round, while their MLE estimator needs $\O(t \log t)$ time and $\O(t)$ space complexity at iteration $t$.
\end{myRemark}

\subsection{Online RLHF with deployment-time adaptation}

In this section, we consider the deployment-time adaptation setting, where users provide input contexts in an online manner, and the learner generates responses while simultaneously collecting feedback to improve the model. In this scenario, the learner faces a dual objective: selecting actions that maximize rewards to ensure a positive user experience, while also choosing actions that yield informative feedback to facilitate continual model improvement. To this end, we consider the measure: ${\Reg}_T = \sum_{t=1}^T \left(r\left(x_t, \pi^*(x_t)\right)- \frac{1}{2} \left(r\left(x_t, a_t\right) + r\left(x_t, a'_t\right)\right)\right)$, where $\pi^*$ is the optimal policy. 

\textbf{Action selection.}~~At each iteration, given a prompt $x_t$ from the user, the learner selects two actions $a_t$ and $a_t'$ and obtain the feedback $y_t$. The learner must select actions that are both informative and with high rewards. To this end, we choose the first action $a_{t+1}$ to maximize the estimated reward, i.e.,
\begin{align}
  \label{eq:first-action}
  a_{t+1} = \argmax_{a \in \A} \phi(x_{t+1}, a)^\top \thetat_{t+1}.
\end{align}
The second action $a_{t+1}'$ aims to maximize the reward and the distance between the two actions, i.e.,
\begin{align}
  \label{eq:second-action}
  a_{t+1}' = \argmax_{a' \in \A}~ \Big\{\phi(x_{t+1}, a')^\top \thetat_{t+1} + \betat_{t+1} \norm{\phi(x_{t+1}, a') - \phi(x_{t+1}, a_{t+1})}_{\H_{t+1}^{-1}}\big\} .
\end{align}
The overall algorithm is summarized in Algorithm~\ref{alg:deploy}. We show it enjoys the following regret bound.

\begin{myThm}
    \label{thm:deploy}
    For any $\delta \in (0,1]$, set parameters as in Lemma~\ref{lem:confidence_set}, Algorithm~\ref{alg:deploy} ensures the regret satisfies ${\Reg}_T \leq \Ot \big(d\sqrt{{\kappa}{T}}\big)$ with probability at least $1-\delta$.
\end{myThm}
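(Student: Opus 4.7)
The proof strategy is to establish a per-round regret bound of the form $O(\betat_t \|z_t\|_{\H_t^{-1}})$ with $z_t = \phi(x_t, a_t') - \phi(x_t, a_t)$, and then to sum over $t$ via Cauchy--Schwarz plus an elliptical-potential argument. First I would decompose
\begin{align*}
r(x_t, \pi^*(x_t)) - \tfrac{1}{2}(r(x_t, a_t) + r(x_t, a_t')) = \tfrac{1}{2}\big[(r^*_t - r_t) + (r^*_t - r_t')\big],
\end{align*}
writing $\phi^*_t, \phi_t, \phi_t'$ for the three features. The key tool is Lemma~\ref{lem:confidence_set}, which by Cauchy--Schwarz in the $\H_t$-norm yields $|(\phi - \phi')^\top(\theta^* - \thetat_t)| \leq \betat_t \|\phi - \phi'\|_{\H_t^{-1}}$ for any two features. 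Instantiating the optimistic rule Eq.~\eqref{eq:second-action} at $a' = \pi^*(x_t)$ produces the central inequality
\begin{align*}
\betat_t \|\phi^*_t - \phi_t\|_{\H_t^{-1}} \leq (\phi_t' - \phi^*_t)^\top \thetat_t + \betat_t \|\phi_t' - \phi_t\|_{\H_t^{-1}}.
\end{align*}

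For $r^*_t - r_t$, I would combine the confidence bound with the above inequality and the greediness of $a_t$ (Eq.~\eqref{eq:first-action}), which makes $(\phi_t' - \phi_t)^\top \thetat_t \leq 0$, to get $r^*_t - r_t \leq \betat_t \|z_t\|_{\H_t^{-1}}$. For $r^*_t - r_t'$, I would apply the confidence bound to the pair $(\pi^*(x_t), a_t')$, substitute the rearranged optimism, and invoke the triangle inequality $\|\phi^*_t - \phi_t'\|_{\H_t^{-1}} \leq \|\phi^*_t - \phi_t\|_{\H_t^{-1}} + \|z_t\|_{\H_t^{-1}}$ to obtain $r^*_t - r_t' \leq 2\betat_t \|z_t\|_{\H_t^{-1}}$. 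Adding the two halves yields an instantaneous regret of at most $\tfrac{3}{2}\betat_t \|z_t\|_{\H_t^{-1}}$.

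Summing over $t$ and using Cauchy--Schwarz gives $\Reg_T \leq O(\betat_T) \sqrt{T \sum_{t=1}^T \|z_t\|_{\H_t^{-1}}^2}$. Since $\sigmad(z_t^\top \thetat_{t+1}) \geq 1/\kappa$ by Definition~\ref{def:kappa}, the local norm satisfies $\H_t \succeq \kappa^{-1} V_t$ with $V_t = \sum_{i<t} z_i z_i^\top + \lambda I$, so $\sum_{t=1}^T \|z_t\|_{\H_t^{-1}}^2 \leq \kappa \sum_{t=1}^T \|z_t\|_{V_t^{-1}}^2 \leq \Ot(\kappa d)$ by the standard elliptical-potential lemma (valid because $\lambda$ is chosen large enough in Lemma~\ref{lem:confidence_set} that $\|z_t\|_{V_t^{-1}}^2 \leq O(L^2/\lambda)$ is uniformly bounded). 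Combined with $\betat_T = \Ot(\sqrt{d})$ from Lemma~\ref{lem:confidence_set}, this produces the claimed $\Ot(d\sqrt{\kappa T})$ regret.

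The main obstacle is handling the dueling structure: the exploration bonus in Eq.~\eqref{eq:second-action} is a difference norm rather than a single-vector norm, so the standard UCB-style cancellation between ``bonus'' and ``estimation error'' must be mediated by the triangle inequality and the optimism-at-$\pi^*$ inequality; coordinating these manipulations consistently across both halves of the per-round regret (and avoiding spurious factors of $2$ or $\sqrt{\kappa}$) is the delicate step. A secondary subtlety is that $\H_t$ is $\sigmad$-weighted rather than a plain Gram matrix, which is precisely what forces the $\sqrt{\kappa}$ factor when we downgrade to the unweighted potential on $V_t$; one could alternatively invoke a weighted elliptical-potential lemma directly on $\H_t$ and reach the same conclusion.
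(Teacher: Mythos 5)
Your proof is correct and follows essentially the same route as the paper: the per-round regret is bounded by $\tfrac{3}{2}\betat_t\|z_t\|_{\H_t^{-1}}$ using the confidence set from Lemma~\ref{lem:confidence_set}, the greedy choice of $a_t$, and the optimistic selection rule for $a_t'$ evaluated at $a'=\pi^*(x_t)$, and the sum is then controlled via $\H_t \succeq \kappa^{-1}V_t$, Cauchy--Schwarz, and the elliptic potential lemma, exactly as in the paper. One minor point in your favor: your triangle-inequality bookkeeping makes the argument close with the bonus coefficient $\betat_t$ exactly as written in Eq.~\eqref{eq:second-action}, whereas the paper's own chain of inequalities implicitly uses a $2\betat_t$ coefficient in the selection rule for $a_t'$.
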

\begin{myRemark}
  Our result improves upon~\citet{AISTATS'23:Saha-Dueling-RL} in both computational and statistical efficiency. Statistically, Theorem~\ref{thm:deploy} improves their $\Ot \big(d{\kappa}\sqrt{{T}}\big)$ result by a factor of $\sqrt{\kappa}$. Computationally, our algorithm has an $\O(1)$ time and space complexity per round, while their MLE estimator needs $\O(t \log t)$ time and $\O(t)$ space complexity at iteration $t$ due to optimization over the historical data.
\end{myRemark}

\begin{figure*}[t]
  \centering
\begin{minipage}[t]{0.48\textwidth}
  \begin{algorithm}[H]
    \caption{Active Data Collection}
    \label{alg:active}
    \begin{algorithmic}[1]
        \REQUIRE Regularization parameter $\lambda$, step size $\eta$
        \STATE Initialize $\thetat_1=\bm{0}$ and $\H_1 = \lambda I$
        \FOR {$t=1,2, \ldots, T$}
        \STATE Choose $(x_t, a_t, a_t^{\prime})$ as Eq.~\eqref{eq:active-query}, observe $y_t$
        \STATE $\thetat_{t+1} = \text{Algorithm~\ref{alg:omd} } (x_t, a_t, a_t', y_t)$
        \ENDFOR
        \STATE Set $\rt_{T+1}(x, a) = \frac{1}{T+1} \sum_{t=1}^{T+1} \phi(x, a)^\top \thetat_t$
        \ENSURE $\pi_{T+1}(x) = \argmax_{a \in \A} \rt_{T+1}(x, a)$
    \end{algorithmic}
  \end{algorithm}
  \end{minipage}%
  \hfill
  \begin{minipage}[t]{0.48\textwidth}
  \begin{algorithm}[H]
    \caption{Deployment-Time Adaptation}
    \label{alg:deploy}
    \setstretch{1.12}
    \begin{algorithmic}[1]
        \REQUIRE Regularization parameter $\lambda$, step size $\eta$
        \STATE Initialize $\thetat_1=0$ and $\H_1 = \lambda I$.
        \FOR {$t=1,2, \ldots, T$}
        \STATE Observes the context $x_t$.
        \STATE Selects $a_t$ and $a_t'$ as Eq.~\eqref{eq:first-action} and Eq.~\eqref{eq:second-action}
        \STATE Observe the preference feedback $y_t$
        \STATE $\thetat_{t+1} = \text{Algorithm~\ref{alg:omd} } (x_t, a_t, a_t', y_t)$
        \ENDFOR
    \end{algorithmic}
  \end{algorithm}
  \end{minipage}
\vspace{-2mm}
\end{figure*}

\section{Practical Implementation}
\label{sec:practical}

While the proposed one-pass algorithm completely removes the need to store historical data and achieves constant-time updates per iteration, its computational cost still exhibits an implicit dependence on the feature dimension $d$, which can become non-negligible in large-scale model optimization. To further alleviate this issue, we introduce in this section several empirical approximation techniques designed to reduce the effective dependence on dimensionality and enhance practical efficiency.

\subsection{Computation of inverse Hessian}
\label{sec:inverse-hessian}
Although the OMD update in Eq.~\eqref{eq:omd} enjoys one-pass property, it requires the computation of matrix inversion. Specifically, by omitting the projection operation, Eq.~\eqref{eq:omd} can be rewritten as $\thetat_{t+1} = \thetat_t - \eta \Ht_t^{-1}g_t(\thetat_t)$ where $\Ht_t = \sum_{i=1}^{t-1} H_i(\thetat_{i+1}) + \eta H_t(\thetat_t)  + \lambda I$. Computing the full $\Ht_t^{-1}$ directly incurs a time complexity of $\mathcal{O}(d^3)$, which is prohibitive for LLMs as $d$ is typically large.

This cost can be reduced to $\mathcal{O}(d^2)$ by applying the Sherman-Morrison-Woodbury formula, leveraging the fact that the Hessian is a rank-one update. Specifically, for a matrix of the form $A + \mathbf{x}\mathbf{x}^\top$ where $A$ is invertible and $\mathbf{x}$ is a vector, the inverse is given by $(A + \mathbf{x} \mathbf{x}^\top)^{-1} = A^{-1} - \frac{A^{-1} \mathbf{x} \mathbf{x}^\top A^{-1}}{1 + \mathbf{x}^\top A^{-1} \mathbf{x}}$, requiring only $\mathcal{O}(d^2)$ time. Nevertheless, even this reduced complexity remains costly for large models.

To further reduce the computational burden to $\mathcal{O}(d)$, we employ the Hessian-vector product technique combined with conjugate gradient descent~\citep{book'04:Boyd-convex}. Instead of explicitly computing $\Ht_t^{-1}$, we define $v_t = \Ht_t^{-1} g_t(\thetat_t)$ and solve the linear system $\Ht_t v_t = g_t(\thetat_t)$ using the conjugate gradient method. The required matrix–vector product decomposes as $\Ht_t v_t = {\sum\nolimits_{i=1}^{t-1} H_i(\thetat_{i+1}) v_t} + {\lambda v_t} + {\eta H_t(\thetat_t) v_t}$.

For the first term, materializing and storing all past Hessians $H_i(\thetat_{i+1})$ is infeasible. We therefore absorb their effect into the second term by replacing $\lambda$ with $\lambda_t = \lambda_0 \cdot \min \{1, f(t / T)\}$, where $f(\cdot)$ is a monotonic increasing function, such as a linear or logarithmic function. The last term can be computed via the Pearlmutter trick as $H_t(\thetat_t) v_t = \nabla_\theta \big(\nabla_\theta \ell_t(\theta)^\top v_t\big)\big|_{\theta=\thetat_t}$. Each iteration therefore requires only HVPs and vector operations, yielding an overall $\O(d)$ per-iteration cost with a small fixed number of iterations.

\subsection{Computation of model uncertainty}
In both online RLHF with active data collection and deployment-time adaptation, our algorithm utilizes uncertainty-driven query selection strategies. While quantifying uncertainty using the local norm induced by the inverse Hessian matrix offers strong theoretical guarantees, it is computationally prohibitive in practice. To address this challenge, we adopt a rejection sampling-based approximation, a technique commonly employed for exploration in the RLHF literature~\citep{arXiv'21:WebGPT,arXiv'23:ReST,TMLR'23:RAFT,TMLR'24:Dong-RLHF}. Specifically, given a prompt, we sample $n$ independent responses by the current model, then use the trained reward function to rank the responses. Then, we use different strategies to select the response for different settings. Specifically, In active data collection, the key insight is to identify and query samples that exhibit the greatest diversity in prompt action features. To this end, we select the response with the highest predicted reward and the one with the lowest predicted reward. In deployment-time adaptation, the core idea is to select the first arm to maximize the estimated reward, while the second is chosen to balance high reward with sufficient divergence from the first. Concretely, we select the response with the highest predicted reward and another from the top-$1/q$ percentile of the reward to ensure diversity, where $q$ is a hyperparameter.

\section{Experiments}
\label{sec:experiments}

\begin{figure*}
    \begin{minipage}[t]{0.99\textwidth}
        \centering
        \subfigure[training loss]{\includegraphics[width=0.23\columnwidth]{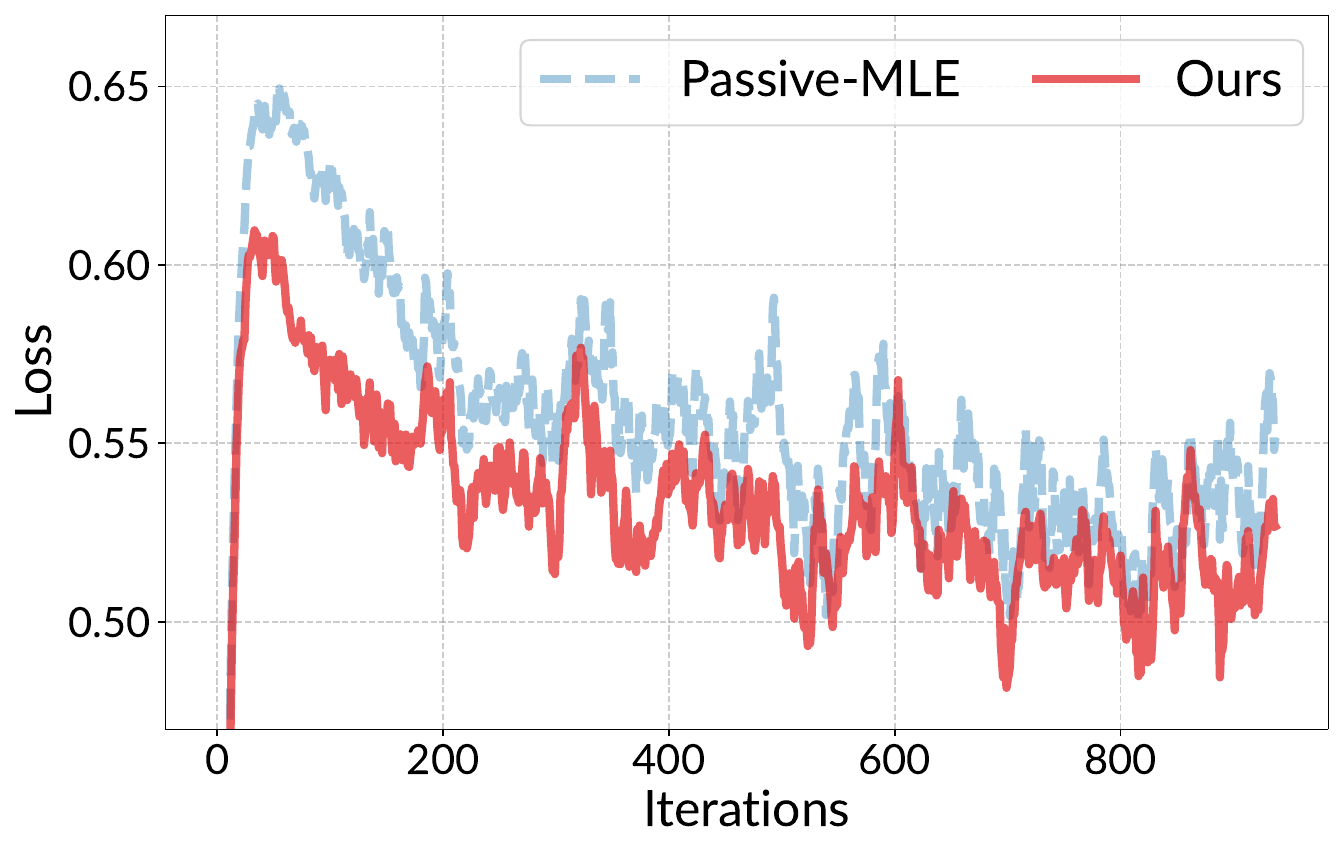}
            \label{fig:passive-train-loss}}
        \hfill
        \subfigure[training accuracy]{\includegraphics[width=0.23\columnwidth]{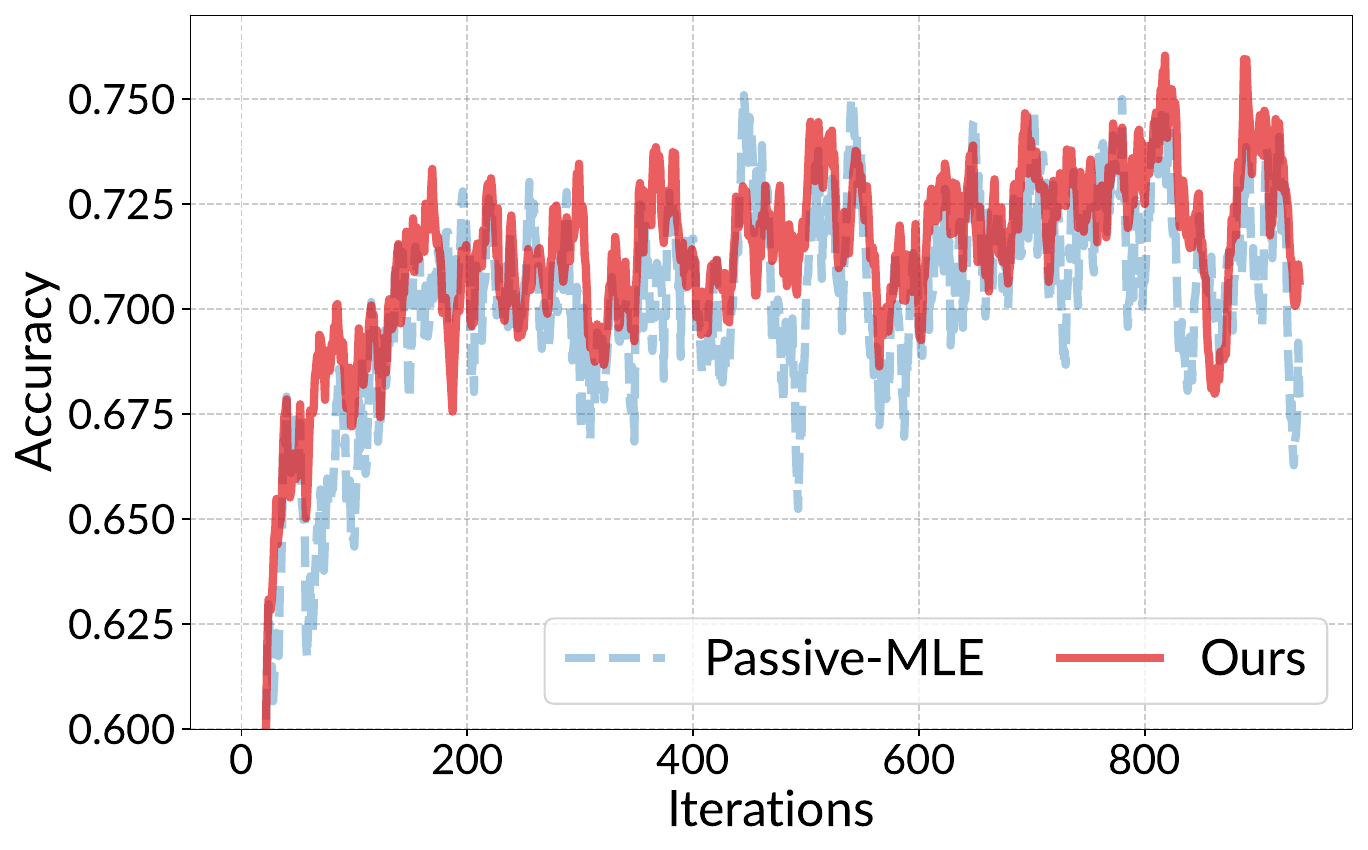}
            \label{fig:passive-train-acc}}
        \hfill
        \subfigure[evaluation loss]{\includegraphics[width=0.23\columnwidth]{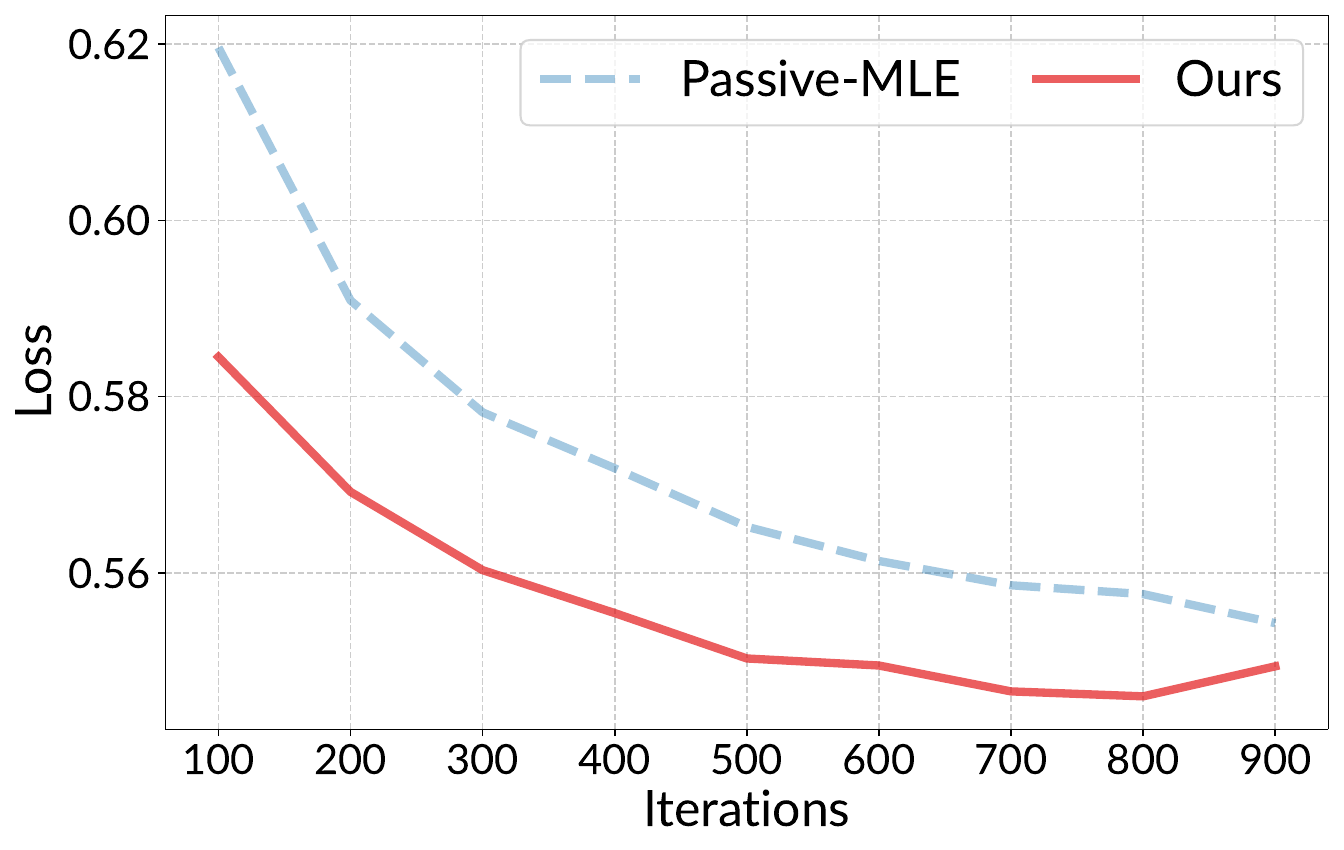}
            \label{fig:passive-eval-loss}}
        \hfill
        \subfigure[evaluation accuracy]{\includegraphics[width=0.23\columnwidth]{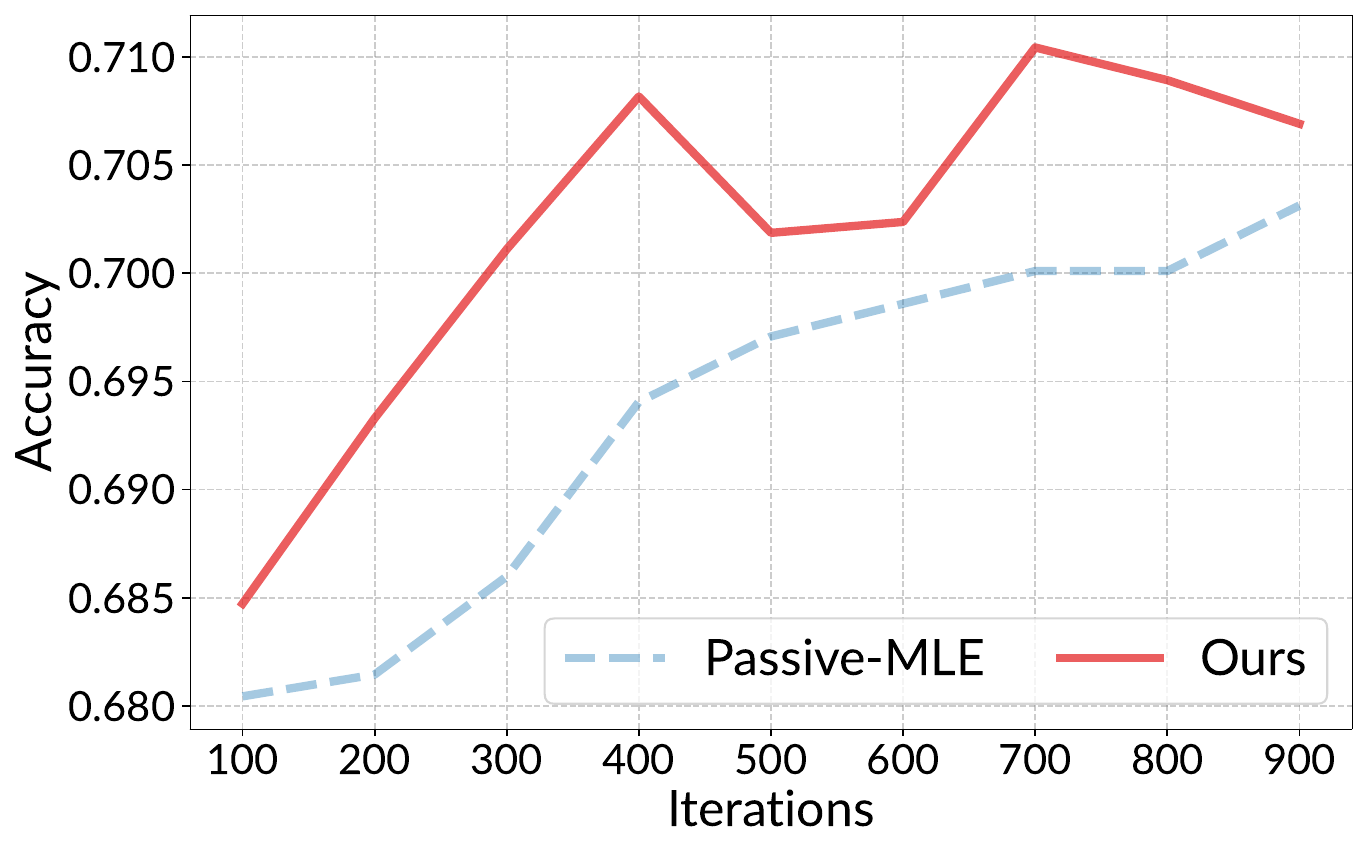}
            \label{fig:passive-eval-acc}}
    \end{minipage}
    \vspace{-2mm}
    \caption{For online RLHF with passive data collection, we report the comparison of MLE and our method about (a) training loss, (b) training accuracy, (c) evaluation loss and (d) evaluation accuracy.}
    \label{fig:training_passive}
    \vspace{-3mm}
\end{figure*}

\begin{figure}[!t]
    \centering
    \begin{minipage}{0.58\textwidth}
        \centering
        \subfigure[training loss]{\includegraphics[width=0.48\columnwidth]{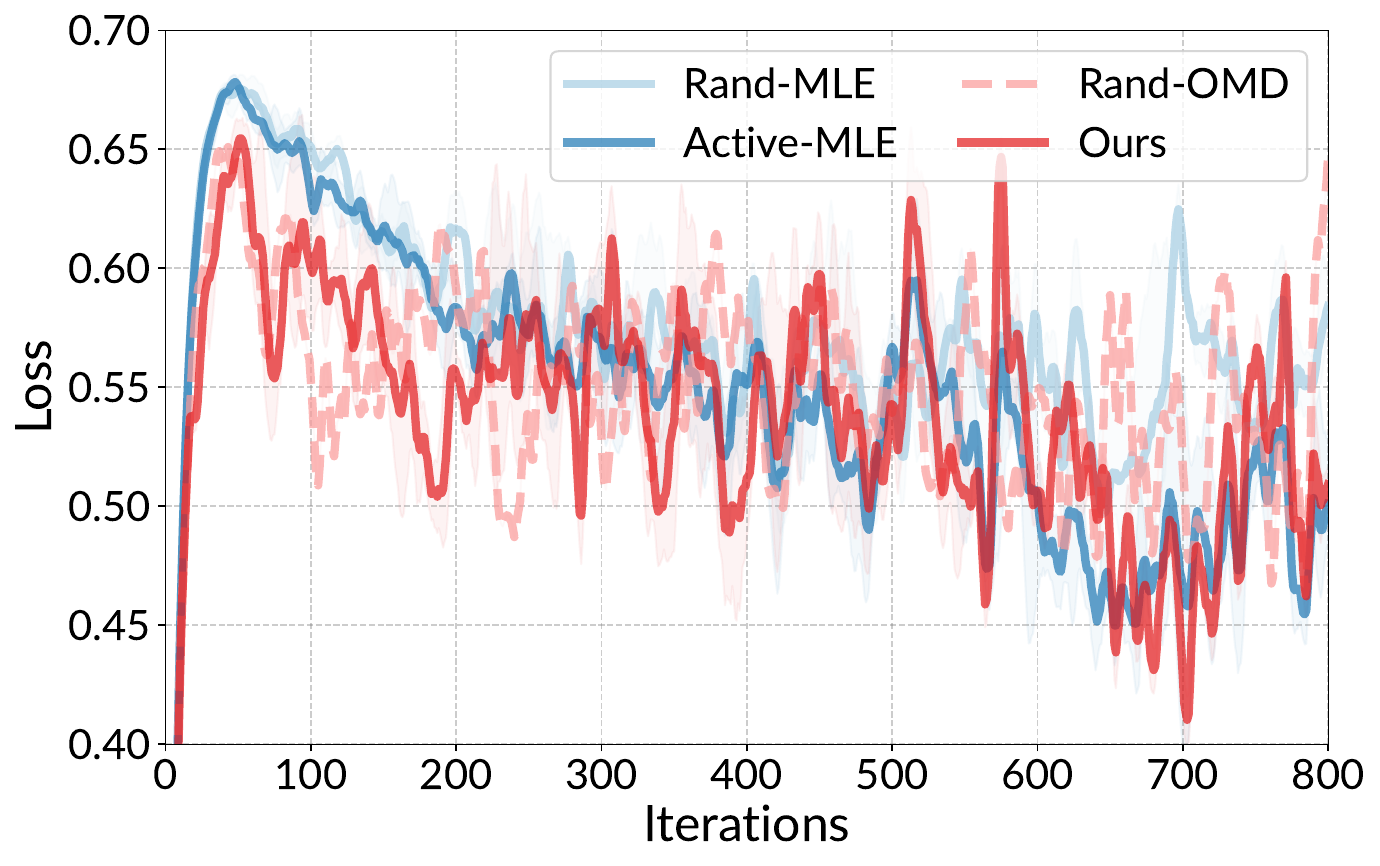}
        \label{fig:active_train_loss}}
        \hfill
        \subfigure[evaluation accuracy]{\includegraphics[width=0.48\columnwidth]{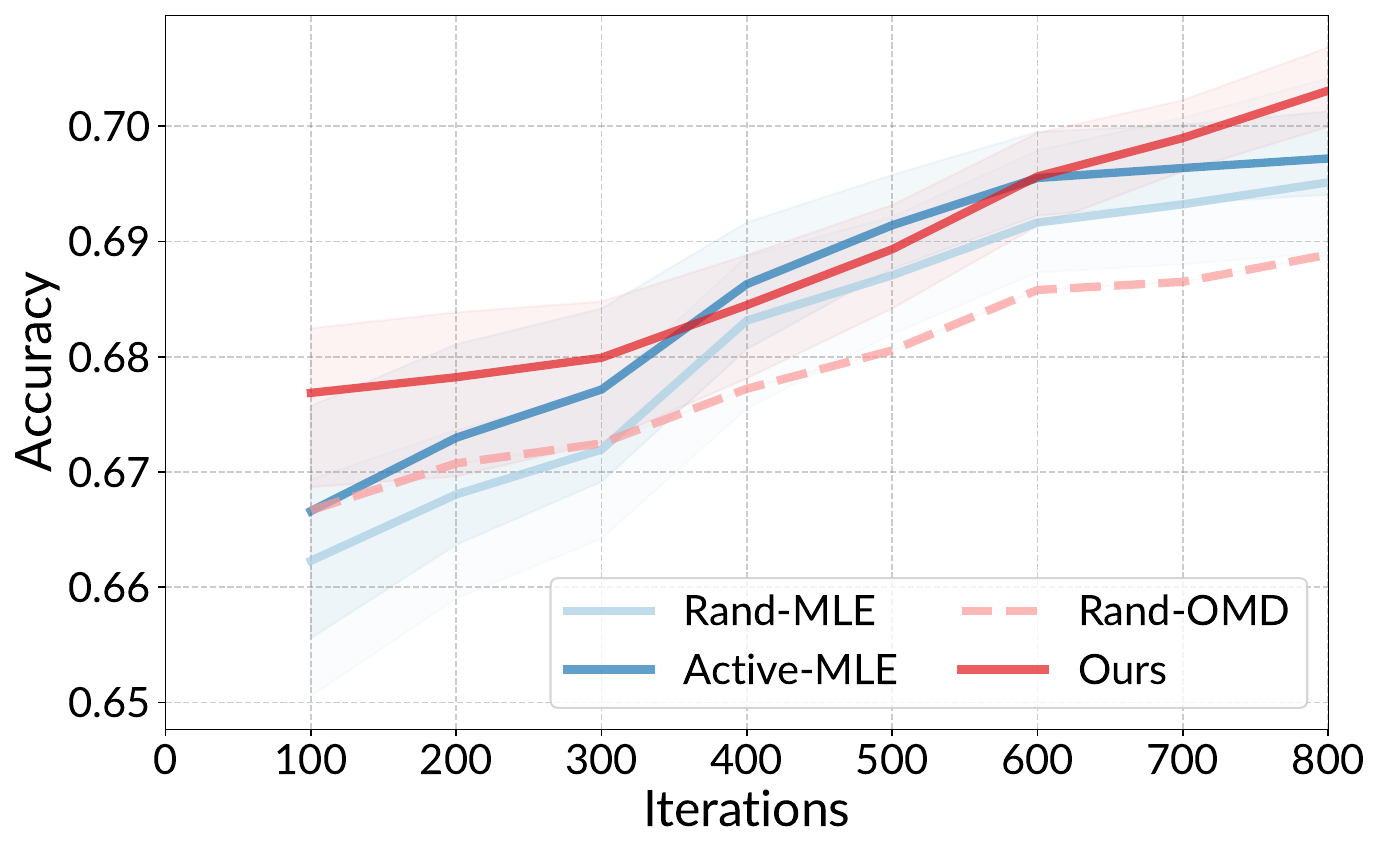}
        \label{fig:active_eval_acc}}
    \end{minipage}
    \hfill
    \begin{minipage}{0.4\textwidth}
        \centering
        \vspace{3mm}
        \subfigure[evaluation accuracy and training time]{
        \resizebox{0.95\textwidth}{!}{
        \begin{tabular}{cccc}
            \toprule
            Method & ACC (\%) & Time (s) \\
            \midrule
            Rand-MLE & 69.51$\pm$0.5 & 4876$\pm$47 \\
            Active-MLE & 69.82$\pm$0.4 & 4982$\pm$52 \\
            Rand-OMD & 68.97$\pm$0.6 & \textbf{1456$\pm$31} \\
            \textbf{Ours} & \textbf{70.43$\pm$0.3} & 1489$\pm$36 \\
            \bottomrule 
            \vspace{1mm}
        \end{tabular}
        }
        \label{tab:training_active_results}
        }
    \end{minipage}
    \vspace{-2mm}
    \caption{For online RLHF with active data collection, we report the comparison of different methods about (a) training loss, (b) evaluation accuracy and (c) final evaluation accuracy and training time.}
    \label{fig:active_results}
    \vspace{-3mm}
\end{figure}

In this section, we empirically evaluate the performance of our proposed method.~\footnotemark[1] We first describe the experimental setup, and then present the empirical results.

\subsection{Experiment setup}
In our experiments, we employ the \texttt{Llama-3-8B-Instruct} and \texttt{Qwen2.5-7B-Instruct} as the base model for reward model. We extract features $\phi(x,a)$ using the last layer of the model, and the dimension is $d=4096$. We use two datasets for evaluation. The first one is {Ultrafeedback-binarized dataset}, a pre-processed version of the original Ultrafeedback dataset~\citep{ICML'24:Cui-UltraFeedback}, a widely used benchmark for RLHF. It collects about $64, 000$ prompts from diverse resources, including question answering, summarization, and dialogue generation. Each data consists of a context $x$, two responses $a$ and $a'$, and a preference label $y$. We also employ a mixed dataset, {Mixture2} dataset~\citep{TMLR'24:Dong-RLHF}, which combines a variety of preference datasets, including HH-RLHF, SHP, UltraFeedback, Capybara, etc. The dataset follows the same format as the UltraFeedback-binarized dataset.

\footnotetext[1]{The code is available at \url{https://github.com/ZinYY/Online_RLHF}}

\subsection{Experimental results}
We present the experimental results for \texttt{Llama-3-8B-Instruct} on the Ultrafeedback dataset. Due to page limits, more detailed results including comparisons with Adam, DPO, full model updates, additional models of \texttt{Qwen2.5-7B-Instruct}, and Mixture2 dataset are deferred to appendix.

\textbf{Passive data collection.}~~We evaluate the performance of our proposed method in terms of the loss and accuracy of the reward model. We compare our OMD-based method with the MLE-based method. We randomly sample $T=30, 000$ data points from the Ultrafeedback dataset for training. Figure~\ref{fig:training_passive} shows the loss and accuracy vs. the number of training samples. We observe that our method converges faster to a lower loss and achieves a higher evaluation accuracy compared to baselines. The improvement is particularly pronounced in the small-sample regime ($T < 10, 000$), where our method achieves a higher evaluation accuracy with the same amount of samples compared to MLE which employs conventional stochastic gradient descent (SGD) updates. This shows the superior statistical efficiency of our approach, achieving a better performance with fewer training samples.

\textbf{Active data collection.}~~In this setting, we only allow the algorithm to select $6, 400$ samples out of the whole training datasets for training according to different selection strategies. To evaluate the effectiveness of the data selection strategy, we compare our method with the random selection strategy. We evaluate the performance of the MLE-based method and our proposed OMD-based method. Figure~\ref{fig:active_results} demonstrates that our OMD-based method achieves comparable performance with the MLE-based method for both data collection strategies, while improving the training time by approximately three times. Moreover, our data selection strategy outperforms the random selection strategy, showing that our method can effectively select informative data to improve the performance.

\textbf{Deployment-time adaptation.}~~We divide the dataset into 20 chunks and process them sequentially to simulate the deployment scenario. We compare our action selection strategy with (\romannumeral1): random selection, (\romannumeral2): select the best and second best actions, and (\romannumeral3): select the best and worst actions. We combine the above strategies with MLE-based and OMD-based methods. We report both the average cumulative rewards and win rates of each method, where the win rate is defined as the proportion of pairwise comparisons in which a method outperforms all others. As shown in Figure~\ref{fig:deployment}, our action selection strategy outperforms the baselines for both MLE-based and OMD-based methods. This validates the effectiveness of our selection strategy that balances the exploitation of high-reward responses with sufficient exploration to facilitate model improvement. Besides, the win rates show that our OMD-based method achieves competitive performance with the MLE-based method.

\begin{figure}[!t]
    \centering
    \subfigure[Rewards of MLE methods]{\includegraphics[width=0.3\columnwidth]{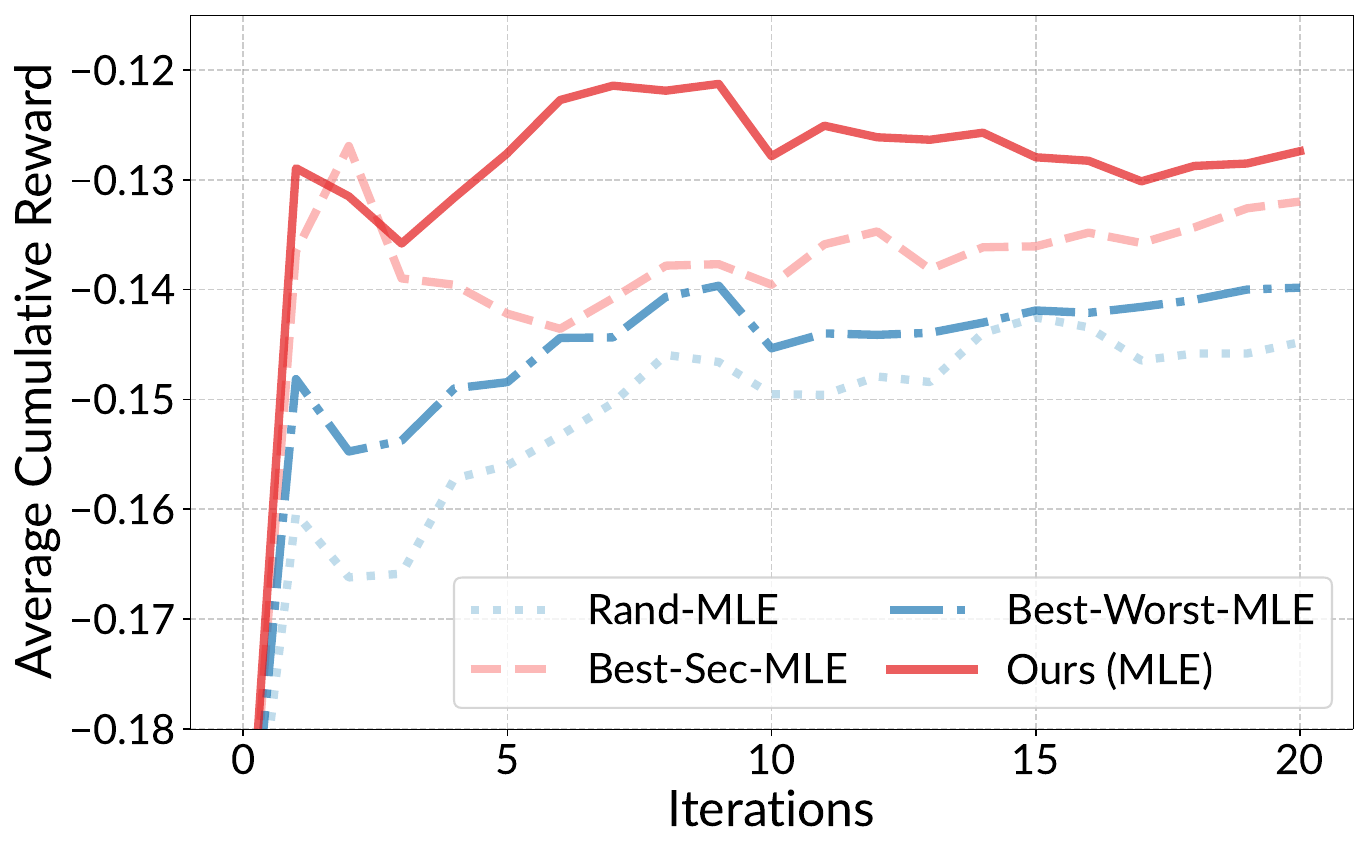}
    \label{fig:deployment_mle}}
    \hfill
    \subfigure[Rewards of OMD methods]{\includegraphics[width=0.3\columnwidth]{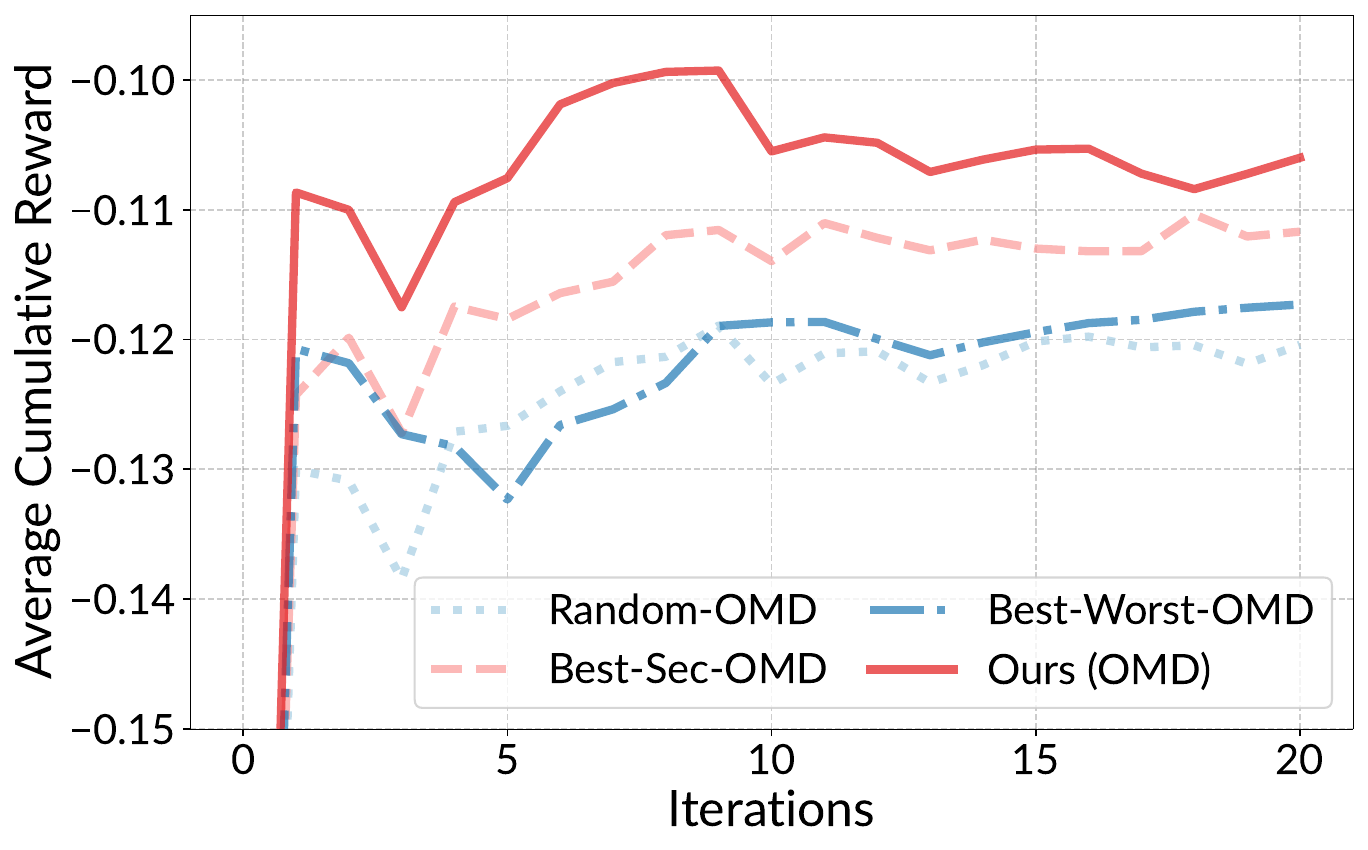}
    \label{fig:deployment_omd}}
    \hfill
    \subfigure[Win rates, all methods]{\includegraphics[width=0.306\columnwidth]{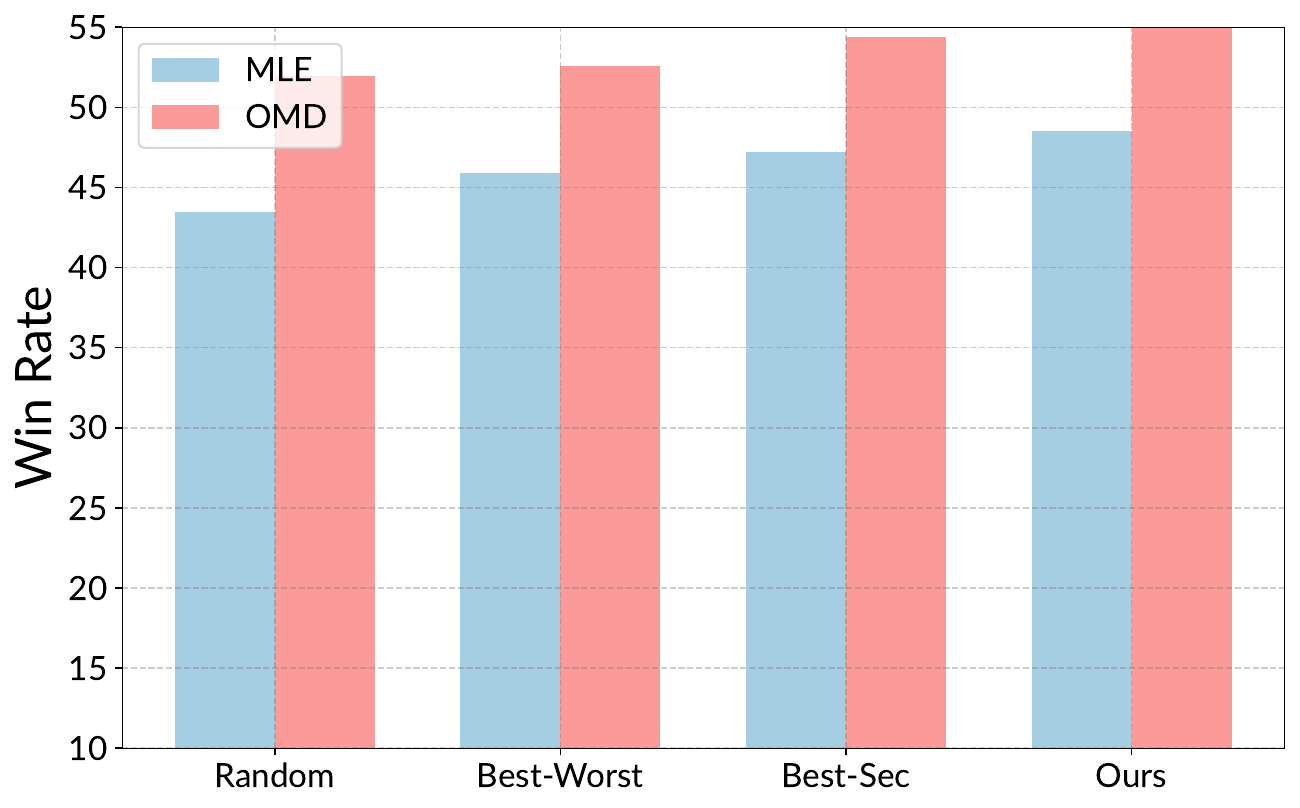}
    \label{fig:deployment_win_rate}}
    \vspace{-2mm}
    \caption{For online RLHF with deployment-time adaptation, we report (a) cumulative rewards of MLE-based methods, (b) cumulative rewards of OMD-based methods, and (c) win rates.}
    \vspace{-2mm}
    \label{fig:deployment}
\end{figure}

\section{Conclusion}
\label{sec:conclusion}

In this work, we address a key challenge in online RLHF, where the computational complexity typically grows linearly with the number of iterations. To overcome this limitation, we propose a novel one-pass algorithm that eliminates the need to store historical data and achieves constant-time complexity per iteration. Our approach is built upon the online mirror descent framework with a carefully designed local norm. We apply our method to three online RLHF settings and design tailored algorithms for each scenario. We provide both theoretical guarantees and efficient implementations, demonstrating that our approach improves statistical and computational efficiency over existing methods. Finally, we validate the effectiveness of our method through extensive experiments.

While our work advances both the statistical and computational understanding of online RLHF, several important directions remain for future exploration. First, we assume a fixed feature mapping for the reward model; however, in practice, this mapping may evolve throughout the training process. Analyzing the impact of such dynamically changing feature representations presents a compelling direction for future research. Second, although our analysis is based on the Bradley-Terry model, extending the framework to other preference models, such as the Plackett-Luce model~\citep{59:Luce-choice,75:Plackett-permutations}, is another promising avenue that may broaden the applicability of our results.

\newpage
\section*{Acknowledgments}
This research was supported by National Science and Technology Major Project (2022ZD0114800) and NSFC (U23A20382, 62206125). Peng Zhao was supported in part by the Xiaomi Foundation.

\bibliographystyle{plainnat}
\bibliography{rlhf}
\newpage
\appendix
\section{Useful Lemmas}
\begin{myLemma}
    \label{lem:estimation-error}
    For any $t \in [T]$, define the second-order approximation of the loss function $\ell_t(\theta)$ at the estimator $\thetat_{t}$ as $$\ellt_{t}(\theta) = \ell_{t}(\thetat_{t}) + \inner{\nabla \ell_{t}(\thetat_{t})}{\theta - \thetat_{t}} + \frac{1}{2} \norm{\theta - \thetat_{t}}_{H_{t}(\thetat_{t})}^2.$$ Then, for the following update rule
    \begin{align*}
        \thetat_{t+1} = \argmin_{\theta \in \Theta} \left\{\ellt_t(\theta) + \frac{1}{2\eta} \norm{\theta - \thetat_t}_{\H_t}^2\right\},
    \end{align*}
    it holds that
    \begin{align*}
        {}&\norm{\thetat_{t+1} - \theta^*}_{\H_{t+1}}^2 \\
        \leq {}& 2 \eta \left(\sum_{i=1}^t \ell_{i}(\theta^*) - \sum_{i=1}^t \ell_{i}(\thetat_{i+1}) \right) + 4 \lambda B^2                                    + 12 \sqrt{2} B L^3 \eta \sum_{i=1}^t \norm{\thetat_{i+1} - \thetat_{i}}_2^2 - \sum_{i=1}^t \norm{\thetat_{i+1} - \thetat_{i}}_{\H_{i}}^2.
    \end{align*}
\end{myLemma}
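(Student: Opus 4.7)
The update in the lemma is the standard OMD step with regularizer $\psi_t(\theta)=\tfrac{1}{2}\|\theta-\thetat_t\|_{\Ht_t}^2$, so my starting point is the first-order optimality condition: for every $\theta\in\Theta$,
\begin{align*}
\bigl\langle \eta g_t(\thetat_t)+\Ht_t(\thetat_{t+1}-\thetat_t),\,\theta-\thetat_{t+1}\bigr\rangle \geq 0.
\end{align*}
Plugging in $\theta=\theta^*$ and applying the three-point identity $2\langle A(x-y),z-x\rangle = \|z-y\|_A^2 - \|z-x\|_A^2 - \|x-y\|_A^2$ with $A=\Ht_t$, I obtain the canonical OMD inequality
\begin{align*}
2\eta\inner{g_t(\thetat_t)}{\thetat_{t+1}-\theta^*}
\;\leq\;
\|\theta^*-\thetat_t\|_{\Ht_t}^2-\|\theta^*-\thetat_{t+1}\|_{\Ht_t}^2-\|\thetat_{t+1}-\thetat_t\|_{\Ht_t}^2.
\end{align*}
This is the workhorse; the lemma's statement is what comes out after telescoping it, so the work is in (a) converting the left-hand side into loss differences and (b) converting $\Ht_t$-norms into $\H_{t+1}$-norms on the right.

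For (a), I split $g_t(\thetat_t)=g_t(\thetat_{t+1})+\bigl(g_t(\thetat_t)-g_t(\thetat_{t+1})\bigr)$. Convexity of $\ell_t$ gives $\inner{g_t(\thetat_{t+1})}{\thetat_{t+1}-\theta^*}\geq \ell_t(\thetat_{t+1})-\ell_t(\theta^*)$, which produces the $2\eta\sum_i(\ell_i(\theta^*)-\ell_i(\thetat_{i+1}))$ term after rearranging and summing. The residual inner product is handled by the mean-value identity $g_t(\thetat_{t+1})-g_t(\thetat_t)=\dot\sigma(z_t^\top\zeta_t)\,z_t z_t^\top(\thetat_{t+1}-\thetat_t)$ for some $\zeta_t$ on the segment, and then pseudo-self-concordance of the logistic link, $\dot\sigma(u)\leq \dot\sigma(v)\exp(|u-v|)$, lets me replace $\dot\sigma(z_t^\top\zeta_t)$ by a constant multiple of $\dot\sigma(z_t^\top\thetat_{t+1})$ at the price of a factor $\exp(L\|\thetat_{t+1}-\thetat_t\|_2)\leq \exp(2BL)$ (bounded by constants absorbed into the step-size choice $\eta$). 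Applying Cauchy--Schwarz in the $H_t(\thetat_{t+1})$ inner product and Young's inequality splits this cross term into a piece absorbable by $\|\thetat_{t+1}-\theta^*\|_{H_t(\thetat_{t+1})}^2$ on the left and a piece of the form $CBL^3\eta\|\thetat_{t+1}-\thetat_t\|_2^2$ on the right; the latter is exactly what becomes the $12\sqrt{2}BL^3\eta\sum_i\|\thetat_{i+1}-\thetat_i\|_2^2$ term in the bound.

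For (b), I exploit $\Ht_t=\H_t+\eta H_t(\thetat_t)$ and $\H_{t+1}=\H_t+H_t(\thetat_{t+1})$. The $\|\theta^*-\thetat_{t+1}\|_{\H_t}^2$ portion of $\|\theta^*-\thetat_{t+1}\|_{\Ht_t}^2$ becomes $\|\theta^*-\thetat_{t+1}\|_{\H_{t+1}}^2 - \|\theta^*-\thetat_{t+1}\|_{H_t(\thetat_{t+1})}^2$, so the negative quadratic at $\thetat_{t+1}$ can absorb the Young's-inequality residual from step (a). Symmetrically, $\|\theta^*-\thetat_t\|_{\Ht_t}^2 = \|\theta^*-\thetat_t\|_{\H_t}^2+\eta\|\theta^*-\thetat_t\|_{H_t(\thetat_t)}^2$ telescopes with the previous step's trailing term (again using self-concordance to line up $H_t(\thetat_t)$ with $H_t(\thetat_{t+1})$), leaving only the initial $\|\theta^*-\thetat_1\|_{\H_1}^2=\lambda\|\theta^*\|_2^2\leq\lambda B^2$ --- which (together with the worst-case mismatch factor from the self-concordant swap) yields the $4\lambda B^2$ constant. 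Finally, $\Ht_t\succeq\H_t$ so the $-\|\thetat_{t+1}-\thetat_t\|_{\Ht_t}^2$ path term dominates $-\|\thetat_{t+1}-\thetat_t\|_{\H_t}^2$ and I retain it as $-\sum_i\|\thetat_{i+1}-\thetat_i\|_{\H_i}^2$.

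\textbf{Main obstacle.} The delicate part is the self-concordance bookkeeping in step (b): I need the $H_t(\thetat_t)$ appearing in $\Ht_t$ to telescope cleanly with $H_t(\thetat_{t+1})$ appearing in $\H_{t+1}$, and any multiplicative slack from the $\exp(|z_t^\top(\thetat_{t+1}-\thetat_t)|)$ factor has to be shown to be bounded by $\eta$ (which is why the step size is fixed to $\eta = \tfrac{1}{2}\log 2 + BL+1$) rather than to grow with $t$. Getting the numerical constants $4$ and $12\sqrt{2}$ to line up requires careful Young's-inequality splits with the right trade-off parameter, and controlling the gradient tail $\|g_t(\thetat_t)\|_2\leq 2L$ together with $\|\thetat_{t+1}-\thetat_t\|\leq 2B$ is what pins down the absolute constants.
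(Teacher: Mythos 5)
Your overall architecture (OMD regret inequality, convert the linear term to loss differences via convexity, control the first-order/second-order mismatch by self-concordance, telescope) matches the paper's, but there is a genuine gap at the single most important step: where the negative quadratic $-\norm{\thetat_{i+1}-\theta^*}_{H_i(\thetat_{i+1})}^2$ comes from. This term is indispensable — it is exactly what upgrades $-\norm{\theta^*-\thetat_{i+1}}_{\H_i}^2$ to $-\norm{\theta^*-\thetat_{i+1}}_{\H_{i+1}}^2$ so the telescope closes in the $\H_{t+1}$-norm, and in your plan it must additionally absorb the Young residual from the cross term $\inner{g_t(\thetat_t)-g_t(\thetat_{t+1})}{\thetat_{t+1}-\theta^*}$. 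Plain convexity of $\ell_t$ supplies no such term. The paper gets it from Lemma~\ref{lem:strongly-convex}, a \emph{relative strong convexity} inequality for the log-loss, $\ell_i(\thetat_{i+1})-\ell_i(\theta^*)\le \inner{\nabla\ell_i(\thetat_{i+1})}{\thetat_{i+1}-\theta^*}-\frac{1}{\zeta}\norm{\thetat_{i+1}-\theta^*}_{\nabla^2\ell_i(\thetat_{i+1})}^2$ with $\zeta=\log 2+2(BL+1)$; this polynomial-in-$BL$ constant is precisely what fixes $\eta=\zeta/2$. Your substitute — harvesting the $\eta H_t(\thetat_t)$ component of $\Ht_t$ and swapping $H_t(\thetat_t)$ for $H_t(\thetat_{t+1})$ by pseudo-self-concordance — does not work quantitatively: the swap costs a factor $\exp(\abs{z_t^\top(\thetat_{t+1}-\thetat_t)})$, which in the worst case is $e^{4BL}$, and the chosen $\eta=\O(BL)$ cannot dominate it. Moreover, expanding $\norm{\theta^*-\thetat_t}_{\Ht_t}^2=\norm{\theta^*-\thetat_t}_{\H_t}^2+\eta\norm{\theta^*-\thetat_t}_{H_t(\thetat_t)}^2$ leaves a \emph{positive} term that does not telescope against anything from the previous round (the Hessians at different rounds involve different $z_t$'s) and can contribute $\Omega(\eta L^2B^2)$ per step.

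The paper's proof avoids both problems by working with the implicit-OMD form of the update: by Lemma~\ref{lem:implicit-omd} the regret inequality holds with the gradient of the quadratic surrogate at the \emph{new} iterate, $\nabla\ellt_i(\thetat_{i+1})$, and local norm $\H_i$ (not $\Ht_i$), so no spurious $\eta H_i(\thetat_i)$ terms appear. The resulting mismatch $\inner{\nabla\ell_i(\thetat_{i+1})-\nabla\ellt_i(\thetat_{i+1})}{\thetat_{i+1}-\theta^*}$ is a pure Taylor remainder, $\inner{D^3\ell_i(\xi_{i+1})[\thetat_{i+1}-\thetat_i](\thetat_{i+1}-\thetat_i)}{\thetat_{i+1}-\theta^*}$, which the $3\sqrt2 L$-self-concordant-like property bounds directly by $6\sqrt{2}BL^3\norm{\thetat_{i+1}-\thetat_i}_2^2$ — no Young split, hence no positive $\norm{\theta^*-\thetat_{i+1}}_{H}^2$ residual to absorb. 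If you want to salvage your route, you must replace plain convexity by the relative-strong-convexity inequality (or an equivalent) and rerun the bookkeeping from there; as written, the argument cannot produce the stated bound with $\eta$ polynomial in $BL$.
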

\begin{proof}
    Based on the analysis of (implicit) OMD update (see Lemma~\ref{lem:implicit-omd}), for any $i \in [T]$, we have
    \begin{align*}
        \big\langle\nabla \tilde{\ell}_{i} (\thetat_{i+1}), \thetat_{i+1}-\theta^*\big\rangle \leqslant \frac{1}{2 \eta}\left(\|\thetat_{i}-\theta^*\|_{\H_{i}}^2-\|\thetat_{i+1}-\theta^*\|_{\H_{i}}^2-\|\thetat_{i+1}-\thetat_{i}\|_{\H_i}^2\right)
    \end{align*} 
    According to Lemma~\ref{lem:strongly-convex}, we have
    \begin{align*}
        \ell_i(\thetat_{i+1})-\ell_i\left(\theta^*\right) \leqslant\big\langle\nabla \ell_i(\thetat_{i+1}), \thetat_{i+1}-\theta^*\big\rangle-\frac{1}{\zeta}\big\|\thetat_{i+1}-\theta^*\big\|_{\nabla^2 \ell_i(\thetat_{i+1})}^2,
    \end{align*}  
    where $\zeta=\log 2+2(LB+1)$. Then, by combining the above two inequalities, we have
    \begin{align*}
        \ell_i(\thetat_{i+1})-\ell_i(\theta^*) & \leqslant\langle\nabla \ell_i(\thetat_{i+1})-\nabla \tilde{\ell}_i(\thetat_{i+1}), \thetat_{i+1}-\theta^*\rangle  \\
        &\qquad +\frac{1}{\zeta} \Big(\|\thetat_i-\theta^*\|_{\H_i}^2-\|\thetat_{i+1}-\theta^*\|_{\H_{i+1}}^2-\|\thetat_{i+1}-\thetat_i\|_{\H_i}^2 \Big).
    \end{align*}
    We can further bound the first term of the right-hand side as:
    \begin{align*}
         \big \langle\nabla \ell_i(\thetat_{i+1})-\nabla \tilde{\ell}_i(\thetat_{i+1}), \thetat_{i+1}-\theta^*\big \rangle = \ &\big \langle\nabla \ell_i(\thetat_{i+1})-\nabla \ell_i(\thetat_i)-\nabla^2 \ell_i(\thetat_i)(\thetat_{i+1}-\thetat_i), \thetat_{i+1}-\theta^*\big \rangle \\
        = \ & \big \langle D^3 \ell_i({\xi}_{i+1})[\thetat_{i+1}-\thetat_i](\thetat_{i+1}-\thetat_i), \thetat_{i+1}-\theta^*\big \rangle \\
        \leqslant \ & 3\sqrt{2} L\big \|\thetat_{i+1}-\theta^*\big\|_2 \big \|\thetat_{i+1}-\thetat_i\big\|_{\nabla^2 \ell_i({\xi}_{i+1})}^2 \\
        \leqslant \ & 6\sqrt{2} BL \big \|\thetat_{i+1}-\thetat_i\big\|_{\nabla^2 \ell_i({\xi}_{i+1})}^2 \\
        \leqslant \ & 6\sqrt{2} BL^3 \big \|\thetat_{i+1}-\thetat_i\big\|_2^2,
    \end{align*}
    where the second equality holds by the mean value theorem, the first inequality holds by the self-concordant-like property of $\ell_i(\cdot)$ in Lemma~\ref{lem:self-concordant}, and the last inequality holds by $\thetat_{i+1}$ and $\theta^*$ belong to $\Theta = \{\theta \in \R^d, \norm{\theta}_2 \leq B\}$, and ${\nabla^2 \ell_i({\xi}_{i+1})} \preceq L^2 I_d$.

    Then, by taking the summation over $i$ and rearranging the terms, we obtain
    \begin{align*}
        & \big\|\thetat_{t+1}-\theta^*\big\|_{\H_{t+1}}^2 \\
        \leqslant \ & \zeta\sum_{i=1}^t \left(\ell_{i}\left(\theta^*\right)- \ell_{i}(\thetat_{i+1})\right)+\|\thetat_{1}-\theta^*\|_{\H_{1}}^2 +6\sqrt{2} BL^3 \zeta \sum_{i=1}^t\big\|\thetat_{i+1}-\thetat_i\big\|_2^2 -\sum_{i=1}^t\big\|\thetat_{i+1}-\thetat_i\big\|_{\H_i}^2 \\
        \leqslant \ & \zeta\sum_{i=1}^t \left(\ell_{i}\left(\theta^*\right)- \ell_{i}\big(\thetat_{i+1}\big)\right)+4 \lambda B^2  +6\sqrt{2}BL^3 \zeta \sum_{i=1}^t\big\|\thetat_{i+1}-\thetat_i\big\|_2^2 -\sum_{i=1}^t\big\|\thetat_{i+1}-\thetat_i\big\|_{\H_i}^2,
    \end{align*}
    where the last inequality is by $\|\thetat_{1}-\theta^*\|_{\H_{1}}^2 \leq \lambda \|\thetat_{1}-\theta^*\|_2^2 \leq 4 \lambda B^2$. Set $\zeta = 2\eta$ ends the proof. 
\end{proof}

\section{Proof of Lemma~\ref{lem:confidence_set}}
\begin{proof}
    Based on Lemma~\ref{lem:estimation-error}, we have  
    \begin{align*}
        {}&\norm{\thetat_{t+1} - \theta^*}_{\H_{t+1}}^2 \\
        \leq {}& 2 \eta \left(\sum_{i=1}^t \ell_{i}(\theta^*) - \sum_{i=1}^t \ell_{i}(\thetat_{i+1}) \right) + 4 \lambda B^2                                    + 12 \sqrt{2} B L^3 \eta \sum_{i=1}^t \norm{\thetat_{i+1} - \thetat_{i}}_2^2 - \sum_{i=1}^t \norm{\thetat_{i+1} - \thetat_{i}}_{\H_{i}}^2.
    \end{align*}
    It remains to bound the right-hand side of the above inequality in the following. The most challenging part is to bound the term $\sum_{i=1}^t \ell_{i}(\theta^*) - \sum_{i=1}^t \ell_{i}(\thetat_{i+1})$. This term might seem straightforward to control, as it can be observed that $\theta^* = \argmin_{\theta \in \R^d} \bar{\ell}(\theta) \triangleq \E_{y_{i}}[\ell_{i}(\theta)]$, where $\ell_{i}(\theta)$ serves as an empirical observation of $\bar{\ell}(\theta)$. Consequently, the loss gap term seemingly can be bounded using appropriate concentration results. However, a caveat lies in the fact that the update of the estimator $\tilde{\theta}_{i+1}$ depends on $\ell_{i}$, or more precisely $y_{i}$, making it difficult to directly apply such concentrations.  

    To address this issue, following the analysis in \citet{NeurIPS'23:MLogB}, we decompose the loss gap into two components by introducing an intermediate term. Specifically, we define the softmax function as 
    \begin{align*}
        [\sigma_i(q)]_1 = \frac{\exp(q)}{1+\exp(q)} \quad \text{and} \quad [\sigma_i(q)]_0 = \frac{1}{1+\exp(q)},
    \end{align*}
    where $[\cdot]_i$ denotes the $i$-th element of the vector. Then, the loss function $\ell_{i}(\theta)$ can be rewritten as
    \begin{align*}
        \ell(q_t, y_t) = -\ind{y_t = 1} \cdot \log \left({[\sigma(q_t)]_1}\right) -\ind{y_t = 0} \cdot \log \left({[\sigma(q_t)]_0}\right).
    \end{align*}
    Then, we define the pseudo-inverse function of $\sigma^{-1}(p)$ with 
    \begin{align*}
        [\sigma^{-1}(p)]_1 = \log({q}/({1-q})) \quad \text{and} \quad [\sigma^{-1}(p)]_0 = \log((1-p)/{p}).
    \end{align*}
     Then, we decompose the regret into two terms by introducing an intermediate term.
    \begin{align*}
        \sum_{i=1}^t \ell_{i}\left(\theta^*\right)-\sum_{i=1}^t \ell_{i}(\thetat_{i+1}) = & \underbrace{\sum_{i=1}^t \ell_{i}\left(\theta^*\right)-\sum_{i=1}^t \ell_{i}(q_i, y_i)}_{\term a} + \underbrace{\sum_{i=1}^t \ell_{i}(q_i, y_i) - \sum_{i=1}^t \ell_{i}(\thetat_{i+1})}_{\term b}
    \end{align*}
    where $q_i$ is an aggregating forecaster for logistic loss defined by $q_i = \sigma^{-1}(\E_{\theta \sim P_i}[\sigma(\theta^\top z_i)])$ and $P_i=\N(\thetat_i, (1+c\H_i^{-1}))$  is the Gaussian distribution with mean $\thetat_i$ and covariance $(1+c\H_i^{-1})$, where $c>0$ is a constant to be specified later. It remains to bound the terms $\term a$ and $\term b$, which were initially analyzed in \citet{NeurIPS'23:MLogB} and further refined by \citet{NeurIPS'24:Lee-Optimal-MNL}. Specifically, using Lemmas F.2 and F.3 in \citet{NeurIPS'24:Lee-Optimal-MNL}, we can bound them as follows.

    For $\term a$, let $\delta \in (0, 1)$ and $\lambda \geq 1$. With probability at least $1-\delta$, for all $t \in [T]$, we have
    \begin{align*}
        & \term a \leq 11 \cdot (3 \log (1+2t) + 2 + LB) \log \left(\frac{2 \sqrt{1+2 t}}{\delta}\right) + 2.
    \end{align*}
    For $\term b$, let $\lambda \geq \max \{2, 72cd\}$. Then, for all $t \in [T]$, we have
    \begin{align*}
        \term b \leq \frac{1}{2 c} \sum_{i=1}^t\Big\|\thetat_{i+1}-\thetat_{i}\Big\|_{\H_i}^2+\sqrt{6} c d \log \left(1+\frac{2 t B^2}{d \lambda}\right)
    \end{align*}
    Combing the above two bounds, we have
    \begin{align*}
        \big\|\thetat_{t+1}-\theta^*\big\|_{\H_{t+1}}^2  &\leq 12 \sqrt{2} BL^3 \eta \sum_{i=1}^t\left\|\thetat_{i+1}-\thetat_i\right\|_2^2+\left(\frac{\eta}{c}-1\right) \sum_{i=1}^t\left\|\thetat_{i+1}-\thetat_i\right\|_{\H_i}^2 + C.
    \end{align*}
    where $C=22\eta (3 \log (1+2t) + 2 + LB) \log \left(\frac{2 \sqrt{1+2 t}}{\delta}\right) +4 \eta+2 \eta \sqrt{6} c d \log \left(1+\frac{2 t L^2}{d \lambda}\right)+4 \lambda B^2$. Setting $c=7 \eta / 6$ and $\lambda \geq 84 \sqrt{2} BL^3 \eta$, we have
    \begin{align*}
        & 12 \sqrt{2} BL^3 \eta \sum_{i=1}^t\left\|\thetat_{i+1}-\thetat_i\right\|_2^2+\left(\frac{\eta}{c}-1\right) \sum_{i=1}^t\left\|\thetat_{i+1}-\thetat_i\right\|_{\H_i}^2 \\
        \leq \ & \left(12 \sqrt{2} BL^3 \eta - \frac{\lambda}{7}\right) \sum_{i=1}^t\left\|\thetat_{i+1}-\thetat_i\right\|_2^2 \\
        \leq \ & 0.
    \end{align*}
    Note that $84 \sqrt{2}\left(BL^3+d L^2\right) \eta \geq \max \left\{2 L^2, 72 c d L^2, 84 \sqrt{2} BL^3 \eta\right\}$, so we set $\lambda \geq 84 \sqrt{2}\left(BL^3+\right.$ $\left.d L^2\right) \eta$. As we have $\eta=(1 / 2) \log 2+\left(BL+1\right)$, we have
    \begin{align*}
        \big\|\thetat_{t+1}-\theta^*\big\|_{\H_{t+1}} \leq \O \Big(\sqrt{d} \log (t / \delta) \Big).
    \end{align*}
    This finishes the proof.
\end{proof}

\section{Proof of Theorem~\ref{thm:passive}}
\begin{proof}
    Define $J(\pi) = \mathbb{E}_{x \sim \rho} [r\left(x, \pi(x)\right)]$, we have
    \begin{align*}
        \subopt\left(\pi_T\right)= \left(J\left(\pi^{*}\right)-\Jt\left(\pi^*\right)\right)+\left(\Jt\left(\pi^*\right)-\Jt\left(\pi_T\right)\right)+\left(\Jt\left(\pi_T\right)-J\left(\pi_T\right)\right).
    \end{align*}
    Since $\pi_T$ is the optimal policy under expected value $\Jt(\pi)$, i.e., $\Jt(\pi_T) = \max_{\pi \in \Pi} \Jt(\pi)$, we have
    \begin{align}
    \label{eq:passive-1}
      \Jt\left(\pi^*\right)-\Jt\left(\pi_T\right) \leq 0
    \end{align}
    For the third term, we have with probability at least $1-\delta$, it holds that
    \begin{align}
    \label{eq:passive-2}
      \Jt\left(\pi_T\right)-J\left(\pi_T\right) = \min_{\theta \in \C_T} \mathbb{E}_{x \sim \rho}\left[\theta^{\top} \phi(s, \pi_T(s))\right]-\mathbb{E}_{x \sim \rho}\left[\theta^{* \top} \phi(s, \pi_T(s))\right] \leq 0,
    \end{align}
    where the last inequality holds by $\theta^* \in \C_T$ with probability at least $1-\delta$.

    For the first term, we have with probability at least $1-\delta$, it holds that
    \begin{align*}
      & J\left(\pi^{*}\right)-\Jt\left(\pi^*\right) \nonumber\\
    = \ & \mathbb{E}_{x \sim \rho}\left[(\theta^*)^{\top}\phi(s, \pi^*(s))\right]- \min_{\theta \in \C_T} \mathbb{E}_{x \sim \rho}\left[\theta^{\top}\phi(s, \pi^*(s))\right] \\
    = \ & \sup_{\theta \in \C_T} \mathbb{E}_{x \sim \rho}\left[\left(\theta^*-\thetat_T+\thetat_T-\theta\right)^{\top}\phi(x,\pi^*(x))\right] \\
    = \ & \mathbb{E}_{x \sim \rho}\left[\left(\theta^*-\thetat_T\right)^{\top}\phi(x,\pi^*(x))\right]+\sup_{\theta \in \C_T} \mathbb{E}_{x \sim \rho}\left[\left(\thetat_T-\theta\right)^{\top}\phi(x,\pi^*(x))\right] \\
    \leq \ & \Big(\norm{\theta^*-\thetat_T}_{\H_T} + \sup_{\theta \in \C_T}\norm{\theta-\thetat_T}_{\H_T}\Big) \cdot \bignorm{\mathbb{E}_{x \sim \rho} [\phi(x,\pi^*(x))]}_{\H_T^{-1}},
    \end{align*}
    where the first inequality holds by the Cauchy-Schwarz inequality. 
    
    Since it holds $\theta^* \in \C_T$ with probability at least $1-\delta$ by Lemma~\ref{lem:confidence_set}, we have $\norm{\theta^*-\thetat_T}_{\H_T} \leq \betat_T$ and $\sup_{\theta \in \C_T}\norm{\theta-\thetat_T}_{\H_T}  \leq \betat_T$. Thus, we obtain
    \begin{align}
        \label{eq:passive-3}
        J\left(\pi^{*}\right)-\Jt\left(\pi^*\right) \leq \ & 2 \betat_T \cdot \bignorm{\mathbb{E}_{x \sim \rho} [\phi(x,\pi^*(x))]}_{\H_T^{-1}} .
    \end{align}
    
    Combining Eq.~\eqref{eq:passive-1}, Eq.~\eqref{eq:passive-2}, and Eq.~\eqref{eq:passive-3} and substituting $\betat_T = \O(\sqrt{d}(\log (T / \delta))^2)$, we have with probability at least $1-\delta$, it holds that
    \begin{align*}
        \subopt\left(\pi_T\right) & \leq 2 \betat_T \cdot \bignorm{\mathbb{E}_{x \sim \rho} [\phi(x,\pi^*(x))]}_{\H_T^{-1}} \\
        & \leq \O\left(\sqrt{d}\left(\log \frac{T}{\delta}\right)^2 \cdot \bignorm{\mathbb{E}_{x \sim \rho} [\phi(x,\pi^*(x))]}_{\H_T^{-1}}\right).
    \end{align*}
    This completes the proof.
  \end{proof}

\section{Proof of Theorem~\ref{thm:active}} 
\begin{proof}
    Let the sub-optimality gap for a context $x \in \mathcal{X}$ be denoted as $\subopt(x)$. Thus, for any $\delta \in (0, 1)$, with probability at least $1-\delta$, we have
   \begin{align*}
    \subopt(x) & =\left(\phi\left(x, \pi^*(x)\right)-\phi\left(x, \pi_T(x)\right)\right)^{\top} \theta^* \\
  & \leq \left(\phi\left(x, \pi^*(x)\right)-\phi\left(x, \pi_T(x)\right)\right)^{\top} \theta^*+\left(\phi\left(x, \pi_T(x)\right)-\phi\left(x, \pi^*(x)\right)\right)^{\top}\left(\frac{1}{T} \sum_{t=1}^T \thetat_t\right) \\
  & =\left(\phi\left(x, \pi^*(x)\right)-\phi\left(x, \pi_T(x)\right)\right)^{\top}\left(\theta^*-\frac{1}{T} \sum_{t=1}^T \thetat_t\right) \\
  & =\frac{1}{T} \sum_{t=1}^T\left(\phi\left(x, \pi^*(x)\right)-\phi\left(x, \pi_T(x)\right)\right)^{\top}\left(\theta^*-\thetat_t\right) \\
  & \leq \frac{1}{T} \sum_{t=1}^T\left\|\phi\left(x, \pi^*(x)\right)-\phi\left(x, \pi_T(x)\right)\right\|_{\H_t^{-1}}\left\|\theta^*-\thetat_t\right\|_{\H_t}\\
  & \leq \frac{\betat_T}{T} \sum_{t=1}^T\left\|\phi\left(x, \pi^*(x)\right)-\phi\left(x, \pi_T(x)\right)\right\|_{\H_t^{-1}},
   \end{align*}
   where the first inequality is due to the fact that $\left(\phi\left(x, \pi_T(x)\right)-\phi\left(x, \pi^*(x)\right)\right)^{\top}\left(\frac{1}{T} \sum_{t=1}^T \thetat_t\right)\geq 0$ by the design of $\pi_T(x)$, the second is due to the Cauchy-Schwarz inequality, and the last inequality is due to $\|\theta^*-\thetat_t\|_{\H_t} \leq \beta_T$ with probability at least $1-\delta$ by Lemma~\ref{lem:confidence_set}.

   By our algorithm's choice $(x_t, a_t, a_t^{\prime}) = \argmax_{x \in \mathcal{X}, a, a^{\prime} \in \mathcal{A}}\left\|\phi(x, a)-\phi\left(x, a^{\prime}\right)\right\|_{\H_t^{-1}}$, we have
   \begin{align*}
    \sum_{t=1}^T\left\|\phi\left(x, \pi^*(x)\right)-\phi\left(x, \pi_T(x)\right)\right\|_{\H_t^{-1}} \leq \sum_{t=1}^T\left\|\phi\left(x_t, a_t\right)-\phi\left(x_t, a_t^{\prime}\right)\right\|_{\H_t^{-1}} = \sum_{t=1}^T\left\|z_t\right\|_{\H_t^{-1}}.
   \end{align*}
   Furthermore, by the definition of $\H_t$, we have
   \begin{align*}
    \H_t = \lambda I_d + \sum_{s=1}^{t-1} \dot{\sigma}\left(z_s^{\top} \thetat_{s+1}\right) z_s z_s^{\top} \geq \lambda I_d + \frac{1}{\kappa} \sum_{s=1}^{t-1} z_s z_s^{\top} = \frac{1}{\kappa}\left(\kappa \lambda I_d + \sum_{s=1}^{t-1} z_s z_s^{\top}\right) = \frac{1}{\kappa} V_t.
    \end{align*}
    Thus, we have
    \begin{align*}
      \sum_{t=1}^T\left\|z_t\right\|_{\H_t^{-1}} \leq \sqrt{\kappa} \sum_{t=1}^T\left\|z_t\right\|_{V_t^{-1}} \leq \sqrt{\kappa} \sqrt{T \sum_{t=1}^T\left\|z_t\right\|^2_{V_t^{-1}}} \leq \sqrt{2 \kappa d T \log \left(1+\frac{4TL^2}{\lambda \kappa d}\right)},
    \end{align*}
    where the first inequality holds by the fact that $\H_t \succeq \frac{1}{\kappa} V_t$, the second inequality holds by the Cauchy-Schwarz inequality, and the last inequality holds by the elliptic potential lemma in Lemma~\ref{lem:elliptic-potential}. Thus, we have for any context $x \in \mathcal{X}$,
    \begin{align*}
        \subopt(x) \leq \frac{\betat_T}{T} \sqrt{2 \kappa d T \log \left(1+\frac{4TL^2}{\lambda \kappa d}\right)}.
    \end{align*}
    By the definition of $\subopt(\pi_T)$, we have with probability at least $1-\delta$,
    \begin{align*}
        \subopt\left(\pi_T\right) = \mathbb{E}_{x \sim \rho} \left[\subopt(x)\right] \leq \frac{\betat_T}{T} \sqrt{2 \kappa d T \log \left(1+\frac{T}{\lambda \kappa d}\right)} \leq \Ot\left(d \sqrt{\frac{\kappa}{T}}\right).
    \end{align*}
    This finishes the proof.
\end{proof}

\section{Proof of Theorem~\ref{thm:deploy}}
\begin{proof}
    We first analyze the instantaneous regret at round $t$. For any $\delta \in (0, 1)$, with probability at least $1-\delta$, it holds that
    \begin{align*}
        & \big(r(x_t, \pi^*(x_t)) - r(x_t, a_t)\big) + \big(r(x_t, \pi^*(x_t)) - r(x_t, a_t')\big) \\
        = \ &\big(\phi(x_t, \pi^*(x_t)) - \phi(x_t, a_t)\big)^{\top} \theta^* + \big(\phi(x_t, \pi^*(x_t)) - \phi(x_t, a_t')\big)^{\top} \theta^* \\
        = \ & 2 \big(\phi(x_t, \pi^*(x_t)) - \phi(x_t, a_t)\big)^{\top} \theta^* + \big(\phi(x_t, a_t) - \phi(x_t, a_t')\big)^{\top} \theta^* \\
        = \ & 2 \big(\phi(x_t, \pi^*(x_t)) - \phi(x_t, a_t)\big)^{\top} (\theta^* - \thetat_t) + 2 \big(\phi(x_t, \pi^*(x_t)) - \phi(x_t, a_t)\big)^{\top} \thetat_t \\
        & + \big(\phi(x_t, a_t) - \phi(x_t, a_t')\big)^{\top} (\theta^* - \thetat_t) + \big(\phi(x_t, a_t) - \phi(x_t, a_t')\big)^{\top} \thetat_t\\
        \leq \ & 2 \bignorm{\phi(x_t, \pi^*(x_t)) - \phi(x_t, a_t)}_{\H_t^{-1}} \bignorm{\theta^* - \thetat_t}_{\H_t} + \big(\phi(x_t, \pi^*(x_t)) - \phi(x_t, a_t)\big)^{\top} \thetat_t \\
        & + \bignorm{\phi(x_t, a_t) - \phi(x_t, a_t')}_{\H_t^{-1}} \bignorm{\theta^* - \thetat_t}_{\H_t} + \big(\phi(x_t, a_t) - \phi(x_t, a_t')\big)^{\top} \thetat_t \\
        \leq \ & 2 \betat_t \bignorm{\phi(x_t, \pi^*(x_t)) - \phi(x_t, a_t)}_{\H_t^{-1}} + \big(\phi(x_t, \pi^*(x_t)) - \phi(x_t, a_t')\big)^{\top} \thetat_t \\
        & + \betat_t \bignorm{\phi(x_t, a_t) - \phi(x_t, a_t')}_{\H_t^{-1}} \\
        \leq \ & 2 \betat_t \bignorm{\phi(x_t, a_t') - \phi(x_t, a_t)}_{\H_t^{-1}} + (\phi(x_t, a_t') - \phi(x_t, \pi^*(x_t)))^{\top} \thetat_t \\
        & + \big(\phi(x_t, \pi^*(x_t)) - \phi(x_t, a_t')\big)^{\top} \thetat_t + \betat_t \bignorm{\phi(x_t, a_t) - \phi(x_t, a_t')}_{\H_t^{-1}}\\
        = \ & 3 \betat_t \bignorm{\phi(x_t, a_t) - \phi(x_t, a_t')}_{\H_t^{-1}}, 
    \end{align*}
    where the first inequality holds by the Holder's inequality and the arm selection strategy of $a_t$ such that $\phi(x_t, \pi^*(x_t))^\top \thetat_t \leq \phi(x_t, a_t)^\top \thetat_t$, the second inequality holds by $\thetat_t \in \C_t$ with probability at least $1-\delta$ by Lemma~\ref{lem:confidence_set}, the third inequality holds by arm selection strategy of $a_t'$ such that $a_t' = \argmax_{a \in \A} \phi(x_t, a)^\top \thetat_t + 2 \betat \norm{\phi(x_t, a) - \phi(x_t, a_t)}_{\H_t^{-1}}$. Thus, we have 
    \begin{align*}
        {\Reg}_T  
        & = \frac{1}{2} \sum_{t=1}^T \Big(\big(r(x_t, \pi^*(x_t)) - r(x_t, a_t)\big) + \big(r(x_t, \pi^*(x_t)) - r(x_t, a_t')\big)\Big) \\
        & \leq \frac{3}{2} \betat_T \sum_{t=1}^T \bignorm{\phi(x_t, a_t) - \phi(x_t, a_t')}_{\H_t^{-1}}.
    \end{align*}

    By the definition of $\H_t$, we have
   \begin{align*}
    \H_t = \lambda I_d + \sum_{s=1}^{t-1} \dot{\sigma}\left(z_s^{\top} \thetat_{s+1}\right) z_s z_s^{\top} \geq \lambda I_d + \frac{1}{\kappa} \sum_{s=1}^{t-1} z_s z_s^{\top} = \frac{1}{\kappa}\left(\kappa \lambda I_d + \sum_{s=1}^{t-1} z_s z_s^{\top}\right) = \frac{1}{\kappa} V_t.
    \end{align*}
    Thus, we have
    \begin{align*}
      \sum_{t=1}^T\left\|z_t\right\|_{\H_t^{-1}} \leq \sqrt{\kappa} \sum_{t=1}^T\left\|z_t\right\|_{V_t^{-1}} \leq \sqrt{\kappa} \sqrt{T \sum_{t=1}^T\left\|z_t\right\|^2_{V_t^{-1}}} \leq \sqrt{2 \kappa d T \log \left(1+\frac{4TL^2}{\lambda \kappa d}\right)},
    \end{align*}
    where the first inequality holds by the fact that $\H_t \succeq \frac{1}{\kappa} V_t$, the second inequality holds by the Cauchy-Schwarz inequality, and the last inequality holds by the elliptic potential lemma in Lemma~\ref{lem:elliptic-potential}.
    
    Therefore, we have
    \begin{align*}
        {\Reg}_T \leq \frac{3}{2} {\betat_T} \sqrt{2 \kappa d T \log \left(1+\frac{4\kappa TL^2}{\lambda  d}\right)} \leq \Ot \big(d\sqrt{\kappa T}\big).
    \end{align*}
    where the 
    This completes the proof.
\end{proof}

\section{Supporting Lemmas}

\begin{myDef}[{\citet{ICMCOISM'15:self-concordant}}]
    A convex function $f \in \mathcal{C}^3\left(\mathbb{R}^m\right)$ is $M$-self-concordant-like function if
    \begin{align*}
        \left|\psi^{\prime \prime \prime}(s)\right| \leqslant M\|\mathbf{b}\|_2 \psi^{\prime \prime}(s),
    \end{align*}
    for $s \in \mathbb{R}$ and $M>0$, where $\psi(s):=f(\mathbf{a}+s \mathbf{b})$ for any $\mathbf{a}, \mathbf{b} \in \mathbb{R}^m$. 
\end{myDef}

\begin{myLemma}[{\citet[Proposition C.1]{NeurIPS'24:Lee-Optimal-MNL}}]
    \label{lem:self-concordant}
    The loss $\ell_t(\theta)$ defined in Eq.~\eqref{eq:mle} is $3\sqrt{2} L$-self-concordant-like for $\forall t \in [T]$.
\end{myLemma}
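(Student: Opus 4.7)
The plan is to reduce the multivariate self-concordant-like condition to a univariate inequality about the logistic link function, and then verify it by direct calculation.

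First, I will rewrite the loss in composition form. Recall
$\ell_t(\theta) = -y_t \log \sigma(z_t^\top \theta) - (1-y_t) \log(1-\sigma(z_t^\top \theta)) = h_{y_t}(z_t^\top \theta)$,
where $z_t = \phi(x_t, a_t) - \phi(x_t, a_t')$ and
$h_1(w) = \log(1 + e^{-w})$, $h_0(w) = \log(1 + e^{w})$.
Now fix arbitrary $\mathbf{a}, \mathbf{b} \in \R^d$ and define $\psi(s) := \ell_t(\mathbf{a} + s \mathbf{b})$. Letting $c := z_t^\top \mathbf{b}$ and $w(s) := z_t^\top \mathbf{a} + s c$, the chain rule gives
\begin{align*}
  \psi''(s) = c^2 \, h_{y_t}''(w(s)), \qquad \psi'''(s) = c^3 \, h_{y_t}'''(w(s)).
\end{align*}
Thus the defining inequality $|\psi'''(s)| \le M \|\mathbf{b}\|_2 \psi''(s)$ becomes $|c| \cdot |h_{y_t}'''(w)| \le M \|\mathbf{b}\|_2 \, h_{y_t}''(w)$. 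By Cauchy–Schwarz and Assumption~\ref{asm:linear-reward}, $|c| \le \|z_t\|_2 \|\mathbf{b}\|_2 \le 2L \|\mathbf{b}\|_2$, so it suffices to show the univariate bound $|h_{y}'''(w)| \le \frac{M}{2L} h_{y}''(w)$ for every $w \in \R$ and $y \in \{0,1\}$.

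Second, I will verify this univariate bound directly. For $y=0$, $h_0'(w) = \sigma(w)$, so $h_0''(w) = \sigma(w)(1-\sigma(w)) = \dot\sigma(w)$ and differentiating once more yields $h_0'''(w) = \dot\sigma(w)(1 - 2\sigma(w))$. Since $|1 - 2\sigma(w)| \le 1$, we obtain the clean bound $|h_0'''(w)| \le h_0''(w)$. The case $y=1$ is identical by the symmetry $h_1(w) = h_0(-w)$, which flips the sign of every odd derivative but preserves $h''$. Consequently $|h_y'''(w)| \le h_y''(w)$ for both labels.

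Combining the two steps, we get $|\psi'''(s)| \le 2L \|\mathbf{b}\|_2 \psi''(s)$, which is strictly stronger than the claimed inequality because $2L \le 3\sqrt{2}L$. Hence $\ell_t$ is $3\sqrt{2}L$-self-concordant-like, as stated. The argument is essentially bookkeeping: the only substantive step is the pointwise bound $|h'''(w)| \le h''(w)$, which follows from $|1-2\sigma(w)| \le 1$. The looser constant $3\sqrt{2}L$ is chosen to match the form used in the cited Lee et~al.\ reference so that downstream constants (in Lemma~\ref{lem:estimation-error} and Lemma~\ref{lem:confidence_set}) carry over without modification; no subtlety beyond the elementary computations above is required.
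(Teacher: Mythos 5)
Your proposal is correct. Note that the paper itself offers no proof of this lemma: it simply imports Proposition~C.1 of the cited Lee et~al.\ reference, which establishes self-concordance-likeness for the general $(K+1)$-class multinomial logistic loss, where the constant $3\sqrt{2}$ arises from the multi-class third-derivative tensor bound. Your argument instead specializes to the binary case and verifies the definition from scratch: the reduction of $\lvert\psi'''(s)\rvert \le M\lVert \mathbf{b}\rVert_2\,\psi''(s)$ to the univariate inequality $\lvert c\rvert\,\lvert h_y'''(w)\rvert \le M\lVert\mathbf{b}\rVert_2\, h_y''(w)$ via the chain rule is exact, the bound $\lvert c\rvert \le \lVert z_t\rVert_2\lVert\mathbf{b}\rVert_2 \le 2L\lVert\mathbf{b}\rVert_2$ correctly uses the triangle inequality on $z_t=\phi(x_t,a_t)-\phi(x_t,a_t')$, and the computation $h_0'''(w)=\dot\sigma(w)(1-2\sigma(w))$ with $\lvert 1-2\sigma(w)\rvert\le 1$ gives $\lvert h_y'''\rvert \le h_y''$ for both labels (the symmetry $h_1(w)=h_0(-w)$ is handled correctly). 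Your conclusion that $\ell_t$ is in fact $2L$-self-concordant-like is sharper than the stated $3\sqrt{2}L$, and since $\psi''\ge 0$ by convexity the weaker claim follows immediately. What your route buys is a short, self-contained, and tighter argument for the binary BT model; what the paper's citation buys is uniformity with the multinomial machinery (Lemma~\ref{lem:strongly-convex} and the confidence-set analysis are likewise borrowed from the MNL line of work), so the downstream constants in Lemma~\ref{lem:estimation-error} and Lemma~\ref{lem:confidence_set} match the cited statements without re-derivation.
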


\begin{myLemma}[{\citet[Lemma 11]{NIPS'11:AY-linear-bandits}}]
    \label{lem:elliptic-potential}
    Suppose $x_1, \ldots, x_t \in \R^d$ and for any $1 \leq s \leq t$, $\norm{x_s}_2 \leq L$. Let $V_t = \lambda I_d + \sum_{s=1}^{t-1} x_s x_s^\top $ for $\lambda \geq 0$. Then, we have 
    \begin{align*}
        \sum_{s=1}^t\left\|z_s\right\|_{V_s^{-1}}^2 \leq 2 d \log \left(1+\frac{t L^2}{\lambda d}\right).
    \end{align*}
\end{myLemma}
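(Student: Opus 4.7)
The plan is to reduce the sum of squared local norms to a log-determinant telescoping identity, and then bound the determinant by AM--GM on the trace. Concretely, the backbone is the matrix determinant lemma, which gives, for each $s$,
\begin{align*}
    \det(V_{s+1}) = \det(V_s + z_s z_s^\top) = \det(V_s)\bigl(1 + \norm{z_s}_{V_s^{-1}}^2\bigr).
\end{align*}
Telescoping from $s=1$ to $t$ and noting $V_1 = \lambda I_d$ yields
$\log\det(V_{t+1}) - d\log\lambda = \sum_{s=1}^t \log\bigl(1+\norm{z_s}_{V_s^{-1}}^2\bigr)$.

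Next I would convert the inner $\log(1+\cdot)$ back into the quantity of interest. Under the mild working assumption $\lambda \geq L^2$ (which is compatible with Lemma~\ref{lem:confidence_set}'s choice of $\lambda$), each term satisfies $\norm{z_s}_{V_s^{-1}}^2 \leq \norm{z_s}_2^2/\lambda \leq 1$; then the elementary inequality $u \leq 2\log(1+u)$ valid on $[0,1]$ gives
$\sum_{s=1}^t \norm{z_s}_{V_s^{-1}}^2 \leq 2\sum_{s=1}^t \log\bigl(1+\norm{z_s}_{V_s^{-1}}^2\bigr)$. If one instead prefers to avoid the condition $\lambda \geq L^2$, the same derivation carries through after replacing $\norm{z_s}_{V_s^{-1}}^2$ by $\min\{1,\norm{z_s}_{V_s^{-1}}^2\}$ on the left-hand side, which is the more commonly stated form.

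To upper bound the log-determinant I would use AM--GM on the eigenvalues of the positive definite matrix $V_{t+1}$:
\begin{align*}
    \det(V_{t+1}) \leq \Bigl(\tfrac{1}{d}\,\mathrm{tr}(V_{t+1})\Bigr)^d = \Bigl(\lambda + \tfrac{1}{d}\sum_{s=1}^t \norm{z_s}_2^2\Bigr)^d \leq \Bigl(\lambda + \tfrac{tL^2}{d}\Bigr)^d,
\end{align*}
where the last step uses $\norm{z_s}_2 \leq L$. Taking logarithms and subtracting $d\log\lambda$ gives $\log\det(V_{t+1}) - d\log\lambda \leq d\log\bigl(1 + tL^2/(\lambda d)\bigr)$. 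Chaining this with the two previous displays yields the claimed bound
$\sum_{s=1}^t \norm{z_s}_{V_s^{-1}}^2 \leq 2 d \log\bigl(1 + tL^2/(\lambda d)\bigr)$.

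The only delicate point is the step $u \leq 2\log(1+u)$, which requires $u \leq 1$; this is where the assumption $\lambda \geq L^2$ (or the truncation by $\min$) enters, and I would flag it explicitly. Everything else is algebraic manipulation: the matrix determinant lemma for the telescoping identity, and AM--GM on the trace for the determinant upper bound. No concentration, no martingale machinery, and no properties of the $z_s$ beyond the uniform norm bound are needed, which is what makes this lemma a reusable deterministic tool in the preceding regret analyses.
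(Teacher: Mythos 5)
Your argument is the standard proof of the elliptic potential lemma and matches the one in the cited source (Abbasi-Yadkori et al., Lemma 11); the paper itself states this lemma by citation without reproving it, so there is nothing to diverge from. You are right to flag that the inequality $u \leq 2\log(1+u)$ requires $u \leq 1$ and hence $\lambda \geq L^2$ (or a $\min\{1,\cdot\}$ truncation) — the paper's hypothesis ``$\lambda \geq 0$'' is too weak as written, and its statement also silently conflates the $x_s$ of the hypothesis with the $z_s$ in the conclusion, so your more careful version is in fact an improvement over the statement being proved.
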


\begin{myLemma}[{\citet[Proposition 4.1]{NeurIPS'20:Campolongo-IOMD}}]
    \label{lem:implicit-omd}
    Define $\mathbf{w}_{t+1}$ as the solution of
    \begin{align*}
        \mathbf{w}_{t+1}=\argmin_{\mathbf{w} \in \mathcal{V}} \big\{\eta \ell_t(\mathbf{w})+\mathcal{D}_\psi\left(\mathbf{w}, \mathbf{w}_t\right)\big\},
    \end{align*}
    where $\mathcal{V} \subseteq \mathcal{W} \subseteq \mathbb{R}^d$ is a non-empty convex set. Further supposing $\psi(\mathbf{w})$ is 1 -strongly convex w.r.t. a certain norm $\|\cdot\|$ in $\mathcal{W}$, then there exists a $\mathbf{g}_t^{\prime} \in \partial \ell_t\left(\mathbf{w}_{t+1}\right)$ such that
    \begin{align*}
       \left\langle\eta_t \mathbf{g}_t^{\prime}, \mathbf{w}_{t+1}-\mathbf{u}\right\rangle \leq\left\langle\nabla \psi\left(\mathbf{w}_t\right)-\nabla \psi\left(\mathbf{w}_{t+1}\right), \mathbf{w}_{t+1}-\mathbf{u}\right\rangle 
    \end{align*}
    for any $\mathbf{u} \in \mathcal{W}$.
\end{myLemma}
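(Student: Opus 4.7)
The plan is to prove this standard implicit online mirror descent inequality by invoking the first-order optimality condition for constrained convex minimization. Define the per-step objective $F_t(\mathbf{w}) \triangleq \eta \ell_t(\mathbf{w}) + \D_\psi(\mathbf{w}, \mathbf{w}_t)$. Since $\ell_t$ is assumed convex in the online mirror descent framework and $\mathbf{w} \mapsto \D_\psi(\mathbf{w}, \mathbf{w}_t) = \psi(\mathbf{w}) - \psi(\mathbf{w}_t) - \langle \nabla \psi(\mathbf{w}_t), \mathbf{w} - \mathbf{w}_t\rangle$ is convex by the $1$-strong convexity of $\psi$, the objective $F_t$ is convex, so $\mathbf{w}_{t+1}$ is a global minimizer of $F_t$ over the convex set $\mathcal{V}$.

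Next, I would compute $\partial F_t(\mathbf{w}_{t+1})$ via the sum rule. Because the Bregman-divergence term is differentiable in its first argument with gradient $\nabla \psi(\mathbf{w}_{t+1}) - \nabla \psi(\mathbf{w}_t)$, the sum rule applies cleanly and gives
\begin{align*}
\partial F_t(\mathbf{w}_{t+1}) = \eta\,\partial \ell_t(\mathbf{w}_{t+1}) + \nabla \psi(\mathbf{w}_{t+1}) - \nabla \psi(\mathbf{w}_t).
\end{align*}
The first-order optimality condition for minimizing a convex function over a convex set then asserts that there is some element of $\partial F_t(\mathbf{w}_{t+1})$ lying in the negative normal cone $-N_{\mathcal{V}}(\mathbf{w}_{t+1})$. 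Concretely, this produces a subgradient $\mathbf{g}_t^{\prime} \in \partial \ell_t(\mathbf{w}_{t+1})$ such that
\begin{align*}
\bigl\langle \eta \mathbf{g}_t^{\prime} + \nabla \psi(\mathbf{w}_{t+1}) - \nabla \psi(\mathbf{w}_t),\, \mathbf{u} - \mathbf{w}_{t+1} \bigr\rangle \geq 0
\end{align*}
for every $\mathbf{u} \in \mathcal{V}$. Rearranging this variational inequality immediately yields the claimed bound $\langle \eta \mathbf{g}_t^{\prime}, \mathbf{w}_{t+1} - \mathbf{u}\rangle \leq \langle \nabla \psi(\mathbf{w}_t) - \nabla \psi(\mathbf{w}_{t+1}), \mathbf{w}_{t+1} - \mathbf{u}\rangle$.

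The part of the statement that requires the most care is the comparator being taken in the ambient set $\mathcal{W}$ rather than in the feasible set $\mathcal{V}$. The variational inequality above is naturally derived only for $\mathbf{u} \in \mathcal{V}$, so extending it to $\mathcal{W} \supseteq \mathcal{V}$ will need a structural argument. My plan here is to invoke the standard trick used in Campolongo–Orabona: write any $\mathbf{u} \in \mathcal{W}$ as a perturbation of a feasible point and use that $\psi$ is $1$-strongly convex on $\mathcal{W}$ so that the Bregman divergence still provides a valid lower bound on the objective gap at arbitrary comparators, or, alternatively, observe that the variational inequality with the specific direction $\mathbf{u} - \mathbf{w}_{t+1}$ extends to all of $\mathcal{W}$ whenever $\mathbf{w}_{t+1}$ lies in the relative interior of $\mathcal{V}$ with respect to $\mathcal{W}$.

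The main obstacle is precisely this extension step; once it is handled, the rest of the argument is a direct application of convex subdifferential calculus and first-order optimality, so no delicate estimates or chaining arguments are required. I would therefore spend most of the write-up carefully setting up the subdifferential sum rule and the comparator-extension lemma, and only a brief rearrangement at the end to state the inequality in the exact form given.
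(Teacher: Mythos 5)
Your core argument is the standard one and is, in substance, the proof of this result: the paper itself offers no proof of this lemma, importing it verbatim from \citet[Proposition 4.1]{NeurIPS'20:Campolongo-IOMD}, and the argument there is exactly your first-order optimality condition $0 \in \eta\,\partial \ell_t(\mathbf{w}_{t+1}) + \nabla\psi(\mathbf{w}_{t+1}) - \nabla\psi(\mathbf{w}_t) + N_{\mathcal{V}}(\mathbf{w}_{t+1})$, where the subdifferential sum rule applies cleanly because the Bregman term is differentiable in its first argument; rearranging the resulting variational inequality gives the claim for all $\mathbf{u} \in \mathcal{V}$. The one step where you stall --- extending the comparator from $\mathcal{V}$ to $\mathcal{W}$ --- is not something you should attempt to repair, because the inequality is genuinely false for general $\mathbf{u} \notin \mathcal{V}$: the normal-cone element only controls $\langle \cdot, \mathbf{u} - \mathbf{w}_{t+1}\rangle$ for $\mathbf{u}$ in the feasible set, and neither of your proposed patches (strong convexity of $\psi$ on $\mathcal{W}$, or a relative-interior argument) circumvents this. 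The ``$\mathbf{u} \in \mathcal{W}$'' in the statement is a transcription slip relative to the source, which states the proposition for $\mathbf{u} \in \mathcal{V}$; moreover, the only place this paper invokes the lemma (the proof of Lemma~\ref{lem:estimation-error}) takes $\mathbf{u} = \theta^* \in \Theta$, i.e., a point of the feasible set. So your proof, restricted to $\mathbf{u} \in \mathcal{V}$, is complete and is all that is needed; do not budget most of your write-up for the extension step.
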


\begin{myLemma}[{\citet[Lemma 1]{NeurIPS'23:MLogB}}]
    \label{lem:strongly-convex}
    Let $\ell(\mathbf{z}, y)=\sum_{k=0}^K \mathbf{1}\{y=k\} \cdot \log \left(\frac{1}{[\sigma(\mathbf{z})]_k}\right)$ where $\sigma(\mathbf{z})_k=\frac{e^{z_k}}{\sum_{j=0}^K e^{z_{j}}}$, $\mathbf{a} \in[-C, C]^K$, $y \in\{0\} \cup[K]$ and $\mathbf{b} \in \mathbb{R}^K$ where $C>0$. Then, we have
    \begin{align*}
        \ell(\mathbf{a}, y) \geq \ell(\mathbf{b}, y)+\nabla \ell(\mathbf{b}, y)^{\top}(\mathbf{a}-\mathbf{b})+\frac{1}{\log (K+1)+2(C+1)}(\mathbf{a}-\mathbf{b})^{\top} \nabla^2 \ell(\mathbf{b}, y)(\mathbf{a}-\mathbf{b}) .
    \end{align*}
\end{myLemma}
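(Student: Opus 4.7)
The plan is to reduce the inequality to a statement about the log-partition function and then exploit a self-concordance-type identity. Since $\ell(\mathbf{z},y) = f_0(\mathbf{z}) - z_y\,\mathbf{1}\{y\ge1\}$ with $f_0(\mathbf{z}) = \log\sum_{k=0}^K e^{z_k}$ (taking $z_0 = 0$), and the subtracted linear term has zero Hessian and exactly cancels in the Bregman divergence $\ell(\mathbf{a},y) - \ell(\mathbf{b},y) - \nabla\ell(\mathbf{b},y)^\top u$ with $u = \mathbf{a}-\mathbf{b}$, it suffices to establish the bound for $f_0$. Writing $h(s) := f_0(\mathbf{b}+su)$, Taylor's theorem with integral remainder gives
\begin{align*}
f_0(\mathbf{a}) - f_0(\mathbf{b}) - \nabla f_0(\mathbf{b})^\top u \;=\; \int_0^1 (1-s)\,h''(s)\,ds,
\end{align*}
so the task reduces to lower bounding the integral by $c\cdot h''(0)$ with $c = 1/(\log(K+1) + 2(C+1))$.

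The key observation is that $h''(s)$ equals the variance of the coordinate statistic $u_J$ under the tilted categorical distribution $p(\mathbf{b}+su)$, and that $\log M(s) := \log \mathbb{E}_{p(\mathbf{b})}[e^{s u_J}]$ satisfies $(\log M)''(s) = h''(s)$, with $(\log M)(0)=0$ and $(\log M)'(0) = \nabla f_0(\mathbf{b})^\top u$. Hence the statement is a cumulant-generating-function inequality: one needs the cumulant gap $\log M(1) - (\log M)'(0)$ to dominate a multiple of the variance $(\log M)''(0)$. I would prove this by combining (i) the self-concordance-like property (Lemma~\ref{lem:self-concordant}, which generalizes to the multinomial log-partition function), giving the sandwich $h''(s) \ge h''(0)\,e^{-M s}$ and hence $\int_0^1(1-s)h''(s)\,ds \ge (M-1+e^{-M})h''(0)/M^2 \ge h''(0)/(M+2)$ for $M\ge 1$; with (ii) an identification of the effective exponent $M$ with $\log(K+1) + 2C$ via the uniform bound $0 \le \ell(\mathbf{a},y)\le \log(K+1) + 2C$ valid when $\mathbf{a}\in[-C,C]^K$.

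The main obstacle is step (ii): the denominator must not depend on $\mathbf{b}$ even though $\mathbf{b}$ is unconstrained. A direct self-concordance computation yields $M \propto \|u\|_2$, which may be arbitrarily large as $\mathbf{b}$ ranges over $\mathbb{R}^K$. The workaround is to express $h''(s)$ through the tilted moment generating function, observe that only the components of $u$ that actually shift softmax mass contribute to both the variance and the cumulant gap, and argue that the effective oscillation of $h$ along the segment is controlled by the \emph{image value} $\ell(\mathbf{a},y)$ rather than by $\|u\|_2$. Executing this reduction tightly and squeezing out precisely the constant $\log(K+1) + 2(C+1)$ is the analytic crux and corresponds to the innovation of Lemma~1 in \citet{NeurIPS'23:MLogB}, which I would import directly rather than rederive from scratch.
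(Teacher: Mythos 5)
The paper states this lemma without proof, importing it verbatim from \citet[Lemma 1]{NeurIPS'23:MLogB}, and your proposal ultimately does the same: the preliminary reductions you sketch (stripping the linear term $-z_y$ so that only the log-partition function $f_0$ matters, the Taylor integral remainder, and the cumulant-generating-function identities) are all sound, and you correctly isolate the real difficulty, namely that the constant must depend only on $C$ and $K$ even though $\mathbf{b}$ is unconstrained so the naive self-concordance exponent $M\propto\|\mathbf{a}-\mathbf{b}\|_2$ blows up. Since you explicitly defer that crux to the cited lemma rather than claiming to resolve it, your treatment coincides with the paper's, which offers no argument beyond the citation.
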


\section{Details of Experiments}
\label{sec:detail_exp}

In this section, we provide the omitted details of the experiment details and additional results.

\subsection{Implementation Details}
\label{subsec:detail_exp:impl}

\noindent \textbf{Datasets.~~}
We use the UltraFeedback-binarized dataset~\citep{NeurIPS'23:DPO} for the experiments. This dataset is derived from the original UltraFeedback dataset, which comprises 64, 000 prompts sourced from diverse datasets including UltraChat, ShareGPT, Evol-Instruct, TruthfulQA, FalseQA, and FLAN. For each prompt, four model completions were generated using various open-source and proprietary language models, with GPT-4 providing comprehensive evaluations across multiple criteria including helpfulness, honesty, and truthfulness. The binarized version was constructed by selecting the completion with the highest overall score as the "chosen" response and randomly selecting one of the remaining completions as the "rejected" response, creating clear preference pairs suitable for reward modeling and direct preference optimization. This dataset structure aligns well with our experimental setup, providing a robust foundation for evaluating different preference learning approaches. The dataset's diverse prompt sources and evaluation criteria make it particularly valuable for training and evaluating reward models in a real-world context. To further tailor the dataset to our experimental setup, we organize the dataset as follows:
\begin{itemize}[leftmargin=*]
    \item Passive data collection: We randomly choose $30, 000$ samples from the UltraFeedback-binarized dataset's \texttt{train\_prefs} split for training. Each sample consists of a prompt and two responses with a label indicating the preferred response. We use the \texttt{test\_prefs} split for evaluation.
    \item Active data collection: We allow the method to actively select 6,400 samples from the \texttt{train\_prefs} split according to different selection strategies. The global batch size is set to 8 for training. The selection is performed iteratively, where in each iteration, the method selects the most informative samples based on its selection criterion.
    \item Deployment-time adaption: We use a pre-processed online variant of the UltraFeedback-binarized dataset from the \texttt{test\_gen} split. The dataset is divided into 20 sequential chunks to simulate an online deployment scenario. For each chunk, we generate responses using the current policy (the foundation model of policy model is chosen to be \texttt{meta-llama / Llama-3.2-1B}), evaluate them using both the learned reward model and an oracle reward model. We choose \texttt{NCSOFT/Llama-3-OffsetBias-RM-8B}~\citep{EMNLP'24:Park-OffsetBias} as the oracle reward model. After each chunk, we use the policy model to randomly generate 64 responses using different seeds. We then apply various strategies (\emph{Random}, \emph{Best-Two}, etc.) to select responses and construct new preference pairs, which are then used to update the reward model and the policy model.
\end{itemize}

\begin{algorithm}[!t]
    \caption{Efficient Update using Hessian-Vector Product with Conjugate Gradient}
    \label{alg:hvp-cg}
    \begin{algorithmic}[1]
        \REQUIRE Current parameter $\thetat_t$, gradient $g_t(\thetat_t)$, learning rate $\eta$, max CG steps $K$, parameter $\lambda_0$, $\epsilon$
        \STATE Initialize $v_0 = 0$, $r_0 = g_t(\thetat_t)$, $p_0 = r_0$
        \STATE Compute damping $\lambda_t = \lambda_0 \cdot \min\{1, f(t/T)\}$
        \FOR {$k=0,1,\ldots,K-1$}
        \STATE Compute HVP: $\Ht_t p_k = \nabla_\theta(\nabla_\theta \mathcal{L}(\theta)^\top p_k)|_{\theta=\thetat_t} + \lambda_t p_k$
        \STATE $\alpha_k = \frac{r_k^\top r_k}{p_k^\top \Ht_t p_k}$, $v_{k+1} = v_k + \alpha_k p_k$, $r_{k+1} = r_k - \alpha_k \Ht_t p_k$,
        \STATE  $\beta_{k+1} = \frac{r_{k+1}^\top r_{k+1}}{r_k^\top r_k}$, $p_{k+1} = r_{k+1} + \beta_{k+1} p_k$
        \IF{$\|r_{k+1}\| \le \epsilon$}
        \STATE \textbf{break}
        \ENDIF
        \ENDFOR
        \STATE Update parameter: $\thetat_{t+1} = \thetat_t - \eta v_{K}$
        \ENSURE Updated parameter $\thetat_{t+1}$
    \end{algorithmic}
\end{algorithm}

\noindent \textbf{Update details.~~} As described in Section~\ref{sec:inverse-hessian}, we can implement the OMD update using the HVP with conjugate gradient descent. The full algorithm is summarized in Algorithm~\ref{alg:hvp-cg}. In our experiments, we set $K = 3$ and $\lambda_0 = 0.8$ and choose the linear function $f(t/T) = t/T$ as the damping function.

\subsection{Validating the Magnitude of $\kappa$}  
\label{subsec:detail_exp:kappa}  

We validate the magnitude of $\kappa$ by computing its value during the training process. The results show that $\kappa = 171.62 \pm 85.49$ during our training process, which is relatively large.

\subsection{Combined with Adam Optimizer}
\label{subsec:detail_exp:adam}

In previous experiments, we used SGD to update model parameters. In this section, we integrate the methods with the \emph{Adam optimizer}~\citep{ICLR'15:Adam}, i.e., adding the first and second momentum terms to the model updates. The results, shown in Figure~\ref{fig:training_passive_adam}, indicate that the Adam optimizer further enhances the performance of our method by leveraging the momentum term to accelerate convergence. With the momentum term, our method remains superior to the MLE-based method; however, the performance gap is reduced. This may be because the Adam optimizer incorporates second-order information for optimization, diminishing the advantage of our method compared to the SGD cases.

\subsection{Comparison with DPO}
\label{subsec:detail_exp:dpo}

We also compare with DPO~\citep{NeurIPS'23:DPO} in the deployment stage. As a reward-free method, DPO optimizes the policy directly using preference feedback without explicit reward modeling. To ensure a fair comparison, we initialize the policy with 400 samples and use the same dataset settings as PPO to iteratively update the policy model using the DPO algorithm. The results are illustrated in Figure~\ref{fig:dpo-compare}. While DPO outperforms the random baseline (Rand-MLE), it achieves lower cumulative rewards than the methods using our action selection. This result suggests that DPO's online learning capability remains a challenge. In contrast, the reward model learned by our selection strategy effectively learned streaming data and continuously updates the policy as new data arrive, indicating that a reward model with PPO may be a more suitable choice for sequentially learning from new data.

\begin{figure*}[!t]
    \begin{minipage}[t]{0.99\textwidth}
        \centering
        \subfigure[training loss]{\includegraphics[width=0.235\columnwidth]{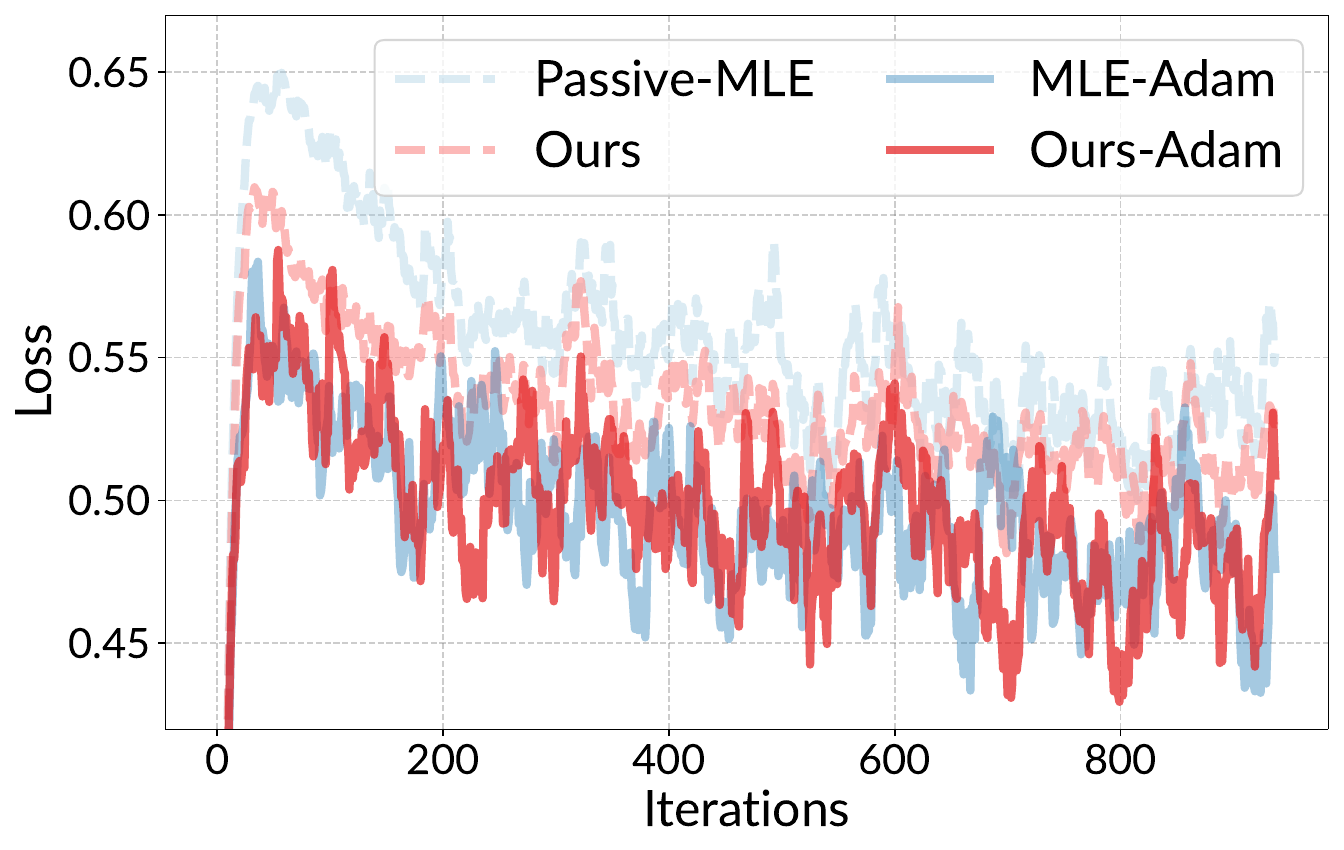}
            \label{fig:passive-train-loss-adam}}
        \hfill
        \subfigure[training accuracy]{\includegraphics[width=0.235\columnwidth]{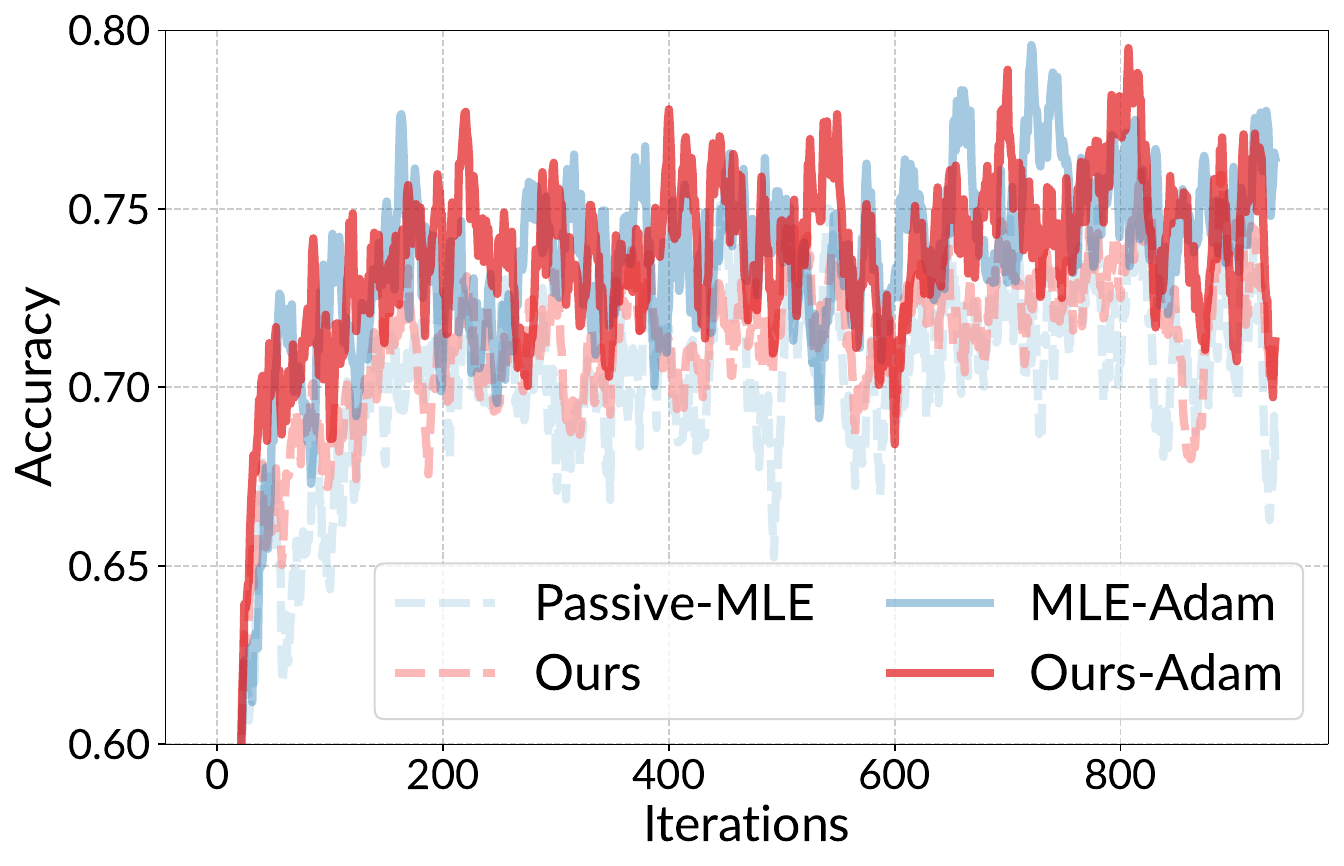}
            \label{fig:passive-train-acc-adam}}
        \hfill
        \subfigure[evaluation loss]{\includegraphics[width=0.235\columnwidth]{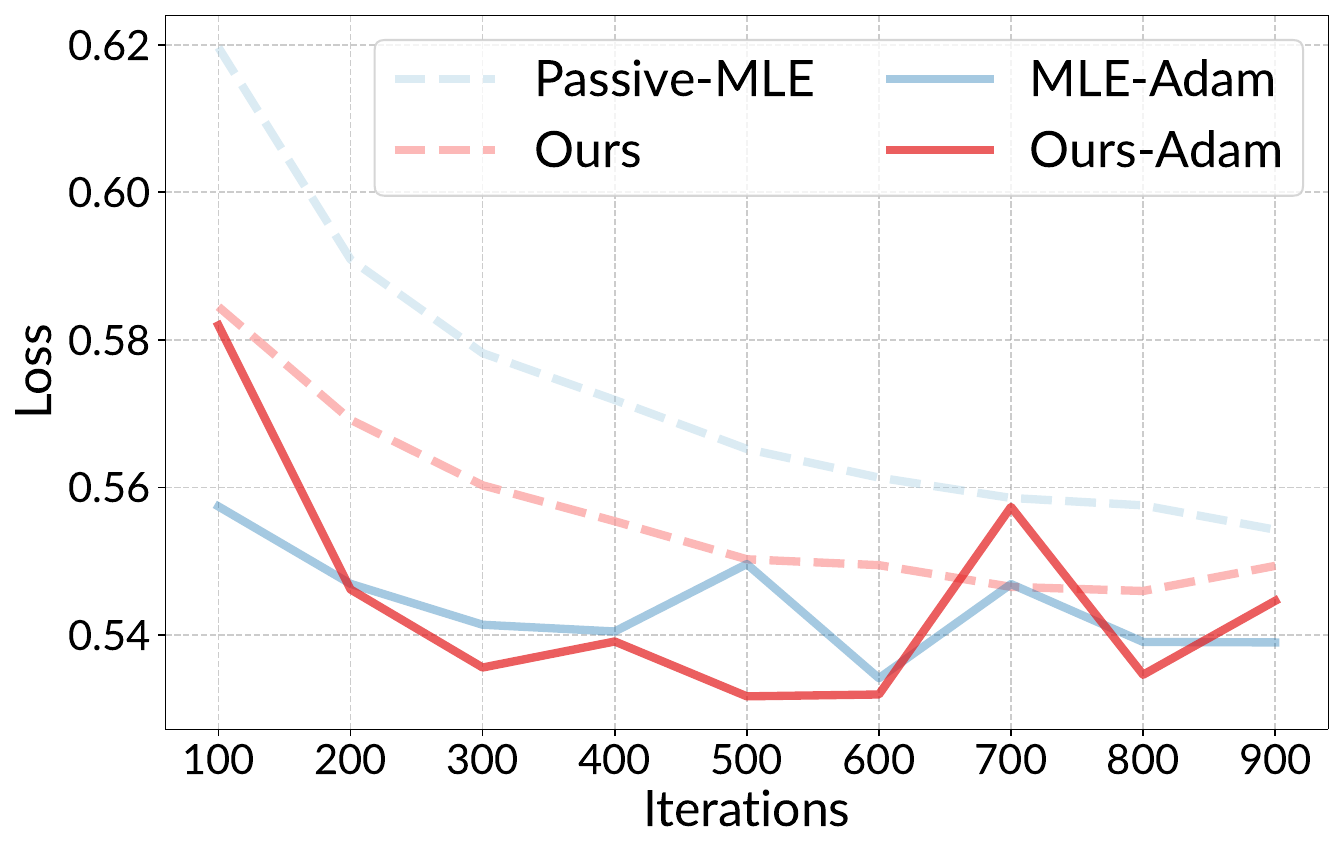}
            \label{fig:passive-eval-loss-adam}}
        \hfill
        \subfigure[evaluation accuracy]{\includegraphics[width=0.235\columnwidth]{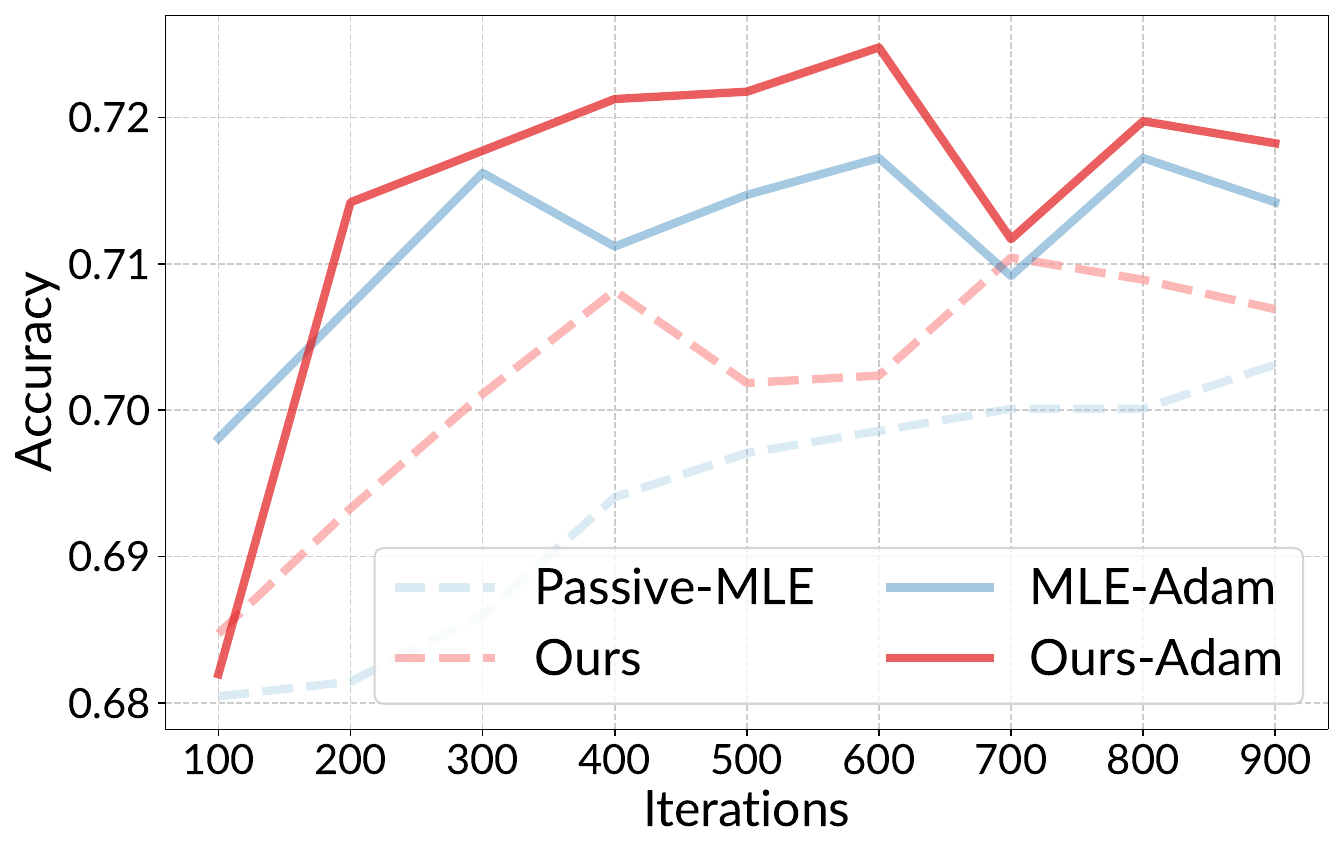}
            \label{fig:passive-eval-acc-adam}}
    \end{minipage}
    \caption{For online RLHF with \emph{passive data collection}, we compare our proposed method and MLE~\citep{ICML'23:Zhu-Principled} in with passive data collection combined with \emph{Adam}. We report the average accuracy and loss of the reward model during the training process.}
    \label{fig:training_passive_adam}
\end{figure*}

\begin{figure}
    \centering
    \begin{minipage}{0.48\textwidth}
        \centering
        \includegraphics[width=0.8\textwidth]{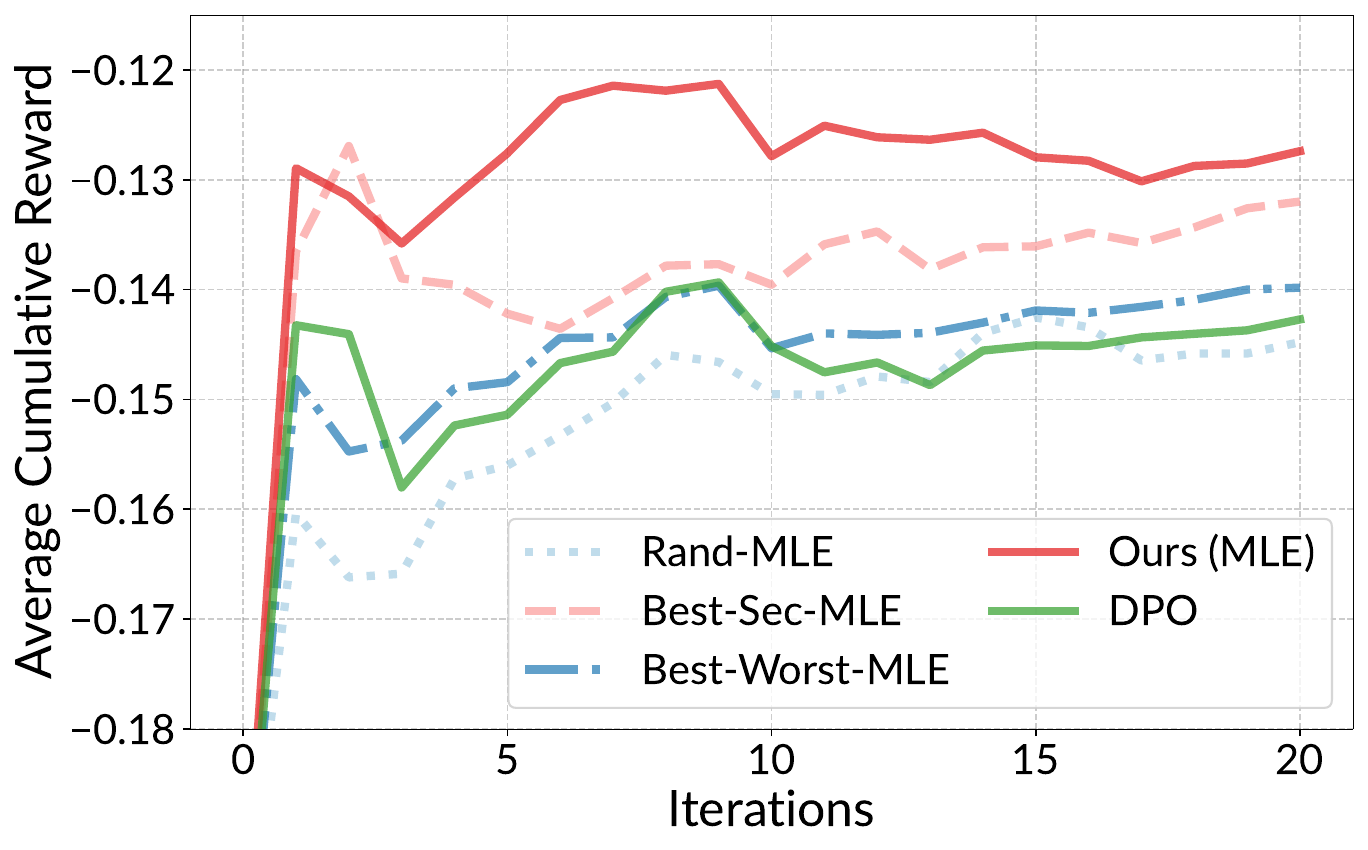}
        \caption{Comparison of DPO and our method in deployment-time adaptation.}
        \label{fig:dpo-compare}
    \end{minipage}
    \hfill
    \begin{minipage}{0.48\textwidth}
        \centering
        \includegraphics[width=0.8\textwidth]{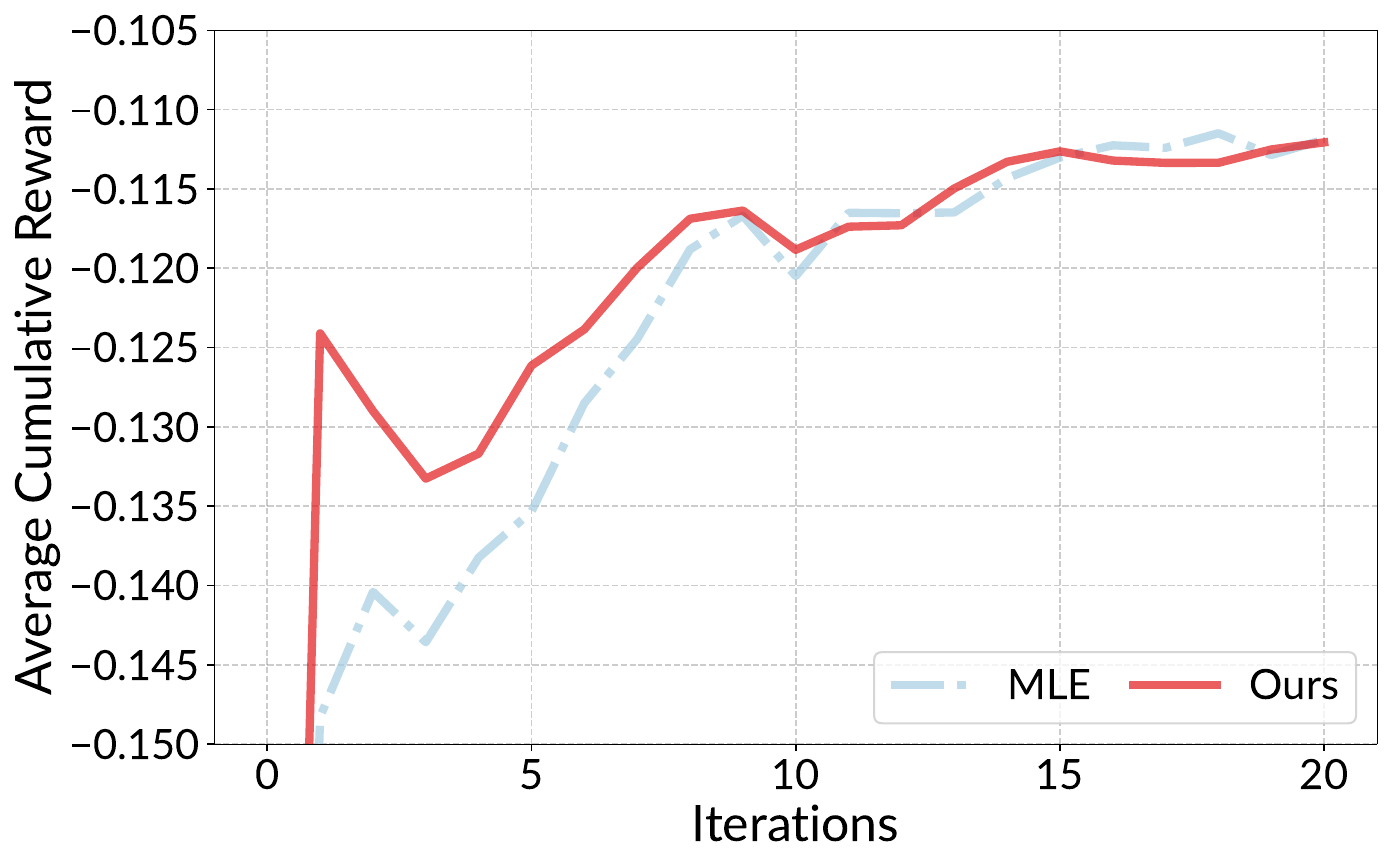}
        \caption{Comparison of different methods for full update in deployment-time adaptation.}
        \label{fig:full_update}
    \end{minipage}
\end{figure}

\subsection{Full Update of Reward Model}
\label{subsec:detail_exp:full_update}

Figure~\ref{fig:full_update} shows deployment-time adaptation results using the \emph{Llama-3.2-1B} model, where we update all parameters of the reward model instead of only the final layer. Both our method and MLE use the same action selection strategy. Our approach achieves comparable performance with MLE, indicating that our OMD-based update method is still compatible with full-model updates.

\subsection{More Foundation Models and Datasets}
\label{subsec:detail_exp:more}

In this section, we provide more experimental results about other foundation models and datasets.

Figure~\ref{fig:training_passive_qwen} shows the training and evaluation curves for reward model learning under passive data collection using the \texttt{Qwen2.5-7B-Instruct} model. We compare our method with MLE and report the loss and accuracy over training. Our method consistently shows stable training dynamics and competitive evaluation performance compared to MLE, suggesting its effectiveness in offline settings.

Figure~\ref{fig:active_results_qwen} present results for online RLHF with active data collection using the same Qwen model. Figure~\ref{fig:active_train_loss_qwen} shows training loss curves, while Figure~\ref{fig:active_eval_acc_qwen} reports evaluation accuracy over training iterations. Table~\ref{tab:training_active_results_qwen} further compares various methods (Rand-MLE, Active-MLE, Rand-OMD, and our approach) in terms of final accuracy and training time. While Active-MLE achieves slightly higher accuracy, our method provides significant speedup in training time with comparable performance, highlighting the efficiency of our approach.

Figure~\ref{fig:deployment_qwen} illustrates the deployment-time performance of various methods on the Ultrafeedback dataset. We split the dataset into 20 chunks and measure cumulative rewards across these chunks. Our method demonstrates robust adaptation capabilities, achieving competitive reward accumulation.

\begin{figure*}[!t]
    \begin{minipage}[t]{0.99\textwidth}
        \centering
        \subfigure[training loss]{
            {\includegraphics[width=0.23\textwidth]{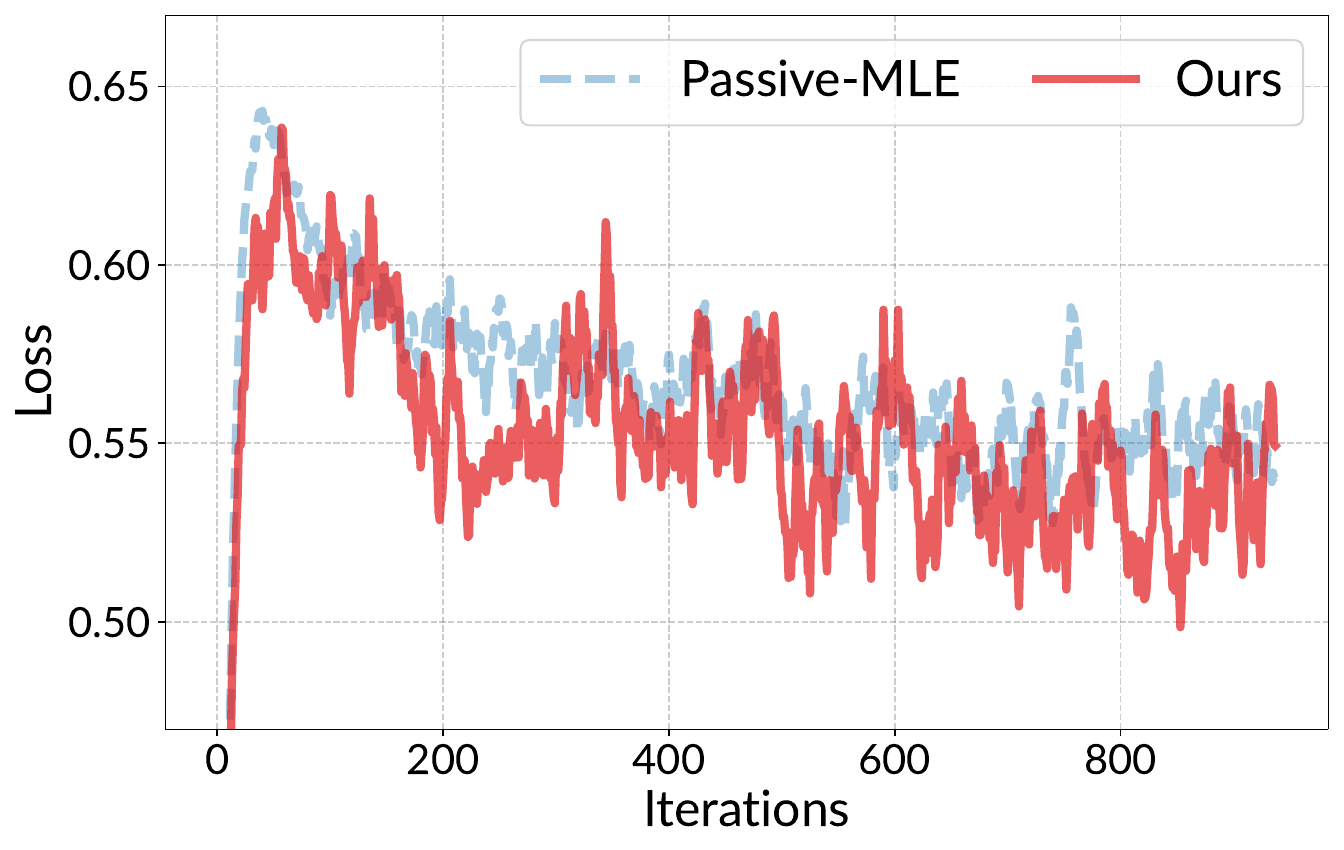}}
        }
        \subfigure[training accuracy]{
            {\includegraphics[width=0.23\textwidth]{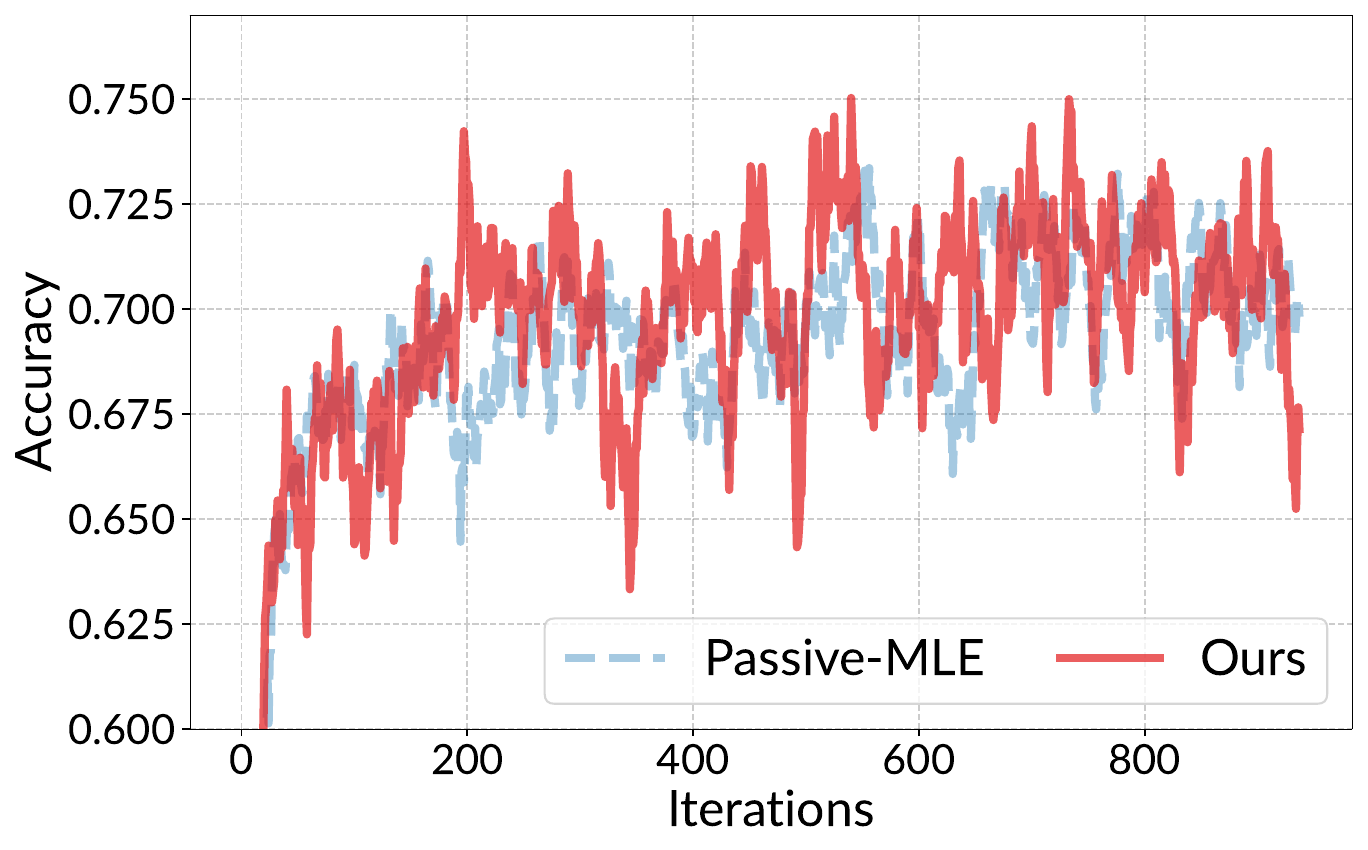}}
        }
        \subfigure[evaluation loss]{
            {\includegraphics[width=0.23\textwidth]{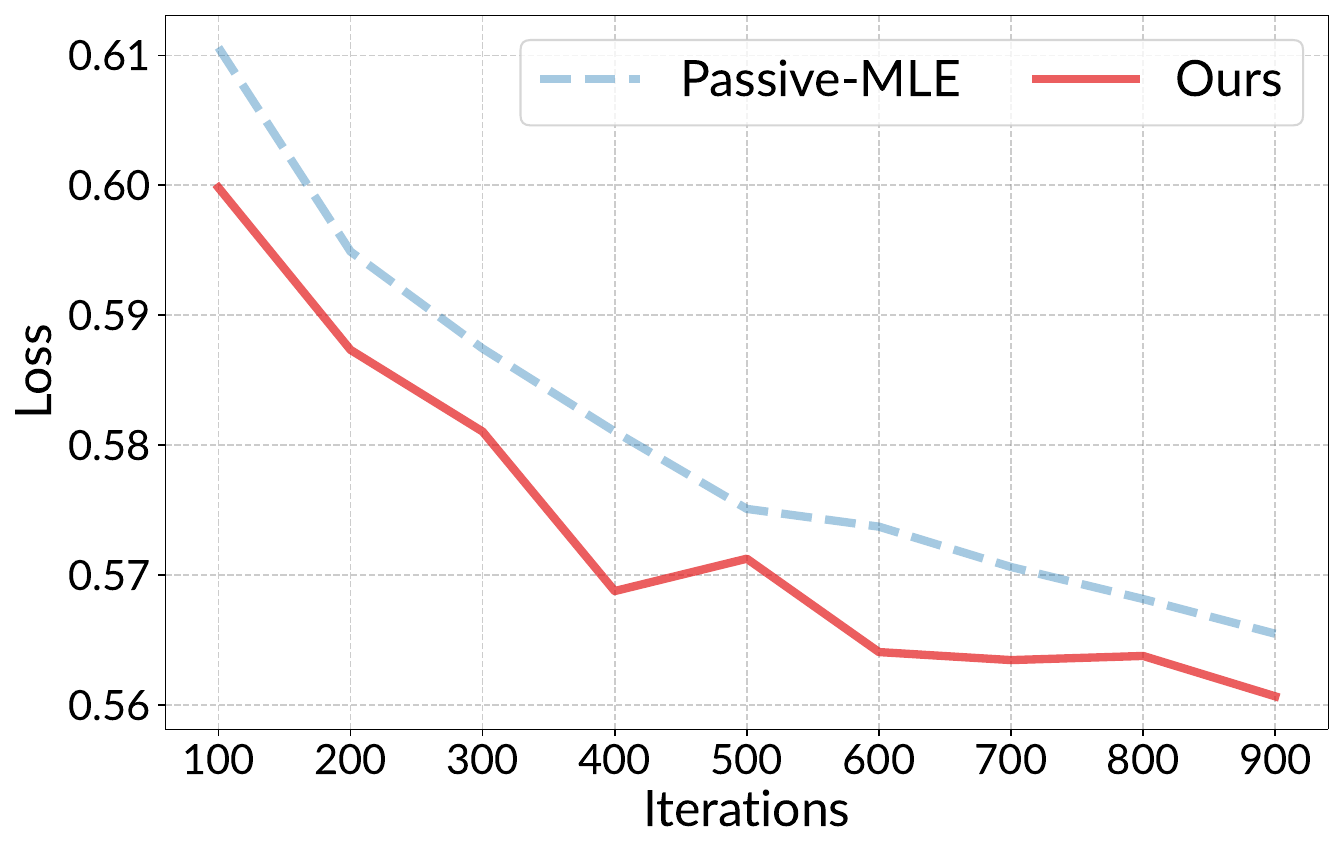}}
        }
        \subfigure[evaluation accuracy]{
            {\includegraphics[width=0.23\textwidth]{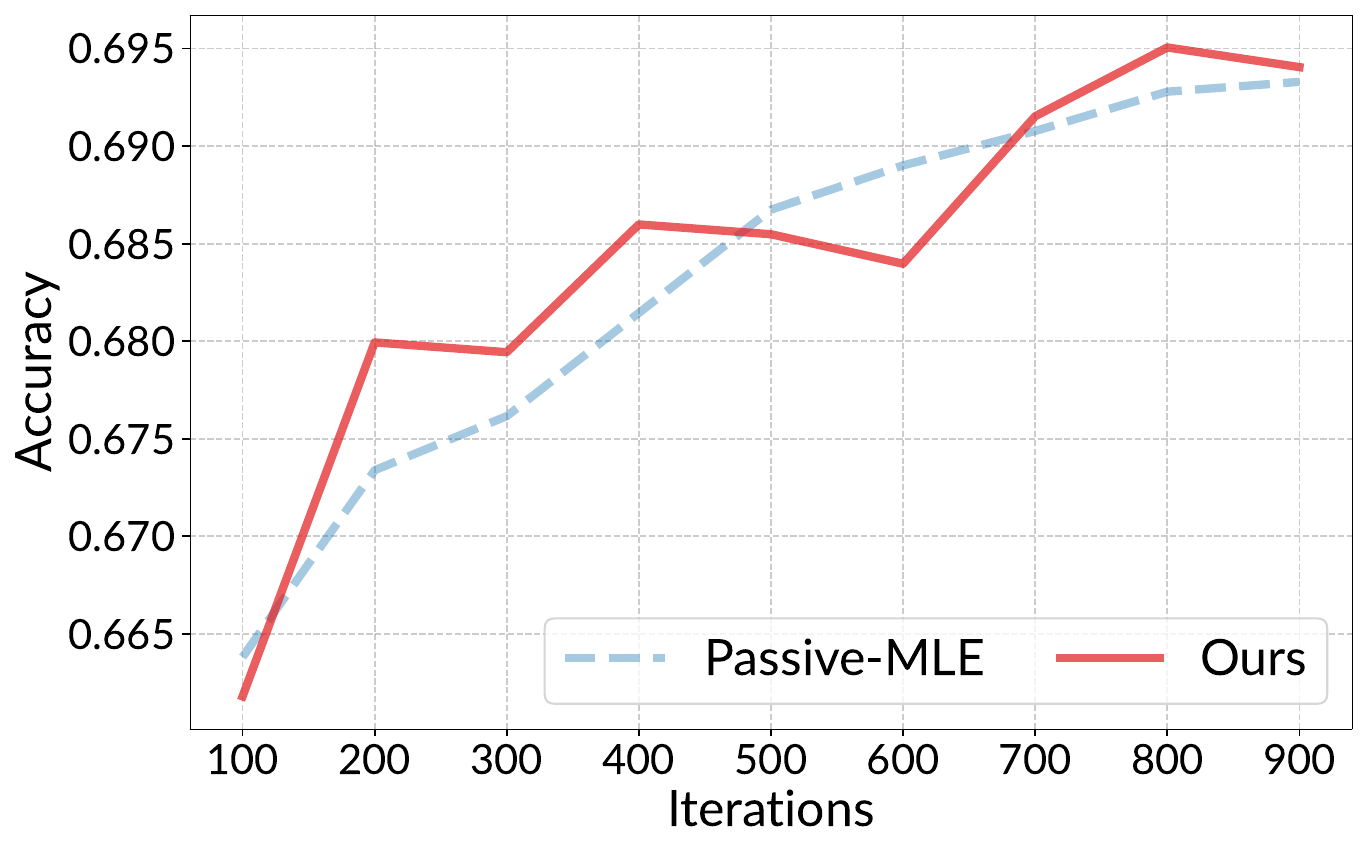}}
        }
    \end{minipage}
    \caption{For \texttt{Qwen2.5-7B-Instruct} model with \emph{passive data collection}, we compare our method with MLE. We report average accuracy and loss curve of the reward model.}
    \label{fig:training_passive_qwen}
\end{figure*}

\begin{figure}[!t]
    \centering
    \begin{minipage}{0.68\textwidth}
    \begin{minipage}{0.68\textwidth}
        \centering
        \subfigure[training loss]{\includegraphics[width=0.48\columnwidth]{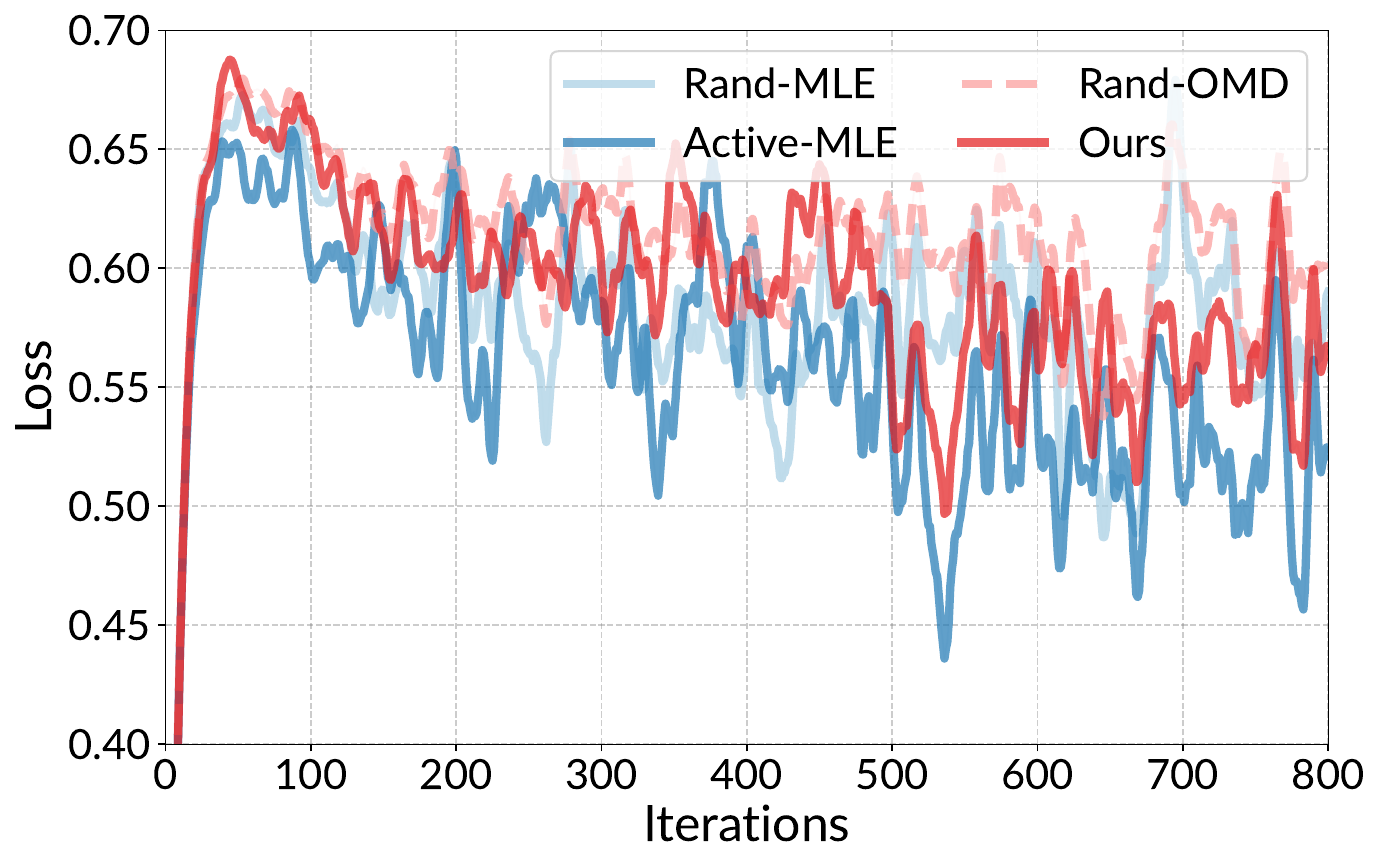}
        \label{fig:active_train_loss_qwen}}
        \hfill
        \subfigure[evaluation accuracy]{\includegraphics[width=0.48\columnwidth]{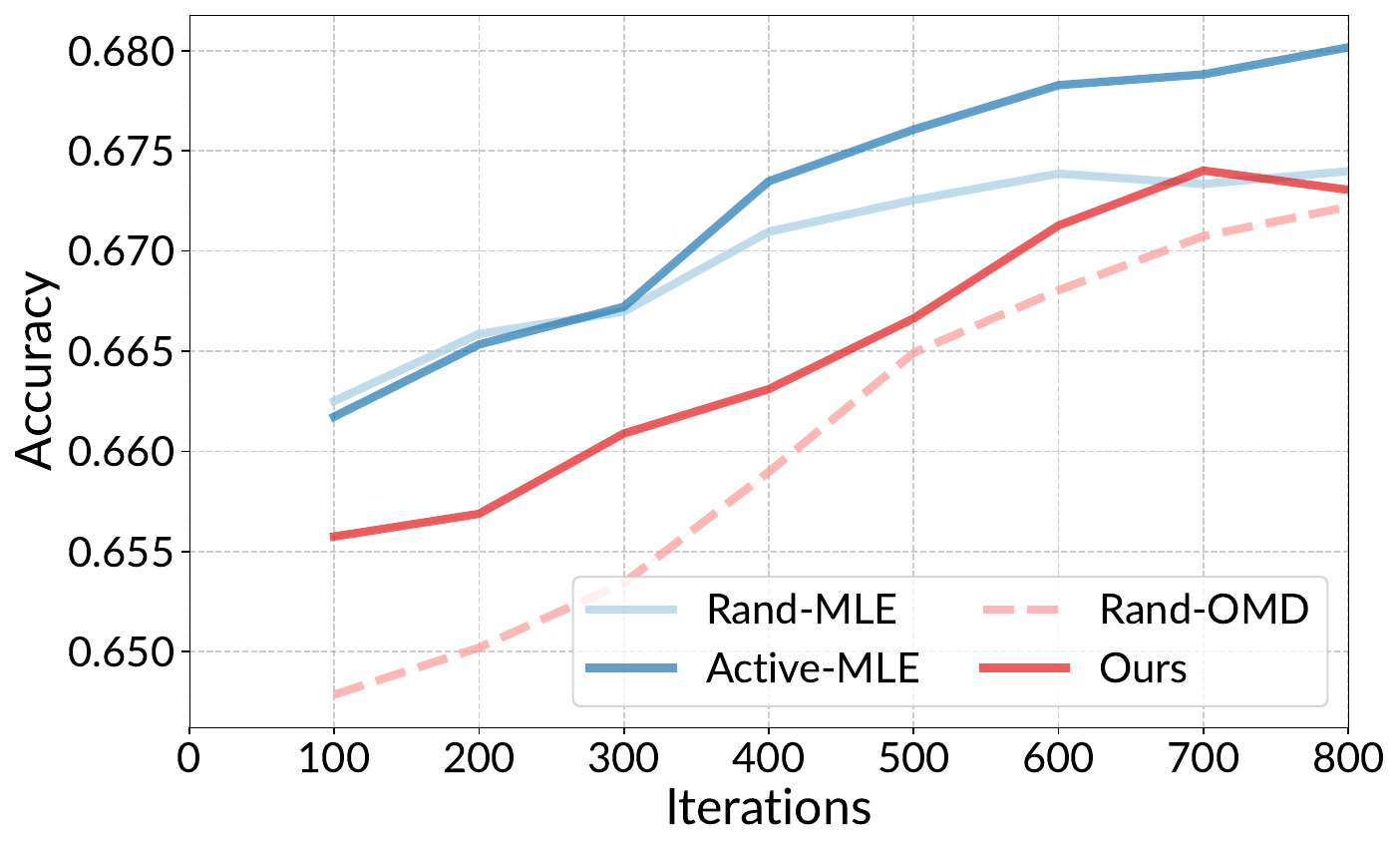}
        \label{fig:active_eval_acc_qwen}}
    \end{minipage}
    \hfill
    \begin{minipage}{0.31\textwidth}
        \centering
        \vspace{2.5mm}
        \subfigure[training time]{
            \resizebox{0.99\columnwidth}{!}{
                \renewcommand{\arraystretch}{1.4}
                \begin{tabular}{cccc}
                \toprule
                Method     & ACC (\%) & Time (s) \\
                \midrule
                Rand-MLE   & 67.36 $\pm$ 0.5    & 4653 $\pm$ 121     \\
                Active-MLE & 67.25 $\pm$ 0.4    & 4701 $\pm$ 103     \\
                Rand-OMD   & 66.80 $\pm$ 0.5    & 1312 $\pm$ 47{\color{white} 0}     \\
                Ours       & 67.11 $\pm$ 0.5    & 1325 $\pm$ 54{\color{white} 0}     \\
                \bottomrule
                \vspace{3mm}
                \end{tabular}
                \label{tab:training_active_results_qwen}}
            } 
    \end{minipage}
    \caption{For \texttt{Qwen2.5-7B-Instruct} with active data collection, we report the comparison of different methods about (a) training loss, (b) evaluation accuracy and (c) evaluation accuracy and training time.}
    \label{fig:active_results_qwen}
    \end{minipage}
    \hfill
    \begin{minipage}{0.295\textwidth}
        \centering
        \vspace{2mm}
        \includegraphics[width=0.8\textwidth]{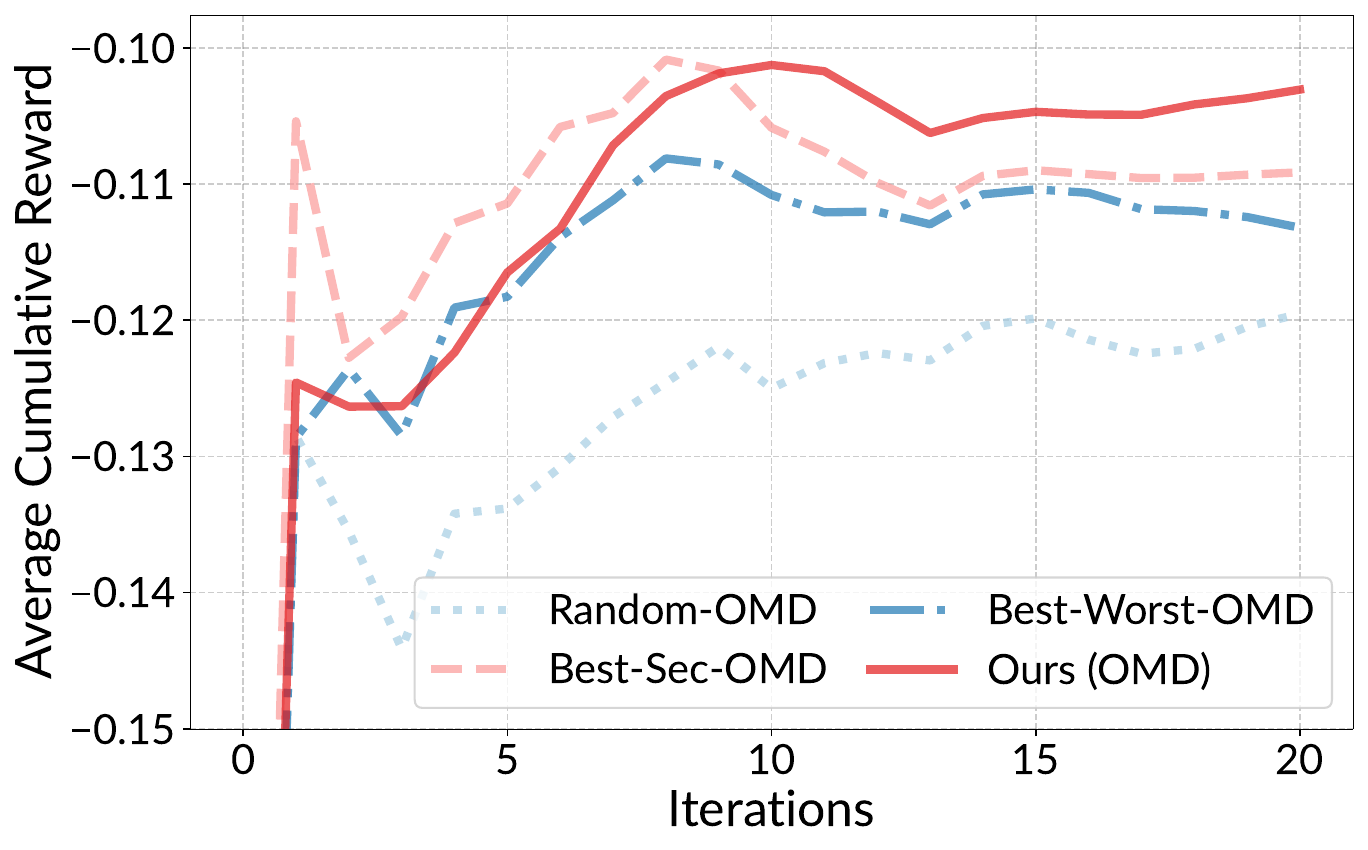}
        \vspace{4mm}
        \caption{Results of deployment-time adaptation for \texttt{Qwen2.5-7B-Instruct}.}
        \label{fig:deployment_qwen}
    \end{minipage}
\end{figure}

\begin{figure*}[!t]
    \begin{minipage}[t]{0.99\textwidth}
        \centering
        \subfigure[training loss]{
            {\includegraphics[width=0.23\textwidth]{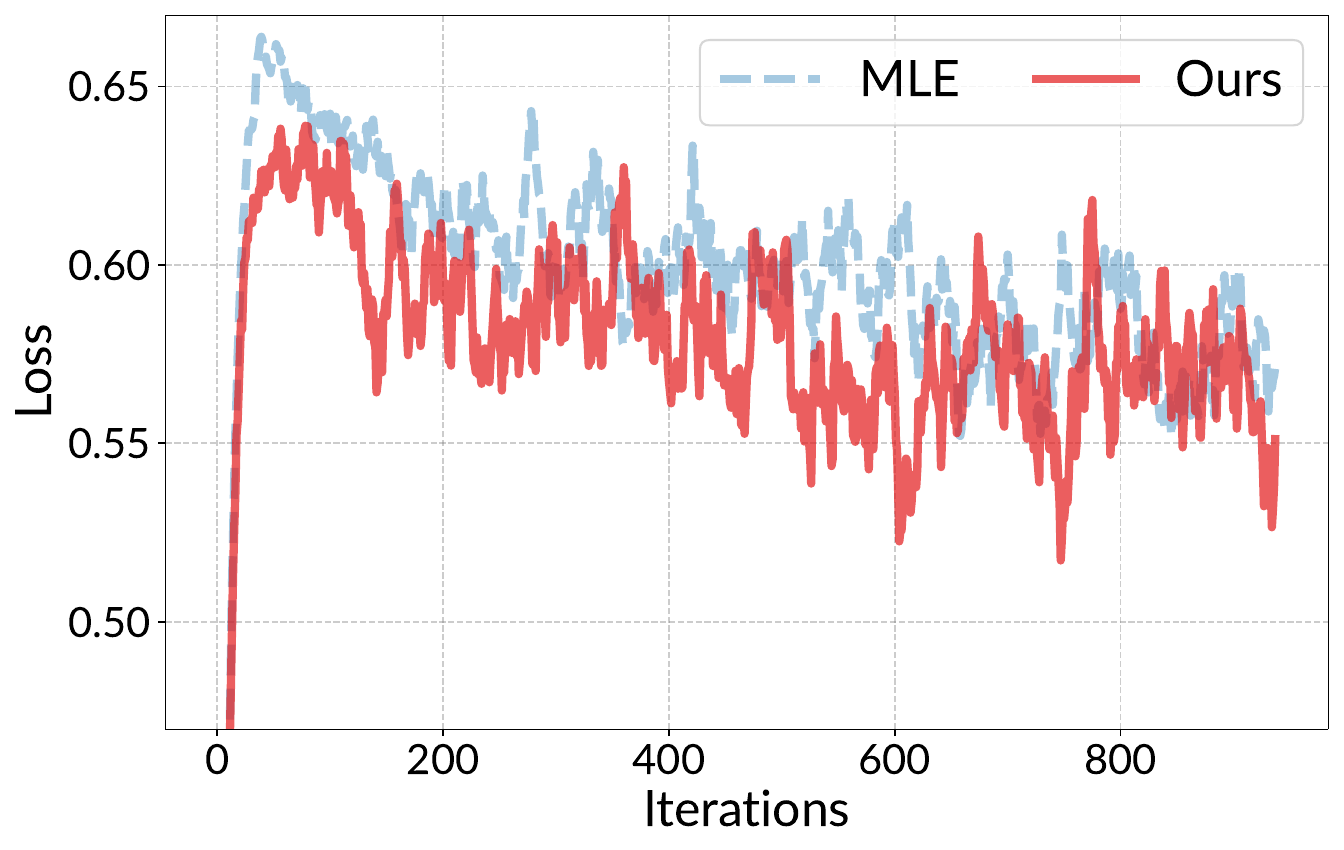}}
        }
        \subfigure[training accuracy]{
            {\includegraphics[width=0.23\textwidth]{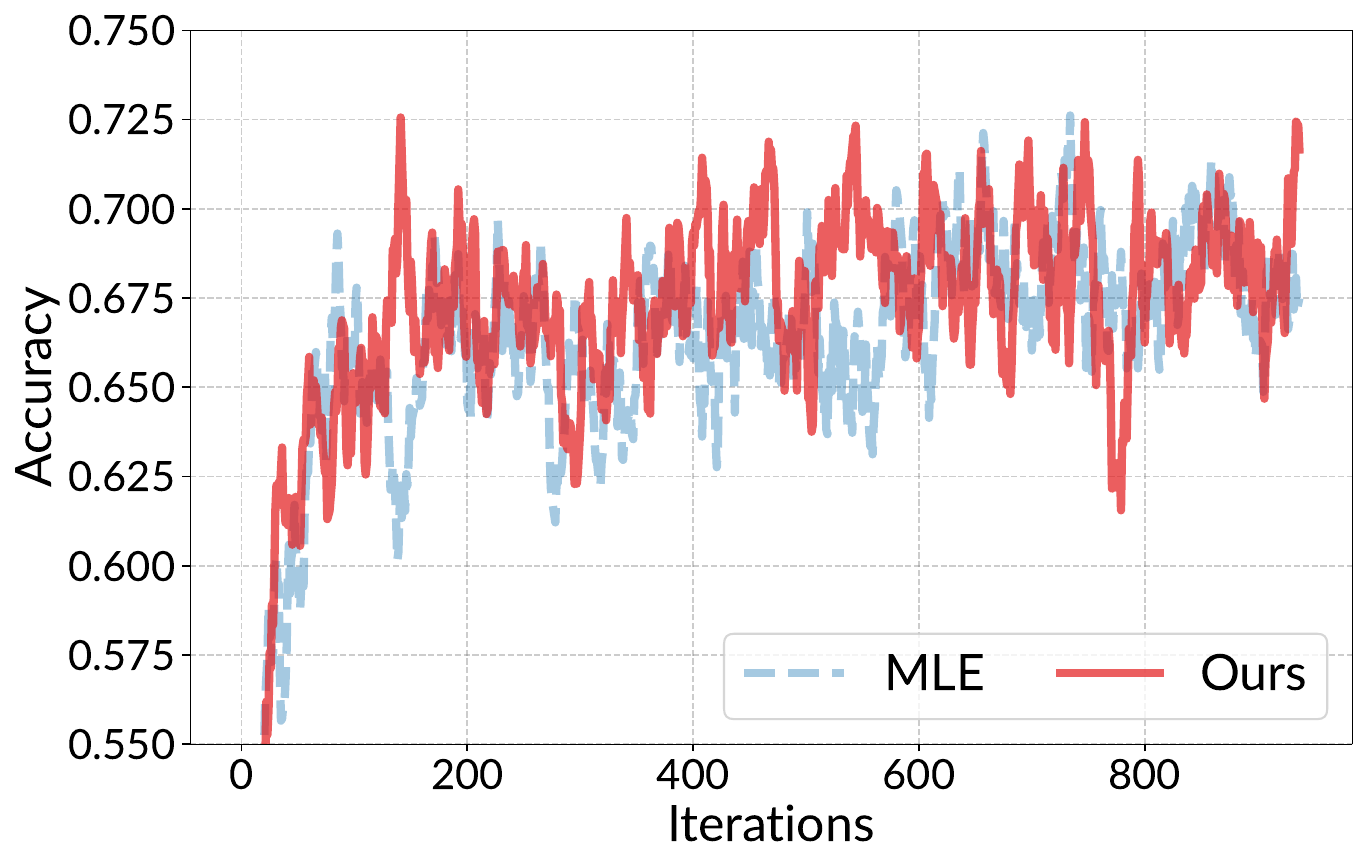}}
        }
        \subfigure[evaluation loss]{
            {\includegraphics[width=0.23\textwidth]{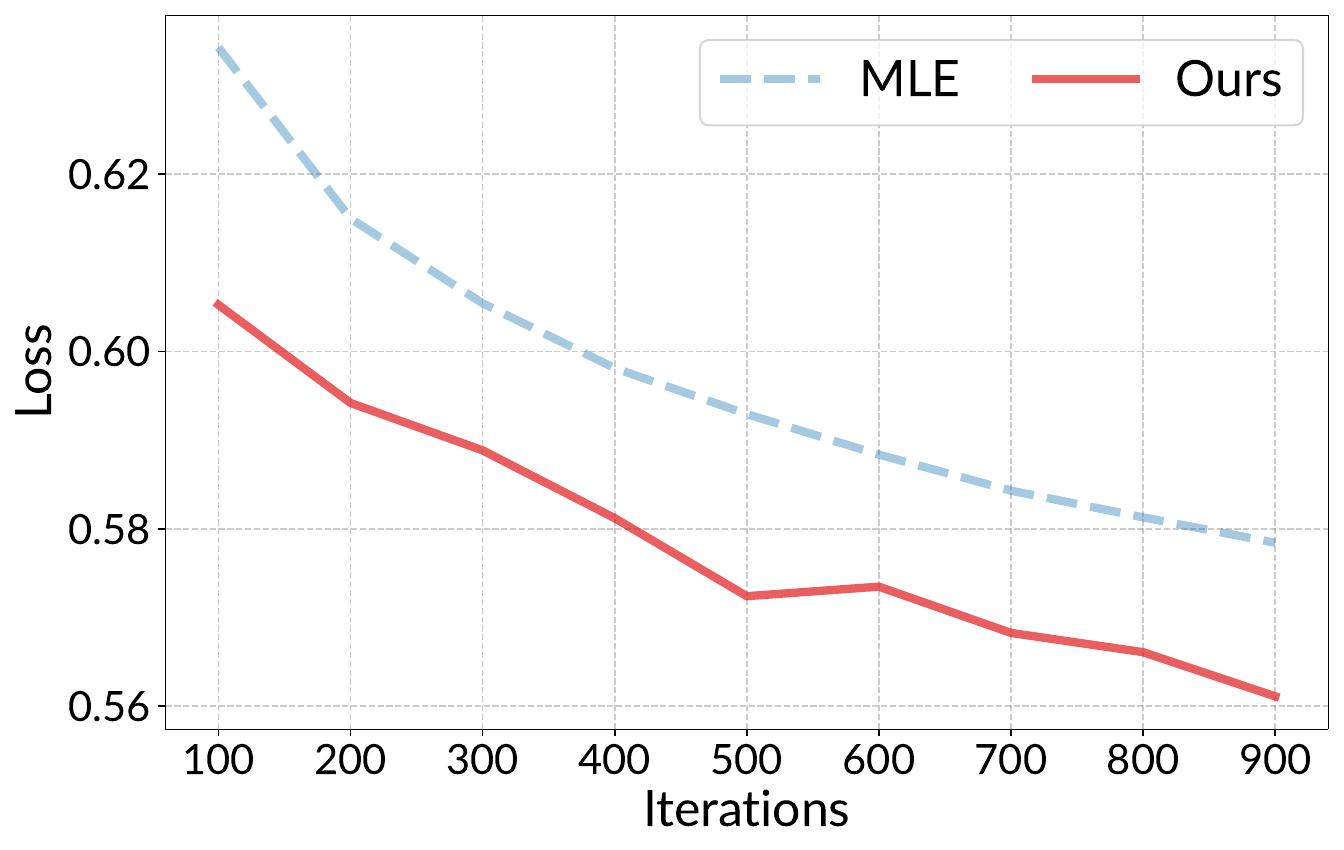}}
        }
        \subfigure[evaluation accuracy]{
            {\includegraphics[width=0.23\textwidth]{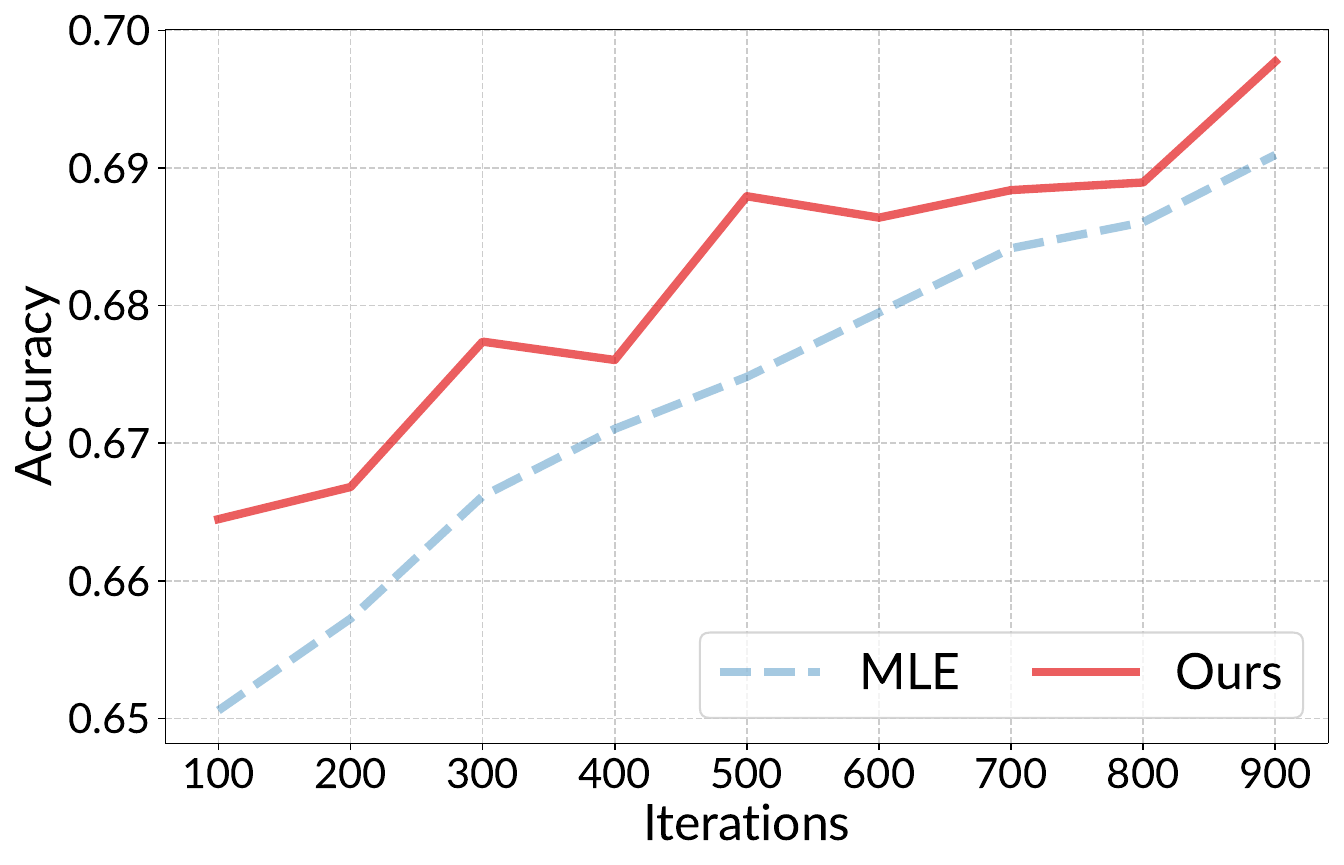}}
        }
    \end{minipage}
    \caption{For online RLHF with \emph{passive data collection} on the \texttt{Llama-3-8B-Instruct} model on the \emph{Mixture2} dataset, we compare our method with MLE. We report average accuracy and loss curve of the reward model.}
    \label{fig:training_passive_mixture}
\end{figure*}

Finally, Figure~\ref{fig:training_passive_mixture} shows results on the \texttt{Llama-3-8B-Instruct} model trained on the \emph{Mixture2} dataset in a passive data collection setup. Similar to earlier observations, our method achieves competitive or superior performance compared to MLE, both in terms of training and evaluation loss/accuracy, demonstrating its generality across different model and dataset combinations.

\section{Broader Impact}

Our work advances the efficiency of RLHF, a central technique in aligning large language models with human values and preferences. By proposing a new one-pass reward modeling method that eliminates the need to store historical data and re-train from scratch, we reduce the computational and environmental costs commonly associated with online RLHF pipelines. This could enable the development and deployment of aligned language models by institutions with limited resources.

However, the broader deployment of RLHF, particularly in an online and adaptive setting, raises important ethical and societal considerations. On the positive side, it can enable more responsive and value-aligned AI systems, with potential applications in education, healthcare, and accessibility. Yet, the ability to iteratively adapt to user feedback in deployment may also increase the risk of reinforcing harmful biases or being gamed by adversarial users, especially in high-stakes or open-ended domains.

\end{document}